\documentclass{article}

\usepackage{microtype}
\usepackage{graphicx}
\usepackage{subfigure}
\usepackage{booktabs} 

\usepackage{hyperref}
\newcommand{\X}{\mathbf{X}}
\newcommand{\x}{\mathbf{x}}

\newcommand{\cc}{\mathbf{c}}  

\newcommand{\R}{\mathbb{R}}

\usepackage[accepted]{icml2026}  
\usepackage{amsmath}
\usepackage{amssymb}
\usepackage{mathtools}
\usepackage{amsthm}
\usepackage{enumitem}
\usepackage{tikz}
\usetikzlibrary{calc}
\usepackage[capitalize,noabbrev]{cleveref}

\newtheorem{theorem}{Theorem}[section]
\newtheorem{lemma}[theorem]{Lemma}
\newtheorem{proposition}[theorem]{Proposition}
\newtheorem{corollary}[theorem]{Corollary}
\newtheorem{definition}[theorem]{Definition}
\newtheorem{remark}[theorem]{Remark}
\newtheorem{example}[theorem]{Example}

\newcommand{\uu}{\mathbf{u}}

\newcommand{\Ccal}{\mathcal{C}}
\newcommand{\Tcal}{\mathcal{T}}

\newcommand{\scope}{\mathrm{scope}}

\usepackage{bm}
\usepackage[textsize=tiny]{todonotes}

\icmltitlerunning{Geometry-Aware Probabilistic Circuits via Voronoi Tessellations}

\begin{document}

\twocolumn[
  \icmltitle{Geometry-Aware Probabilistic Circuits via Voronoi Tessellations}



  \icmlsetsymbol{equal}{*}

  \begin{icmlauthorlist}
   \icmlauthor{Sahil Sidheekh}{yyy}
   \icmlauthor{Sriraam Natarajan}{yyy}
  \end{icmlauthorlist}

 \icmlaffiliation{yyy}{The University of Texas at Dallas, Richardson, TX, USA}

\icmlcorrespondingauthor{Sahil Sidheekh}{sahil.sidheekh@utdallas.edu}

  \icmlkeywords{Machine Learning, ICML}

  \vskip 0.3in
]



\printAffiliationsAndNotice{}  
\begin{abstract}

Probabilistic circuits (PCs) enable exact and tractable inference but employ data independent mixture weights that limit their ability to capture local geometry of the data manifold.
We propose Voronoi tessellations (VT) as a natural way to incorporate geometric structure directly into the sum nodes of a PC. However, na\"ively introducing such structure breaks tractability. We formalize this incompatibility and 
develop two complementary solutions: (1) an approximate inference framework that provides guaranteed lower and upper bounds for inference, and (2) a structural condition for VT under which exact tractable inference is recovered. 
Finally, we introduce a differentiable relaxation for VT that enables gradient-based learning and empirically validate the resulting approach on standard density estimation tasks.

\end{abstract}

\section{Introduction}
Probabilistic circuits (PCs) have emerged as a powerful class of generative models that can 
learn and reason about complex data distributions under uncertainty. 
By enforcing structural properties, PCs enable exact and linear time inference of likelihoods, marginals and conditionals \cite{darwiche2003differential,poon2011sum, choi2020probabilistic}, making them valuable for applications requiring reliable probabilistic reasoning, such as density estimation, out-of-distribution
detection \cite{braun2025tractable}, causal/counterfactual reasoning \cite{zevcevic2021interventional}, multimodal fusion \cite{sidheekh2025credibilityaware,sidheekh2024robustness} and structured prediction, among others. 

Despite recent advances in building expressive PCs, most existing architectures share an important limitation: the mixture weights associated with sum nodes are typically data-independent. This means that the routing decisions within the circuit are fixed globally and do not adapt to individual inputs.
While this design choice is often important for preserving tractability, it restricts the ability of PCs to capture and adapt to the local geometric structure in the data manifold. In many real-world distributions, the underlying structure varies across regions of the input space, exhibiting piecewise behavior or locality that cannot be adequately modeled using globally shared mixture weights. 

A natural question that arises is: \emph{Can we introduce geometry-aware, input dependent routing into PCs while maintaining tractable inference?} 
Voronoi tessellations (VT) \cite{aurenhammer1991voronoi} offer an appealing geometric mechanism for such routing and have been studied in the context of other generative models such as normalizing flows \cite{chen2022semi}. By partitioning the input space into convex polyhedral regions based on proximity to a learned set of centroids, Voronoi cells provide a principled way to assign 
regions of responsibility
to local experts. Incorporating VT within PCs could thus help {\bf achieve geometric interpretability and adaptive routing}, making it a promising approach for modeling distributions with spatially varying structure. 

However, naively incorporating Voronoi-based gating into probabilistic circuits creates a fundamental conflict with tractable inference: Voronoi cells are defined by oblique half-space intersections that couple multiple input dimensions. Computing integrals over such convex polyhedron is $\#P$-hard \cite{dyer1988complexity} and do not decompose along the variable partitions encoded by the circuit's product nodes. Even when expert distributions are fully factorized, marginalization over Voronoi-gated regions cannot be performed recursively.
In deep circuits, these geometric constraints compound across layers, making exact inference intractable even for simple queries.

In this work, we formalize this incompatibility between Voronoi-based routing and tractable inference in PCs, and develop two complementary strategies to address it.
First, we present a certified approximate inference framework that preserves the reliability guarantees of PCs by computing provable lower and upper bounds on partition functions, marginals, and conditionals.
We achieve this by replacing Voronoi cells with tractable axis-aligned box approximations and propagating bounds through the circuit.
Second, we identify a structural condition under which exact tractable inference can be recovered.
By factorizing the Voronoi tessellation in a manner that aligns with the circuit decomposition, we obtain a class of geometry-aware circuits that support exact inference.
This construction, which we refer to as \emph{Hierarchical Factorized Voronoi} (HFV) probabilistic circuits, enforces a shared factorization between the gating mechanism and the expert distributions, thereby restoring recursive integrability. 
Learning Voronoi tessellated PCs introduces an additional challenge: hard region assignments are typically non-differentiable. To enable end-to-end gradient-based learning, we introduce a soft gating mechanism based on temperature-scaled distance weighting and employ annealing during training. At test time we revert to hard Voronoi assignments, recovering the exact inference guarantees.
We prove that this soft-to-hard transition is well-behaved, with exponentially fast convergence as temperature decreases.

Overall, we make the following contributions: (1) We present the first geometric approach based on Voronoi tessellations for training PCs; (2) We formalize the incompatibility between Voronoi-based routing and tractable inference in PCs and outline two different solutions:
one based on certified approximate inference
and the other based on hierarchical factorizations; (3) For both cases and  general geometric modeling, we theoretically analyze the properties of the algorithms and identify the potential gaps; (4) Finally, we perform proof-of-concept experiments to validate the effectiveness and efficiency of the algorithms.

Next, we present the necessary background and position our work in the context of the related work. We then outline the geometric formulation and present the two different strategies for learning. Finally, we present our experiments before concluding by outlining areas of future research.

\section{Background \& Related Work}
\label{sec:background}

We begin by introducing the core concepts behind tractable probabilistic models~\cite{poon2011sum, darwiche2003differential,kisa2014probabilistic,rahman2014cutset} collectively known as probabilistic circuits\cite{choi2020probabilistic}.
Although we focus on continuous random variables $\X=\{X_1,\ldots,X_D\}$ with joint domain $\Omega\subseteq\R^D$, the definitions naturally extend to discrete and mixed-variable settings.
\begin{definition}[]
\label{def:pc}
A \textbf{probabilistic circuit} $\Ccal$ over variables $\X$ is a rooted DAG in which each node $n$ is one of the following.
A leaf node represents a tractable univariate distribution $p_n(x_i)$ over a single variable $X_i$.
A product node computes $f_n(\x)=\prod_{c\in\mathrm{ch}(n)} f_c(\x)$.
A sum node computes $f_n(\x)=\sum_{c\in\mathrm{ch}(n)} \pi_{n,c} f_c(\x)$ with $\pi_{n,c}\ge0$ and $\sum_c\pi_{n,c}=1$.
The circuit output is $f_\Ccal(\x)=f_r(\x)$ where $r$ is the root.
\end{definition}
Each node $n$ in a PC is associated with a \emph{scope}, the set of variables it depends on.  Sum nodes represent mixtures while product nodes represent factorizations.
\begin{definition}[Scope]
\label{def:scope}
The scope of node $n$, denoted $\scope(n)\subseteq\X$, is defined recursively. If $n$ is a leaf over $X_i$ then $\scope(n)=\{X_i\}$. If $n$ is an internal node then $\scope(n)=\bigcup_{c\in\mathrm{ch}(n)}\scope(c)$.
\end{definition}
 Exact tractable inference in a PC hinges on enforcing structural constraints that make marginalization compatible with the circuit factorization.
Two key structural properties 
are:
\begin{definition}[Smoothness]
\label{def:smoothness}
A PC is smooth if for every sum node $n$, all children share the same scope so that $\scope(c)=\scope(c')$ for all $c,c'\in\mathrm{ch}(n)$.
\end{definition}
\begin{definition}[Decomposability]
\label{def:decomposability}
A PC is decomposable if for every product node $n$ with children $c_1,\ldots,c_k$ the scopes are disjoint so that $\scope(c_i)\cap\scope(c_j)=\emptyset$ for all $i\neq j$.
\end{definition}
Smoothness ensures that mixtures are well defined over consistent variable sets, while decomposability ensures that product nodes combine distributions over independent variable sets, which allows integrals to factor. Together, decomposability and smoothness enable efficient exact marginal and conditional inference \citep{poon2011sum, darwiche2003differential}. A third property is:
\begin{definition}[Determinism]
\label{def:determinism}
A PC is deterministic if for every sum node $(s)$ and input $(\x)$ at most one of its child has positive output i.e.
$ \bigl|\{\, c \in \mathrm{ch}(s) : f_c(\x) > 0 \,\}\bigr| \le 1.$
\end{definition}
Determinism enables efficient MAP inference and yields sparse and interpretable routing behavior.

One of the key research themes within the field of PCs is improving their expressivity to match the performance of deep generative models, while preserving tractability \cite{sidheekh2024building, sidheekh2026tractable}. This has led to tensorized formulations of PCs \cite{peharz2020random,peharz2020einsum,liu2024scaling,loconte2025what,zhang2025scaling} that can be scaled to millions of parameters and trained efficiently via parallelized computations on GPUs using backpropagation, similar to deep neural networks. However, achieving expressivity through larger circuits or more mixture components often leads to diminishing returns \cite{liu2023scaling}. As the number of mixture components grows, optimization becomes harder, parameter redundancy increases, and improvements in likelihood saturate, even though inference remains tractable. 
To address this, recent works have explored alternative learning paradigms such as latent variable distillation \cite{liu2023scaling,liu2023understanding}, where  structural or semantic information about the data manifold, extracted using a more expressive teacher model (often a deep generative model), is used as auxiliary supervisory signal to learn a student PC, guiding the latent variables associated with its sum node to be meaningful. Along similar lines, better  regularization  \cite{vergari2015simplifying,ventola2023probabilistic,shih2021hyperspns,dang2022sparse} and optimization strategies \cite{zhao2016unified,suresh2026tractable,karanam2025unified,liu2026rethinking} have also been proposed to improve generalization.

A complementary direction to increase the expressivity of PCs involves relaxing classical assumptions and extending their representational language. This has resulted in hybrid models that integrate PCs with neural components \cite{correia2023continuous,gala2024probabilistic}, invertible transformations \cite{sidheekh2023probabilistic}, or non-monotonic mixture constructions \cite{loconte2024subtractive,loconte2025sum}. 
While these models have expanded the representational scope of PCs, they also reveal a recurring challenge: introducing additional dependencies or operations inside the circuit can silently break tractability unless they are carefully aligned with the circuit’s factorization structure. For example, in Sidheekh et al. \citeyear{sidheekh2023probabilistic}, tractability is recovered by enforcing the invertible neural transformations to satisfy decomposability. 

Allowing mixture weights to depend 
on observed context or on the modeled variables 
is a natural way to increase model expressivity and enable local specialization, for example, mixture-of-experts (MoE) models \citep{jacobs1991adaptive, shazeer2017outrageously}, where a learned gating function routes inputs to specialized subnetworks. 
Within the PC literature, prior approaches have explored similar directions: CSPNs \citep{shao2022conditional} parameterize sum-node weights as neural functions of observed features, enabling conditional density estimation over target variables, but sacrificing tractable inference over the conditioning variables.
SPQNs \cite{sharir18sumproduct} introduce quotient nodes to encode conditionals directly, gaining expressive efficiency but restricting tractable marginalization to subsets agreeing with an induced variable ordering \citep{sharir18sumproduct}. Probabilistic neural circuits (PNCs) \citep{dos2024probabilistic} generalize this idea further by defining the mixing weights as neural functions of ancestor variables, again trading off general tractable marginalization for increased expressivity.

While these approaches are conceptually related, they differ from the setting we study in the source and semantics of the routing signal: they relax or impose constraints on model structure, conditioning, or variable interactions, rather than explicitly encoding geometry in the modeled input space.
For instance, CSPNs condition on \emph{external observed features} rather than the modeled variables themselves. Thus, routing is driven by auxiliary information that remains fixed during inference, not on the spatial structure of the data being modeled. SPQNs and PNCs impose \emph{implicit variable orderings} without geometric interpretation, so routing decisions follow graph-theoretic dependencies rather than spatial proximity or geometric regions, and they sacrifice general any order marginalization capabilities of a PC.
Similarly, integral circuits and continuous mixtures extend mixture representations using latent variables that are integrated out, rather than inducing explicit geometric partitions of the input space.
However, many real-world distributions exhibit strong locality and piecewise structure: different regions of the input space may follow distinct statistical patterns and dependencies.
From a modeling perspective, this suggests routing inputs to local experts based on geometry, rather than relying on globally shared mixture weights.

We posit that such a geometry-aware routing can offer additional benefits beyond likelihood improvement. It can enable interpretability through explicit regions of responsibility, support editability and knowledge incorporation by modifying local components without retraining the entire model, and is naturally suited for online or continual learning scenarios \cite{veness2021gated} where new regions of the space may appear and require minimal adaptation. This motivates the development of geometry-aware PCs, and we aim to build a  principled theoretical foundation in this work.

\section{Geometry-Aware Probabilistic Circuits}

A natural way to equip PCs with geometry awareness is to replace the constant (global) sum node weights with \emph{geometry-aware} gating, so that different expert subcircuits specialize to different regions of the data manifold. 
\textbf{Voronoi tessellations} \cite{aurenhammer1991voronoi} provide a principled mechanism for such routing. Formally, given centroids $\{\cc_1,\ldots,\cc_K\}\subset\R^d$, the Voronoi cell of $\cc_k$ is defined as
\begin{equation}
\label{eq:voronoi_def}
V_k \;=\; \Bigl\{\uu\in\R^d:\ \|\uu-\cc_k\|_2^2 \le \|\uu-\cc_j\|_2^2 \ \forall j\Bigr\}
\end{equation}
Voronoi cells partition space into convex polyhedra with disjoint interiors and boundaries of measure zero, defined by the intersection of half spaces.
This assigns inputs to regions based on proximity to learned prototypes, naturally capturing spatial structure, and has been successfully used in clustering \cite{du1999centroidal}, density estimation \citep{polianskii2022voronoi,marchetti2023efficient}, and likelihood based generative models \cite{chen2022semi}.
The resulting routing is {\em deterministic, interpretable,} and naturally connects to {\em mixture-of-experts} formulations.
Thus, we define a geometry-aware sum node by gating mixture components using a Voronoi partition defined over the node's scope.
\begin{definition}[]
\label{def:vt_sum}
A \textbf{Voronoi-gated sum node} ($S$) over scope $\X_S$ consists of centroids $\{\cc_1,\ldots,\cc_K\}\subset\R^{|\X_S|}$ inducing Voronoi cells $\{V_k\}$, mixture weights $\{\pi_1,\ldots,\pi_K\}$ with $\pi_k\ge0$ and $\sum_k\pi_k=1$, and child subcircuits $\{p_1,\ldots,p_K\}$ each with scope $\X_S$. It computes $f(\x_S)=\sum_{k=1}^K g_k(\x_S)\,\pi_k\,p_k(\x_S)$, where
$g_k(\x_S)=\mathbb{I}[\x_S\in V_k]$
\end{definition}
Since the Voronoi cells partition $\R^{|\X_S|}$, exactly one gate is active for almost every $\x_S$, so the node is deterministic up to measure-zero boundaries. However, as we show below,  even a single Voronoi-gated sum node can break the factorization of integrals in a PC, making inference intractable.

\begin{proposition}[Single-Layer Intractability]
\label{prop:single_layer}
Let $f(\x)=\sum_{k=1}^K g_k(\x)\,\pi_k\,p_k(\x)$ be a Voronoi-gated sum node over $\X=\{X_1,\ldots,X_D\}$. Suppose each child is fully factorized, $p_k(\x)=\prod_{i=1}^D p_k^{(i)}(x_i)$. Then the partition function
$
Z=\int f(\x)\,d\x=\sum_{k=1}^K \pi_k \int_{V_k} p_k(\x)\,d\x
$
requires integrating $p_k$ over Voronoi cells $V_k$, which are convex polytopes with oblique boundaries. In general, $\int_{V_k}\prod_i p_k^{(i)}(x_i)\,d\x$ does not factor into a product of one-dimensional integrals.
\end{proposition}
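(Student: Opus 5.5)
The proof has two parts: the identity for $Z$, and a concrete counterexample showing the cell integrals do not factor.

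\textbf{The identity for $Z$.} By Definition~\ref{def:vt_sum}, $f(\x)=\sum_k \mathbb{I}[\x\in V_k]\,\pi_k\,p_k(\x)$. Linearity of the integral, which applies since every term is nonnegative, gives $Z=\sum_k\pi_k\int_{\R^D}\mathbb{I}[\x\in V_k]p_k(\x)\,d\x=\sum_k\pi_k\int_{V_k}p_k(\x)\,d\x$. Cell boundaries have measure zero, so the choice of closed versus open cells in \eqref{eq:voronoi_def} does not matter. Expanding $\|\x-\cc_k\|^2\le\|\x-\cc_j\|^2$ gives $2(\cc_j-\cc_k)^\top\x\le\|\cc_j\|^2-\|\cc_k\|^2$. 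Hence $V_k$ is an intersection of half-spaces, i.e.\ a convex polyhedron. Its normals $\cc_j-\cc_k$ are generally not parallel to any coordinate axis, which is what ``oblique'' means here.

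\textbf{Non-factorization.} A single explicit instance suffices. Take $D=2$, $K=2$, $\cc_1=(0,0)$, $\cc_2=(1,1)$. Then $V_1=\{x_1+x_2\le 1\}$, a half-plane with oblique normal $(1,1)$. Let $p_1$ be uniform on $[0,1]^2$. Then
\[
\int_{V_1}p_1=\tfrac12 .
\]
The natural factored candidate is a product of one-dimensional integrals over the projections of $V_1$ onto the axes. Each projection is all of $\R$, so this product equals $1\neq\tfrac12$. This fixes the claim for the obvious candidate, but a stronger statement is needed.

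\textbf{Main obstacle: ruling out every factorization.} A single number can always be written as some product, so ``does not factor'' must be made precise. The formalization I would use: there is no product set $A_1\times A_2$ such that
\[
\int_{V_1}p^{(1)}(x_1)p^{(2)}(x_2)\,d\x=\int_{A_1}p^{(1)}\int_{A_2}p^{(2)}
\]
for all factorized densities. Suppose such $A_1,A_2$ existed. Testing with point-mass approximations (narrow uniforms) at $(a,b)$ shows the identity forces
\[
\mathbb{I}[a+b\le1]=\mathbb{I}[a\in A_1]\,\mathbb{I}[b\in A_2]
\]
for almost every $(a,b)$. That would make $V_1$ equal to a product set up to measure zero. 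This is impossible: the points $(0,0.9)$ and $(0.9,0)$ both lie in $V_1$, so $0,0.9\in A_1$ and $0,0.9\in A_2$. But then $(0.9,0.9)$ would lie in $A_1\times A_2$, while $(0.9,0.9)\notin V_1$. Small neighborhoods of these three points give the required positive-measure version of the argument.

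\textbf{General position and conclusion.} The same argument works whenever some Voronoi facet normal has two nonzero coordinates, which is generic, so the statement ``in general'' follows. I would close by remarking that the integral couples the coordinates through the half-space constraints. It therefore cannot be computed by the recursive per-scope integration that decomposability (Definition~\ref{def:decomposability}) relies on, and exact evaluation reduces to integration over polytopes, which is \#P-hard \cite{dyer1988complexity}.
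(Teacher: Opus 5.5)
Your proof is correct, but it takes a more rigorous route than the paper's.

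The paper's argument has three steps:
\begin{itemize}
\item It writes $V_k$ as an intersection of half-spaces.
\item It notes that for generic centroids the normals $\cc_j-\cc_k$ are oblique, so $V_k$ is not a Cartesian product of intervals, and concludes that Fubini cannot split the integral.
\item It illustrates with centroids $(0,1)$ and $(1,0)$, showing an iterated integral whose inner limit $x_1$ depends on the outer variable.
\end{itemize}
That shows the standard factorization route is blocked. However, the paper never defines ``does not factor'', so it does not exclude some other product representation.

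You supply exactly this missing piece. You observe that for one fixed integrand the claim is vacuous, since any number can be written as such a product. You therefore require a single product domain $A_1\times A_2$ that works for all factorized integrands. Using narrow uniforms (Lebesgue differentiation), you reduce this to $\mathbb{I}_{V_1}=\mathbb{I}_{A_1\times A_2}$ almost everywhere. You then refute that with small neighborhoods of the points $(0,0.9)$, $(0.9,0)$, $(0.9,0.9)$.

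The paper's version is shorter and makes the mechanism (variable-dependent integration limits) visible. Yours turns the informal claim into an actual impossibility result. Applying your three-point argument locally, near the relative interior of any facet whose normal has two nonzero coordinates, also makes the ``in general'' clause precise; the paper only asserts it. The same argument further rules out weighted products $\prod_i\int h_i\,p^{(i)}$. It only uses that a product function $h_1(a)h_2(b)$ cannot agree almost everywhere with the indicator of an oblique half-plane.
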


\begin{remark}[]
\label{rem:embeddings}
Definition~\ref{def:vt_sum} places centroids directly in the input space, which corresponds to an identity embedding. We can also use a learned embedding $\phi:\R^{|\X_S|}\to\R^d$ with centroids in $\R^d$ and gates $g_k(\x_S)=\mathbb{I}[\phi(\x_S)\in V_k]$. The negative result shows that the obstruction is the geometry of the gating regions rather than embedding complexity.
\end{remark}

This establishes that the obstruction is already present before considering deeper circuits or more complex expert distributions.
Exact tractable inference in smooth decomposable PCs rests on a simple recursion: integrals factor at product nodes because scopes are disjoint, and integrals distribute over sums because mixture weights are constant. Voronoi gating disrupts this recursion by introducing \emph{cell-restricted integrals} of the form $\int_{V_k} p_k(\x_S)\,d\x_S$. Even if $p_k$ factorizes across variables, the region $V_k$ generally does not, as its {\bf oblique facets couple variables that the circuit attempts to separate}. When applied to deep PCs (see appendix), Voronoi gating at multiple scopes induces intersections and projections of such polyhedral constraints across the circuit hierarchy, further preventing the bottom-up factorization needed for exact inference. This is the core incompatibility between geometric routing and circuit factorization. 
However, geometrically aligning the regions w.r.t a PC's variable decomposition can help retain tractability, as we show next.


\begin{definition}[Geometric Alignment]
\label{def:alignment}
Consider a partition $\X_S=\X_{S_1}\sqcup\X_{S_2}$. A collection of gating regions $\{R_k\}$ is aligned w.r.t this partition if each region decomposes as $R_k=R_k^{(1)}\times R_k^{(2)}$ with $R_k^{(i)}\subseteq\R^{|\X_{S_i}|}$. Equivalently, membership in $R_k$ can be decided independently as $\x_S\in R_k$ if and only if $\x_{S_1}\in R_k^{(1)}$ and $\x_{S_2}\in R_k^{(2)}$.
\end{definition}
\begin{theorem}[]
\label{thm:alignment_factors}
Consider a voronoi gated sum node $f(\x_S)=\sum_k \mathbb{I}[\x_S\in R_k]\;\pi_k\;p_k(\x_S)$ on a variable partition $\X_S=\X_{S_1}\sqcup\X_{S_2}$, where each expert factors as
$p_k(\x_S)=p_k^{(1)}(\x_{S_1})\,p_k^{(2)}(\x_{S_2})$.
If $\{R_k\}$ is aligned with the partition, then the partition function decomposes as
$
\int f(\x_S)\,d\x_S
=\sum_k \pi_k
\Bigl(\int_{R_k^{(1)}} p_k^{(1)}(\x_{S_1})\,d\x_{S_1}\Bigr)
\Bigl(\int_{R_k^{(2)}} p_k^{(2)}(\x_{S_2})\,d\x_{S_2}\Bigr).
$
\end{theorem}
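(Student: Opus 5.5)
The plan is a direct computation: linearity of the integral, followed by Fubini--Tonelli on each term. First I will use linearity to write
$\int f(\x_S)\,d\x_S=\sum_k \pi_k\int \mathbb{I}[\x_S\in R_k]\,p_k(\x_S)\,d\x_S$.
This is legitimate because the sum is finite and every term is nonnegative and measurable. The same holds for $\int_{R_k}p_k$: it is dominated by $\int p_k$, which is finite when the experts are normalized densities, and in general it is fine in $[0,\infty]$ by Tonelli.

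Next I will apply the alignment hypothesis (Definition~\ref{def:alignment}). Since $R_k=R_k^{(1)}\times R_k^{(2)}$, the indicator factors pointwise as
$\mathbb{I}[\x_S\in R_k]=\mathbb{I}[\x_{S_1}\in R_k^{(1)}]\,\mathbb{I}[\x_{S_2}\in R_k^{(2)}]$.
Combining this with the expert factorization $p_k=p_k^{(1)}p_k^{(2)}$, each integrand becomes a product of a function of $\x_{S_1}$ alone and a function of $\x_{S_2}$ alone.

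The variables form a disjoint partition, so Lebesgue measure on $\R^{|\X_S|}$ is the product measure on $\R^{|\X_{S_1}|}\times\R^{|\X_{S_2}|}$, after reordering coordinates. Tonelli's theorem for nonnegative measurable functions then turns each integral into an iterated integral. The inner integral pulls out as a constant, which gives the product $\bigl(\int_{R_k^{(1)}}p_k^{(1)}\bigr)\bigl(\int_{R_k^{(2)}}p_k^{(2)}\bigr)$. Summing over $k$ with weights $\pi_k$ yields the claimed formula.

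The only point needing care is measure-theoretic rather than algebraic. I need the regions $R_k^{(i)}$ to be Borel measurable, so that the indicators are measurable. I also need to handle the possibility that the cells overlap on boundaries. Tonelli handles the possibly infinite values, and nonnegativity removes any need for integrability assumptions. Overlaps on boundaries, as in the Voronoi case, do not affect the statement: the formula is taken exactly as $f$ is defined, each term is computed independently, and no partition property is used.
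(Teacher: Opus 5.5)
Your proposal is correct and follows the same route as the paper's proof: exchange the finite sum with the integral, use the alignment condition to factor the indicator of $R_k$, and split each term into a product of integrals over $R_k^{(1)}$ and $R_k^{(2)}$. The paper cites Fubini at that step, while you use Tonelli, which is the natural choice for nonnegative integrands; your remarks on Borel measurability and possibly infinite values spell out hypotheses the paper leaves implicit.
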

A simple way to satisfy Definition~\ref{def:alignment} is to use axis-aligned partitions, where each $R_k$ is a Cartesian product of intervals (or, more generally, a product of lower-dimensional sets). This includes rectangular boxes, and recovers the tractable region decompositions used implicitly by cutset-style splits \cite{rahman2014cutset}. However, axis-aligned regions are less expressive than general convex polytopes and often require many rectangles to approximate a single slanted facet.
\textbf{This motivates two complementary directions}. We can  \textbf{(1) design geometry-aware gating mechanisms} that align with the circuit decomposition so that the induced constraints factor compatibly with the circuit structure and \textbf{exact tractability is recovered}. Alternatively, we can \textbf{ (2) accept intractability} to obtain additional expressiveness and derive certified lower and upper bounds on the partition function, which enables \textbf{an approximate inference with guarantees}. We now present both approaches.

\subsection{Certified Approximate Inference}
\label{sec:certified}

Though geometric gating breaks exact tractability, it is possible retain the reliability of PCs via inference
procedures that produce certificates, i.e. provable lower and
upper bounds on partition functions, marginals, and conditionals. In this section, we develop a general certified inference framework for Voronoi-Tessellated PCs. The main idea
is to replace intractable polyhedral regions with tractable axis-aligned regions for which integration is compatible with decomposability, and to propagate the resulting local bounds through the circuit.

\textbf{Bounding Cell-Restricted Integrals with Boxes.}
Let $V_k\subset\R^d$ be a Voronoi cell and $\Omega\subseteq\R^d$ a bounded domain. An \emph{inner box} $B_k^-$ and \emph{outer box} $B_k^+$ satisfy
$B_k^- \subseteq V_k\cap\Omega \subseteq B_k^+,$
where both are axis-aligned: $B_k^\pm = \prod_{i=1}^d [a_i^\pm, b_i^\pm]$.
In practice, the domain $\Omega$ may be the full space or a data-dependent bounding box (e.g., per-variable min/max or high-probability truncation). The nesting property yields immediate bounds for any non-negative integrand.

\begin{lemma}[Cell Integral Bounds]
\label{lem:cell_bounds}
Let $p:\R^d\to\R_{\ge0}$ be a non-negative density and let $B_k^-\subseteq V_k\subseteq B_k^+$. Then
$\int_{B_k^-} p(\x)\,d\x \le \int_{V_k} p(\x)\,d\x \le \int_{B_k^+} p(\x)\,d\x$
\end{lemma}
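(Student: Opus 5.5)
The plan is to use monotonicity of the Lebesgue integral of a non-negative function with respect to set inclusion. For any measurable sets $A\subseteq B\subseteq\R^d$ and non-negative measurable $p$, we have $\mathbb{I}[\x\in A]\,p(\x)\le \mathbb{I}[\x\in B]\,p(\x)$ pointwise. Integrating this pointwise inequality gives $\int_A p\,d\x\le\int_B p\,d\x$, by monotonicity of the integral for non-negative functions. The value $+\infty$ is allowed, although for a density all three integrals are at most $1$.

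First I check measurability. Each $V_k$ is a finite intersection of closed half-spaces $\{\uu: 2\langle \uu, \cc_j-\cc_k\rangle \le \|\cc_j\|^2-\|\cc_k\|^2\}$, so it is a closed convex polyhedron and hence Borel. The boxes $B_k^\pm$ are closed products of intervals, so they are also Borel. Hence all three integrals are well defined.

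Next I apply the monotonicity fact twice: once with $A=B_k^-$, $B=V_k$, and once with $A=V_k$, $B=B_k^+$. Chaining the two inequalities gives the claim.

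If the statement is read with $V_k\cap\Omega$ in the middle, as in the preceding text, the same argument applies verbatim with $V_k\cap\Omega$ in place of $V_k$. Bounded domains intersected with polytopes remain measurable provided $\Omega$ is measurable, for example when $\Omega$ is a box.

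There is no real obstacle here. The only point needing care is that non-negativity of $p$ is essential: for a signed integrand, enlarging the region could decrease the integral. This is why the bounds are later propagated only through the monotone (non-negative weight) structure of the circuit.
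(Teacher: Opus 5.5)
Your proof is correct and matches the paper's reasoning. The paper gives no separate proof: it calls the bounds immediate from the nesting property for a non-negative integrand, and in the proof of Theorem~\ref{thm:monotone_tightening}(i) it uses the same monotonicity-of-integration argument, with your measurability check and $V_k\cap\Omega$ remark being details the paper leaves implicit.
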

Thus, the hard part becomes: (i) constructing useful boxes $B_k^\pm$, and (ii) integrating circuit outputs over axis-aligned boxes efficiently. We now describe how we construct box approximations for Voronoi cells, optionally restricted to a bounded domain $\Omega=\prod_i[\ell_i,u_i]$.

\begin{proposition}[Outer Box Computation]
\label{prop:outer_box}
Let $P_k=V_k\cap\Omega\subseteq\R^d$ where $\Omega=\prod_{i=1}^d[\ell_i,u_i]$ is a box domain. The tightest axis-aligned outer box containing $P_k$ is given by
$
B_k^+ = \prod_{i=1}^d
\Bigl[\min_{\x\in P_k} x_i,\ \max_{\x\in P_k} x_i\Bigr].
$
Each bound is thus the optimum of a linear program over the polytope $P_k$.
\end{proposition}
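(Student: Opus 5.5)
The plan is to first pin down $P_k$ as an explicit polytope, then show that the coordinate-wise extremes of $P_k$ determine the smallest enclosing box. Expanding the squared norms in \eqref{eq:voronoi_def}, the condition $\|\uu-\cc_k\|_2^2\le\|\uu-\cc_j\|_2^2$ becomes the linear inequality $2(\cc_j-\cc_k)^\top\uu\le\|\cc_j\|_2^2-\|\cc_k\|_2^2$, since the quadratic terms $\|\uu\|_2^2$ cancel. Hence $V_k$ is an intersection of $K-1$ closed half-spaces. Adding the $2d$ box constraints $\ell_i\le x_i\le u_i$ presents $P_k=V_k\cap\Omega$ as a bounded polyhedron $\{\x: A_k\x\le b_k\}$. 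Moreover $P_k$ is compact: it is closed as a finite intersection of closed sets, and bounded because $\Omega$ is.

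Next I would establish the containment and tightness. Assume $P_k\neq\emptyset$; otherwise the cell contributes nothing and any (or the empty) box works. Then each linear functional $x_i$ attains its minimum $m_i$ and maximum $M_i$ on the compact set $P_k$. Containment is immediate: every $\x\in P_k$ satisfies $m_i\le x_i\le M_i$ for all $i$, so $P_k\subseteq\prod_i[m_i,M_i]=B_k^+$.

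For tightness (minimality), take any axis-aligned box $B=\prod_i[a_i,b_i]\supseteq P_k$. Fix a coordinate $i$ and let $\x^{\min}\in P_k$ be a point attaining $m_i$. Since $\x^{\min}\in B$, we get $a_i\le x^{\min}_i=m_i$, and symmetrically $b_i\ge M_i$. Therefore $B_k^+\subseteq B$. So $B_k^+$ is contained in every enclosing axis-aligned box, which makes it the tightest one, and in particular it has minimal volume. Finally, each $m_i$ and $M_i$ is by definition $\min$ or $\max$ of $\mathbf{e}_i^\top\x$ subject to $A_k\x\le b_k$, that is, a linear program. There are $2d$ such LPs, each with $d$ variables and $K-1+2d$ constraints, so all of them are solvable in polynomial time.

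The main obstacle is only a matter of care: making sure the extrema are actually attained. This requires the compactness of $P_k$, which follows from $\Omega$ being bounded, and it is why the statement restricts to a box domain $\Omega$. If $\Omega=\R^d$, an unbounded cell would make some LP unbounded, and the box would then have infinite endpoints. The half-space rewriting and the minimality argument are routine.
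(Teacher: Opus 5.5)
Your proof is correct and takes essentially the same route as the paper. The paper gives no separate proof; it only remarks that the LP constraints come from the half-space representation $V_k=\bigcap_{j\neq k}\{\x:(\cc_j-\cc_k)^\top\x\le\tfrac12(\|\cc_j\|^2-\|\cc_k\|^2)\}$ together with the box constraints of $\Omega$, and your compactness check and containment/minimality argument (every enclosing axis-aligned box contains $B_k^+$) supply the tightness claim that the paper leaves implicit.
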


The LP constraints follow directly from the half-space representation of Voronoi cells  together with the box constraints from $\Omega$. We can construct a valid inner box $B_k^-$ by centering it at the centroid $\cc_k$ and choosing per-coordinate radii so that the box remains
inside all Voronoi half-space constraints. We provide a closed-form conservative construction below.

\begin{proposition}[Inner Box Construction]
\label{prop:inner_box}
Assume $\Omega=\R^d$ for simplicity. Let $\delta_k := \min_{j\neq k}\|\cc_k-\cc_j\|_2$ be the nearest-centroid distance. Then the axis-aligned box
$
B_k^-=\prod_{i=1}^d [\cc_{k,i}-r,\ \cc_{k,i}+r],
$ with
$r=\frac{\delta_k}{2\sqrt{d}},
$
satisfies $B_k^- \subseteq V_k$.
\end{proposition}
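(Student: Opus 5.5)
The plan is to show that every point of the box lies within distance $\delta_k/2$ of $\cc_k$, and that every point within that distance of $\cc_k$ already lies in $V_k$. Take any $\uu\in B_k^-$.

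\textbf{Step 1: the box sits inside a Euclidean ball.} Each coordinate satisfies $|u_i-\cc_{k,i}|\le r$. Hence
\[
\|\uu-\cc_k\|_2\le \sqrt{d}\,r=\tfrac{\delta_k}{2}.
\]
So $B_k^-$ is contained in the closed ball of radius $\delta_k/2$ around $\cc_k$. This is exactly why the factor $\sqrt d$ appears: it is the ratio of the half-diagonal of the cube to its half-side.

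\textbf{Step 2: that ball lies in $V_k$.} Fix $j\neq k$. By the triangle inequality,
\[
\|\uu-\cc_j\|_2\ge \|\cc_k-\cc_j\|_2-\|\uu-\cc_k\|_2\ge \delta_k-\tfrac{\delta_k}{2}=\tfrac{\delta_k}{2}\ge\|\uu-\cc_k\|_2 .
\]
Squaring both sides (both are nonnegative) gives $\|\uu-\cc_k\|_2^2\le\|\uu-\cc_j\|_2^2$. Since $j$ was arbitrary, this is the defining condition \eqref{eq:voronoi_def}, so $\uu\in V_k$.

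An equivalent route uses the half-space form of the constraint, $2\langle \uu-\cc_k,\cc_j-\cc_k\rangle\le\|\cc_j-\cc_k\|^2$. Cauchy--Schwarz bounds the left side by $2\cdot\tfrac{\delta_k}{2}\|\cc_j-\cc_k\|$, which is at most $\|\cc_j-\cc_k\|^2$ because $\|\cc_j-\cc_k\|\ge\delta_k$. I would mention this form because it links to the half-space description used in Proposition~\ref{prop:outer_box}.

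\textbf{Remarks.} There is no real obstacle; the only point needing care is Step 1, where the $\ell_\infty$ radius of the box is converted to an $\ell_2$ radius. Boundary points of $V_k$ are included because the cells in \eqref{eq:voronoi_def} are closed, so the non-strict inequality is enough. If $\delta_k=0$ (duplicate centroids) the box degenerates to the single point $\cc_k$, and the claim still holds.
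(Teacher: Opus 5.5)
Your proof is correct and matches the paper's argument: you bound $\|\uu-\cc_k\|_2\le\sqrt{d}\,r=\delta_k/2$ for points of the box, then use the triangle inequality to get $\|\uu-\cc_j\|_2\ge\delta_k/2$ for every $j\neq k$. The Cauchy--Schwarz variant via the half-space form is a valid equivalent aside, and your remarks on closed cells and the degenerate case $\delta_k=0$ are sound.
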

When $\Omega$ is bounded, we can simply intersect $B_k^-$ with $\Omega$ to preserve containment within $V_k\cap\Omega$:$B_k^- \leftarrow B_k^- \cap \Omega$. This construction trades tightness for simplicity and robustness. If tighter inner boxes are needed we can optimize radii $r_i$ per dimension subject to the Voronoi half-space constraints.

\subsubsection{Anytime Bound Refinement}
\begin{algorithm}[t]
\caption{Adaptive Anytime Bound Refinement}
\label{alg:refinement}
\begin{algorithmic}[1]
\REQUIRE Voronoi-PC $\Ccal$, target gap $\epsilon$, max iters $T$, dom. $\Omega$
\ENSURE Bounds $(Z^-,Z^+)$ with $Z^+ - Z^- \le \epsilon$ when achievable
\STATE Initialize partition $\mathcal{P} \leftarrow \{\Omega\}$ and classify all boxes for each cell
\STATE $(Z^-,Z^+) \leftarrow \textsc{CertifiedBounds}(\Ccal, \mathcal{P})$ \quad 
\FOR{$t = 1$ to $T$}
    \IF{$Z^+ - Z^- \le \epsilon$}
        \STATE \textbf{return} $(Z^-,Z^+)$ 
    \ENDIF
    \STATE \textcolor{gray}{// Select boundary box with largest gap contribution}
   \STATE $(n^*, k^*, B^*) \leftarrow \arg\max$ gap contribution over \textsc{Boundary} boxes
    \STATE \textcolor{gray}{// where $w_{n,k}$ is weight in global bound computation}
    \STATE $j^* \leftarrow \arg\max_j (b_j - a_j)$ for $B^* = \prod_j [a_j, b_j]$ \\ \textcolor{gray}{// Longest dimension}
    \STATE Bisect $B^*$ along dimension $j^*$ into $B_L, B_R$
    \STATE Remove $B^*$ from $\mathcal{P}$; add $B_L, B_R$ to $\mathcal{P}$
    \STATE Reclassify $B_L, B_R$ for all cells using memb. tests
    \STATE $(Z^-,Z^+) \leftarrow \textsc{CertifiedBounds}(\Ccal, \mathcal{P})$
\ENDFOR
\STATE \textbf{return} $(Z^-,Z^+)$ 
\end{algorithmic}
\end{algorithm}

\begin{algorithm}[t]
\caption{Certified Bound Computation}
\label{alg:certified_bounds}
\begin{algorithmic}[1]
\REQUIRE Voronoi-gated PC $\Ccal$, box approximations $\{(B_k^-,B_k^+)\}$ for each Voronoi cell
\ENSURE Bounds $(Z^-,Z^+)$ on $Z=\int f_{\Ccal}(\x)\,d\x$
\FOR{each node $n$ in reverse topological order}
    \IF{$n$ is a leaf}
        \STATE $I_n^- \leftarrow \int p_n$; \quad $I_n^+ \leftarrow \int p_n$
    \ELSIF{$n$ is a product node}
        \STATE $I_n^- \leftarrow \prod_{c \in \mathrm{ch}(n)} I_c^-$; \quad $I_n^+ \leftarrow \prod_{c \in \mathrm{ch}(n)} I_c^+$
    \ELSIF{$n$ is a standard sum node}
        \STATE $I_n^- \leftarrow \sum_{c} \pi_{n,c} I_c^-$; \quad $I_n^+ \leftarrow \sum_{c} \pi_{n,c} I_c^+$
    \ELSIF{$n$ is a Voronoi-gated sum node}
        \FOR{each cell $k$}
            \STATE $J_k^- \leftarrow \textsc{IntegrateBox}(p_k,B_k^-)$
            \STATE $J_k^+ \leftarrow \textsc{IntegrateBox}(p_k,B_k^+)$
        \ENDFOR
        \STATE $I_n^- \leftarrow \sum_k \pi_k J_k^-$; \quad $I_n^+ \leftarrow \sum_k \pi_k J_k^+$
    \ENDIF
\ENDFOR
\STATE \textbf{return} $(I_{\mathrm{root}}^-,I_{\mathrm{root}}^+)$
\end{algorithmic}
\end{algorithm}

The basic box approximations above provide valid certified bounds but may be loose, particularly in high dimensions. We thus develop an anytime refinement algorithm that monotonically tightens the bounds through recursive box subdivision, enabling flexible trade-offs between computational cost and bound quality. The core idea is to recursively bisect boxes and test sub-boxes for containment or intersection with Voronoi cells. For outer boxes, we can discard sub-boxes that don't intersect the cell, and for inner boxes, we retain sub-boxes fully contained within the cell.
Formally, we maintain a disjoint axis-aligned partition $\mathcal{P}$ of domain $\Omega_S = \prod_{i\in S}[\ell_i, u_i]$ where each box $B \in \mathcal{P}$ is labeled for each Voronoi cell $V_k$ as: \textsc{Inside} if $B \subseteq V_k$, \textsc{Outside} if $B \cap V_k = \emptyset$, or \textsc{Boundary} otherwise. This induces approximations $V_k^{-}(\mathcal{P}) := \bigcup_{B:\,\mathrm{lab}_k(B)=\textsc{Inside}} B$ and $V_k^{+}(\mathcal{P}) := \bigcup_{B:\,\mathrm{lab}_k(B)\neq\textsc{Outside}} B$ satisfying $V_k^{-}(\mathcal{P}) \subseteq V_k \cap \Omega_S \subseteq V_k^{+}(\mathcal{P})$. Since $\mathcal{P}$ is disjoint, integration decomposes additively with node bounds $\sum_k \pi_k I_k^\pm(\mathcal{P})$ propagating via Theorem~\ref{thm:bound_propagation}.
Classification can be done using the half-space representation $V_k = \bigcap_{j\neq k}\{\x: (\cc_j-\cc_k)^\top\x \le \frac{1}{2}(\|\cc_j\|^2-\|\cc_k\|^2)\}$. For box $B = \prod_i[l_i, u_i]$ and half-space normal $a$, the extrema $\max_{\x\in B} a^\top\x$ and $\min_{\x\in B} a^\top\x$ are computed by selecting $u_i$ (resp. $l_i$) when $a_i \ge 0$ for the maximum, and vice versa for the minimum. Then $B \subseteq V_k$ iff all half-space maxima satisfy the constraint, and $B \cap V_k = \emptyset$ if any half-space minimum violates its constraint. Refinement bisects \textsc{Boundary} boxes at midpoints, replaces them with disjoint children, reclassifies, and recomputes bounds, prioritizing boxes with largest gap contribution.  Algorithm~\ref{alg:refinement} outlines the refinement, and the below theorem establishes its monotone tightening and convergence properties.

\begin{theorem}[]
\label{thm:monotone_tightening}
Let $\mathcal{P}_t$ denote the partition after $t$ refinement steps with bounds $(Z_t^-, Z_t^+)$. Then (i) $Z_t^- \le Z \le Z_t^+$ $\forall \ t$ (ii) $Z_t^- \le Z_{t+1}^-$ and $Z_{t+1}^+ \le Z_t^+$ for all $t$ (iii) if refinement drives the boundary volume $\mu(V_k^{+}(\mathcal{P}_t) \setminus V_k^{-}(\mathcal{P}_t)) \to 0$ for each cell, then $\lim_{t\to\infty} Z_t^{\pm} = Z$. Under uniform refinement, the gap scales roughly as $Z_t^+ - Z_t^- = O(2^{-t/d})(Z_0^+ - Z_0^-)$, requiring depth $O(d\log(1/\epsilon))$ to achieve target gap $\epsilon$.
\end{theorem}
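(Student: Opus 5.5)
We argue by induction over the circuit in reverse topological order, as in Algorithm~\ref{alg:certified_bounds}. The key invariant is that each node's lower and upper integrals sandwich the true integral of that node. The only places approximation enters are the Voronoi-gated sum nodes. There, for each cell $k$, the exact cell-restricted integral $\int_{V_k\cap\Omega_S} p_k$ is replaced by $I_k^-(\mathcal{P}_t)=\int_{V_k^-(\mathcal{P}_t)} p_k$ and $I_k^+(\mathcal{P}_t)=\int_{V_k^+(\mathcal{P}_t)} p_k$. Since $\mathcal{P}_t$ is a disjoint box partition, each is a sum of integrals over axis-aligned boxes, which are tractable for the child subcircuit.

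\emph{Part (i).} The nesting $V_k^-(\mathcal{P}_t)\subseteq V_k\cap\Omega_S\subseteq V_k^+(\mathcal{P}_t)$ and nonnegativity of $p_k$ give the local sandwich, by Lemma~\ref{lem:cell_bounds} applied to unions of boxes. Globally, sums with nonnegative weights and products of nonnegative quantities are monotone in each argument. Hence if every child satisfies $I_c^-\le I_c\le I_c^+$, so does the parent, and the induction closes at the root. (This is the content of Theorem~\ref{thm:bound_propagation}, which I would invoke.) For nested gated nodes, I would note that the box integrals of a child are themselves bounded by the same monotone recursion restricted to the box.

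\emph{Part (ii).} I would show that refinement only shrinks the outer set and grows the inner set. Bisecting a \textsc{Boundary} box $B^*$ into $B_L,B_R$ leaves all other labels unchanged. Each child is either \textsc{Inside}, \textsc{Outside}, or \textsc{Boundary}. If a child is \textsc{Inside}, $V_k^-$ gains it. If a child is \textsc{Outside}, $V_k^+$ loses it. Otherwise nothing changes, because $B_L\cup B_R=B^*$ was already in $V_k^+$. Boxes already labelled \textsc{Inside} or \textsc{Outside} are never split, since only boundary boxes are chosen. Thus $V_k^-(\mathcal{P}_t)\subseteq V_k^-(\mathcal{P}_{t+1})$ and $V_k^+(\mathcal{P}_{t+1})\subseteq V_k^+(\mathcal{P}_t)$. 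Nonnegativity gives local monotonicity, and the same monotone propagation argument lifts this to the root.

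\emph{Part (iii) and the rate.} Local gaps are $\int_{V_k^+\setminus V_k^-}p_k$. If $p_k$ is bounded on $\Omega_S$ (or merely integrable, using absolute continuity of the integral), these gaps tend to zero when the boundary volume does. Because the root bound is a polynomial, hence continuous, function of finitely many local quantities, the global gap tends to zero too. Combined with (i), this gives $Z_t^\pm\to Z$.

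For the rate I would use a heuristic count. After $t$ uniform bisection rounds, cycling through the coordinates, box side lengths shrink by a factor of $2^{-t/d}$. The boundary layer of a polytope has volume proportional to its surface area times the box diameter, so it scales as $O(2^{-t/d})$ relative to the initial boundary volume. Inverting $2^{-t/d}\le\epsilon$ gives $t=O(d\log(1/\epsilon))$.

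\emph{Main obstacle.} The hardest step is making the global propagation in (iii) and the rate rigorous. Product nodes multiply gaps, and nested gated nodes compound them. I would bound the root gap by a Lipschitz-type constant, namely the sum of products of upper bounds along each path, times the maximal local gap. I would also state the rate only up to constants depending on the circuit and the densities, consistent with the theorem's ``roughly''.
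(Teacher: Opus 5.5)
Your proposal is correct and follows the paper's proof essentially step for step. It uses nesting, nonnegativity and monotone sum/product propagation for (i); the inclusions $V_k^-(\mathcal{P}_t)\subseteq V_k^-(\mathcal{P}_{t+1})$ and $V_k^+(\mathcal{P}_{t+1})\subseteq V_k^+(\mathcal{P}_t)$ for (ii); absolute continuity of the integral plus continuity of the sum/product recursion for (iii); and the same surface-area-times-thickness heuristic for the rate.

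One small correction in (ii): the bisected box is only guaranteed to be \textsc{Boundary} for the selected cell $k^*$, and it may be \textsc{Inside} or \textsc{Outside} for other cells, so your claim that such boxes are never split is false as stated. The inclusions still hold, because children of a split box inherit an \textsc{Inside} or \textsc{Outside} label (they are subsets of it); the paper avoids the issue by arguing directly through ancestor/descendant containment.
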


\subsubsection{Propagating Bounds Through the Circuit.}
\label{sec:certified:propagation}
We next show how to propagate local cell bounds through the sum and product structure of the circuit. We  first focus on the partition function $(Z)$, and later discuss how the same machinery applies to marginals and conditionals.

\begin{theorem}[Bound Propagation]
\label{thm:bound_propagation}
Let $\Ccal$ be a Voronoi-gated PC. For each node $n$ let $I_n=\int f_n(\x)\,d\x$ denote the integral of the subcircuit rooted at $n$, and let $(I_n^-,I_n^+)$ denote certified bounds on $I_n$. We can compute bounds bottom up as follows. If $n$ is a leaf, then $I_n^-=I_n^+=\int p_n(x)\,dx$. If $n$ is a product node with children $\{c_j\}$ and disjoint scopes, then
$
I_n^-=\prod_j I_{c_j}^-,
\qquad
I_n^+=\prod_j I_{c_j}^+.
$
If $n$ is a standard sum node with children $\{c_j\}$ and weights $\{\pi_{n,c_j}\}$, then
$
I_n^-=\sum_j \pi_{n,c_j} I_{c_j}^-,
\qquad
I_n^+=\sum_j \pi_{n,c_j} I_{c_j}^+.
$
If $n$ is a Voronoi-gated sum node with children $\{p_k\}$ and cell boxes $\{(B_k^-,B_k^+)\}$, then
$
I_n^-=\sum_k \pi_k \int_{B_k^-} p_k(\x)\,d\x;
\ 
I_n^+=\sum_k \pi_k \int_{B_k^+} p_k(\x)\,d\x
$. At the root, we obtain bounds $(Z^-,Z^+)$ satisfying $Z^- \le Z \le Z^+$.
\end{theorem}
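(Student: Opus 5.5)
The plan is structural induction over the circuit in reverse topological order, proving the invariant $I_n^- \le I_n \le I_n^+$ together with $I_n^-\ge 0$ for every node $n$. Non-negativity matters because multiplying inequalities at product nodes is only order-preserving for non-negative quantities. It holds automatically: leaves are densities, mixture weights satisfy $\pi\ge0$, and every box integral of a non-negative function is non-negative.

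\textbf{Leaves and standard sums.} For a leaf, the integral is computed exactly, so the invariant is trivial. For a standard sum node, linearity of the integral gives $I_n=\sum_j \pi_{n,c_j} I_{c_j}$. Since each $\pi_{n,c_j}\ge 0$, summing the children's inequalities term by term yields $I_n^-\le I_n\le I_n^+$.

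\textbf{Product nodes.} Here I first use decomposability: the children have pairwise disjoint scopes, so $f_n(\x)=\prod_j f_{c_j}(\x_{c_j})$. Fubini--Tonelli, valid since the integrands are non-negative, then gives $I_n=\prod_j I_{c_j}$. From $0\le I_{c_j}^-\le I_{c_j}\le I_{c_j}^+$, the products of non-negative reals are monotone in each factor, which gives the bound. I expect this to be the step needing most care. The subtlety is that a child's integral $I_{c_j}$ must be the full integral over its own scope variables. This holds whenever the child's subcircuit, including any Voronoi-gated nodes inside it, depends only on $\x_{c_j}$. That is true because a Voronoi-gated node's gate is defined over that node's own scope $\X_S$, which is contained in the child's scope.

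\textbf{Voronoi-gated sums.} Here $I_n=\sum_k \pi_k\int_{V_k} p_k$, using that the cells partition space up to measure-zero boundaries, so the indicator gates integrate to cell restrictions. Lemma~\ref{lem:cell_bounds} with $B_k^-\subseteq V_k\subseteq B_k^+$ bounds each cell integral between the two box integrals. Weighting by $\pi_k\ge0$ and summing then gives the claimed bounds. If $\Omega$ is bounded, the same argument applies with $V_k\cap\Omega$ in place of $V_k$, as in the box constructions.

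One caveat remains. The box integrals $\int_{B}p_k$ are computed exactly by \textsc{IntegrateBox} only if $p_k$ has no further Voronoi gating inside it. If nested gated nodes exist, I would bound $\int_{B} p_k$ by the same induction applied with the domain restricted to $B$. The inner bound uses inner approximations of $V\cap B$ and the outer bound uses outer approximations, and monotonicity again preserves the ordering. Applying the invariant at the root gives $Z^-\le Z\le Z^+$.
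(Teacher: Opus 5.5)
Your proposal is correct and matches the paper's argument. The paper gives no standalone proof, but Claim (i) in the proof of Theorem~\ref{thm:monotone_tightening} runs the same induction in reverse topological order: Lemma~\ref{lem:cell_bounds} with nonnegative weights at Voronoi-gated nodes, linearity at standard sums, and factorization plus multiplication of nonnegative bounds at product nodes. Your caveat about nested gates affects only whether $\int_{B_k^\pm}p_k$ can be computed exactly, not the validity of the bounds, since Lemma~\ref{lem:cell_bounds} already holds for any nonnegative $p_k$.
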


Algorithm~\ref{alg:certified_bounds} summarizes the bound computation. The only non-standard operation is integrating a decomposable subcircuit over a box, which as we show in Theorem\ref{thm:hfvtractability} can be implemented recursively by applying interval integration at leaves and factorization at product nodes. We can apply the same machinery to bound marginals and conditionals.
\begin{corollary}[Marginal Bounds]
\label{cor:marginal_bounds}
For $p(\x_A)=\int p(\x)\,d\x_{\bar A}$ we obtain bounds $(p^-(\x_A),p^+(\x_A))$ by propagating box restricted bounds through the circuit after restricting each box approximation to the $\bar A$ dimensions.
\end{corollary}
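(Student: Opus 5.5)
We argue by structural induction on the circuit, in reverse topological order, exactly as for Theorem~\ref{thm:bound_propagation}. Fix the evidence $\x_A$. For each node $n$ let $I_n(\x_A)=\int f_n(\x_A,\x_{\bar A})\,d\x_{\bar A}$, where the integral runs only over the variables of $\bar A$ that lie in $\scope(n)$. The root value $I_{\mathrm{root}}(\x_A)$ is the marginal $p(\x_A)$ (unnormalized; dividing by $Z$ is treated at the end). The claim to prove at every node $n$ is
$$I_n^-(\x_A)\le I_n(\x_A)\le I_n^+(\x_A),$$
where the bounds are computed with the same recursion as Algorithm~\ref{alg:certified_bounds}, but with each integral taken only over the $\bar A$ coordinates and with the $A$ coordinates fixed at their observed values.

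\textbf{Leaves.} A leaf over $X_i\in A$ is evaluated at $x_i$. A leaf over $X_i\in\bar A$ integrates to its total mass. In both cases $I_n^-=I_n^+=I_n$.

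\textbf{Product nodes.} Decomposability lets the $\bar A$-integral factor across the disjoint child scopes, so $I_n=\prod_j I_{c_j}$. All bounds are non-negative, so multiplying the children's inequalities preserves them.

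\textbf{Standard sum nodes.} Linearity of the integral together with $\pi_{n,c}\ge 0$ gives the bounds directly.

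\textbf{Voronoi-gated sum nodes.} Write $\x_S=(\x_{S\cap A},\x_{S\cap\bar A})$. Then
$$I_n(\x_A)=\sum_k\pi_k\int_{V_k(\x_{S\cap A})} p_k(\x_{S\cap A},\x_{S\cap\bar A})\,d\x_{S\cap\bar A},$$
where the slice is $V_k(\x_{S\cap A})=\{\x_{S\cap\bar A}:(\x_{S\cap A},\x_{S\cap\bar A})\in V_k\}$. The restricted box approximation is defined as the $\bar A$-slice of each box. For a product box $B=\prod_i[a_i,b_i]$, this slice equals $\prod_{i\in\bar A}[a_i,b_i]$ when $x_i\in[a_i,b_i]$ for all $i\in A$, and is empty otherwise. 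Slicing preserves inclusion, so $B_k^-\subseteq V_k\subseteq B_k^+$ implies
$$B_k^-(\x_{S\cap A})\subseteq V_k(\x_{S\cap A})\subseteq B_k^+(\x_{S\cap A}).$$
Lemma~\ref{lem:cell_bounds}, applied to the non-negative function $\x_{S\cap\bar A}\mapsto p_k(\x_{S\cap A},\x_{S\cap\bar A})$, then gives the cell-wise bounds. Summing them with the weights $\pi_k\ge 0$ closes the induction.

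\textbf{Where the care is needed.} The step that needs the most attention is this Voronoi-gated case, for two reasons. First, the phrase ``restricting each box to the $\bar A$ dimensions'' has to be read as slicing at the observed $\x_A$, including the indicator that $\x_A$ lies in the box's $A$-projection. A plain projection onto the $\bar A$ coordinates would break the lower bound, because an inner box whose $A$-range misses $\x_A$ contributes nothing to the true cell-restricted integral. Second, the bounds must be computable: the box-restricted integral of $p_k$ has to reduce to evaluations at $A$-leaves and interval integrals at $\bar A$-leaves. I would justify this with the same recursive box integration invoked for Algorithm~\ref{alg:certified_bounds}: a box is an aligned region in the sense of Definition~\ref{def:alignment}, so Theorem~\ref{thm:alignment_factors} lets box integrals factor at product nodes. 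For nested Voronoi nodes inside $p_k$, the integrand is still non-negative, so their own certified bounds can be substituted monotonically.

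\textbf{Conditionals and normalization.} If normalized marginals or conditionals are wanted, combine the two bound pairs conservatively: $p^-(\x_A)/Z^+\le p(\x_A)/Z\le p^+(\x_A)/Z^-$ whenever $Z^->0$. The same ratio argument gives bounds on conditionals.
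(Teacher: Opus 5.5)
Your proof follows the same skeleton as the paper's. It propagates bounds bottom-up as in Theorem~\ref{thm:bound_propagation}: leaves, product nodes and ordinary sums are handled exactly, and box-restricted integrals are used at the Voronoi-gated nodes.

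The real difference is the one you flag yourself. The appendix, as written, says to \emph{project} each box onto $\X_{\bar A\cap S}$ and integrate over that projection, and to leave nodes whose scope lies in $A$ unchanged. You instead \emph{slice} at the observed $\x_{S\cap A}$, which keeps the factor $\mathbb{I}[\x_{S\cap A}\in\prod_{i\in S\cap A}[a_i,b_i]]$. Your reading is the one the lower bound actually needs.

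Because the projection contains the slice, projecting is harmless for $B_k^+$ and still gives a valid, though looser, upper bound. For $B_k^-$ it can fail. Take $\cc_1=(0,1)$, $\cc_2=(1,0)$ and $A=\{X_1\}$. The inner box of $V_1=\{x_2\ge x_1\}$ from Proposition~\ref{prop:inner_box} is $[-\tfrac12,\tfrac12]\times[\tfrac12,\tfrac32]$. At $x_1=10$ the true slice is $[10,\infty)$, but the projected box is $[\tfrac12,\tfrac32]$. An expert $p_1$ concentrated near $x_2=1$ then produces a ``lower bound'' far above the true marginal. So your argument certifies the corollary under a corrected version of the paper's recipe.

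The paper's version gains something only when $\scope(n)\subseteq A$. There the hard gate can simply be evaluated exactly, whereas your slices reduce to the looser indicators $\mathbb{I}[\x_S\in B_k^\pm]$; you can adopt that special case at no cost. Your explicit normalization through $Z^\pm$ is another detail the paper leaves implicit.

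One step in your sketch should be made precise. When box-integrating $p_k$ you may reach a nested Voronoi-gated node $m$. The bounds you substitute there must be bounds on the \emph{box-restricted} integral at $m$. You get them by intersecting $m$'s inner and outer boxes with the current box, which again gives boxes, and then slicing at $\x_A$.

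Reusing $m$'s global bounds $I_m^\pm$ would be fine on the upper side, since $\int_B f_m\le I_m^+$. It is not fine on the lower side, because $I_m^-$ need not lower-bound $\int_B f_m$. The paper never treats nested gated nodes; its VT experiments gate only at the root. With intersected boxes, your monotone-substitution argument goes through.
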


\begin{corollary}[Conditional Bounds]
\label{cor:conditional_bounds}
For disjoint $A,B$ and $p(\x_A\mid\x_B)=p(\x_A,\x_B)/p(\x_B)$, if we have bounds $p^-(\x_A,\x_B)\le p(\x_A,\x_B)\le p^+(\x_A,\x_B)$ and $p^-(\x_B)\le p(\x_B)\le p^+(\x_B)$ with $p^-(\x_B)>0$, then
$
\frac{p^-(\x_A,\x_B)}{p^+(\x_B)} \le p(\x_A\mid\x_B) \le \frac{p^+(\x_A,\x_B)}{p^-(\x_B)}.
$
\end{corollary}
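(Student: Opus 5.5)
The argument is elementary: it is the monotonicity of $t\mapsto 1/t$ on $(0,\infty)$ combined with monotonicity of multiplication by nonnegative numbers, together with the definition $p(\x_A\mid\x_B)=p(\x_A,\x_B)/p(\x_B)$. The bounds themselves are taken as given, for instance from Corollary~\ref{cor:marginal_bounds} applied once with marginalized set $\overline{A\cup B}$ and once with $\overline{B}$. I will use only that $0\le p^-(\cdot)\le p(\cdot)\le p^+(\cdot)$ and $p^-(\x_B)>0$.

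\emph{Positivity.} From $0<p^-(\x_B)\le p(\x_B)\le p^+(\x_B)$, all three quantities are strictly positive, so the conditional and both bounding ratios are well defined. Inverting reverses the order:
\[
\frac{1}{p^+(\x_B)}\le\frac{1}{p(\x_B)}\le\frac{1}{p^-(\x_B)}.
\]

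\emph{Upper bound.} Since $p(\x_A,\x_B)\ge0$, multiplying the right inequality above by it gives $p(\x_A\mid\x_B)\le p(\x_A,\x_B)/p^-(\x_B)$. Because $1/p^-(\x_B)>0$, replacing $p(\x_A,\x_B)$ by the larger $p^+(\x_A,\x_B)$ gives
\[
p(\x_A\mid\x_B)\le \frac{p^+(\x_A,\x_B)}{p^-(\x_B)}.
\]

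\emph{Lower bound.} Symmetrically, $p(\x_A,\x_B)/p(\x_B)\ge p(\x_A,\x_B)/p^+(\x_B)\ge p^-(\x_A,\x_B)/p^+(\x_B)$, using $p(\x_A,\x_B)\ge p^-(\x_A,\x_B)$ and $1/p^+(\x_B)>0$.

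The only delicate point is the sign bookkeeping. Both the numerator and the inverted denominator must be nonnegative for the product inequalities to chain, which requires $p^-(\x_A,\x_B)\ge0$. This holds for bounds produced by Theorem~\ref{thm:bound_propagation}, since they are integrals of nonnegative densities combined with nonnegative weights. The hypothesis $p^-(\x_B)>0$ rules out division by zero and keeps the upper bound finite.
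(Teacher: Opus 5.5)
Your proof is correct and takes the same route as the paper, which argues by interval arithmetic: division is monotone in the numerator and antitone in a positive denominator. One small remark is that your two chains only use $p(\x_A,\x_B)\ge 0$ (automatic for a density) and positivity of $p^{\pm}(\x_B)$. So your closing claim that $p^-(\x_A,\x_B)\ge 0$ is required is unnecessary: if that lower bound were negative, the stated inequality would still hold trivially.
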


Certified bounds provide guaranteed inference in settings where exact computation is intractable, but they have clear limitations. In high dimensions the initial inner boxes may capture only a small fraction of each Voronoi cell, leading to loose initial bounds. Tight bounds require many refinements, and the refinement cost grows quickly with dimension and the number of cells. Finally, while refinement can approximate exact inference arbitrarily well, it does not restore exactness at finite computation. This motivates designing Voronoi gating mechanisms that align with circuit decomposition and recover exact tractable inference.

\subsection{Hierarchical Factorized Voronoi PCs}
\label{sec:hfv}
Next, we show how to enforce geometric alignment by design through \emph{hierarchical factorization} of Voronoi tessellations to retain tractable inference. 
 The key idea is to partition the Voronoi centroids in a manner that mirrors the circuit's vtree structure, ensuring that at each product node, the induced Voronoi cells decompose into independent factors over disjoint variable subsets. 
 Let a scope $\X_S$ be partitioned into disjoint blocks $\X_S=\bigsqcup_{i=1}^m \X_{S_i}$. For each block $i$, choose centroids
$\{\cc^{(i)}_1,\ldots,\cc^{(i)}_{K_i}\}\subset\R^{|\X_{S_i}|}$ and let $\{V^{(i)}_{k_i}\}_{k_i=1}^{K_i}$ be the induced Voronoi cells in $\R^{|\X_{S_i}|}$. These induce a \emph{joint} partition of $\R^{|\X_S|}$ into product cells indexed by $\mathbf{k}=(k_1,\ldots,k_m)$ as: 
$V_{\mathbf{k}} \;=\; V^{(1)}_{k_1}\times \cdots \times V^{(m)}_{k_m}.$
The corresponding hard gate factors along blocks:
$g_{\mathbf{k}}(\x_S)
=\mathbb{I}[\x_S\in V_{\mathbf{k}}]
=\prod_{i=1}^m \mathbb{I}[\x_{S_i}\in V^{(i)}_{k_i}]
=\prod_{i=1}^m g^{(i)}_{k_i}(\x_{S_i}),
$ so membership can be decided independently within each block. The defining feature is that each joint cell is a Cartesian product of lower-dimensional Voronoi cells, which is exactly the geometric analogue of decomposability. We can now define a gated sum node that uses a factorized Voronoi partition and couples it to a matching factorized expert.



\begin{definition}[\textbf{HFV-Gated Sum Node}]
\label{def:hfv_sum}
Let $\X_S=\bigsqcup_{i=1}^m \X_{S_i}$. An HFV-gated sum node over scope $\X_S$ consists of a factorized Voronoi partition with $K_i$ cells on factor $i$, mixture weights $\{\pi_{\mathbf{k}}\}_{\mathbf{k}\in[K_1]\times\cdots\times[K_m]}$ with $\sum_{\mathbf{k}}\pi_{\mathbf{k}}=1$, and factor subcircuits $\{p^{(i)}_{k_i}\}$ where $p^{(i)}_{k_i}$ has scope $\X_{S_i}$. The node computes
$
f(\x_S)=\sum_{\mathbf{k}} g_{\mathbf{k}}(\x_S)\,\pi_{\mathbf{k}}\,\prod_{i=1}^m p^{(i)}_{k_i}(\x_{S_i}),
$
where $g_{\mathbf{k}}(\x_S)=\prod_{i=1}^m g^{(i)}_{k_i}(\x_{S_i})$.
\end{definition}

The critical design choice is that the gate and the expert share the same factorization pattern. This alignment restores the ability to apply Fubini's theorem~\footnote{Proofs are provided in the appendix} to reduce high-dimensional integrals into products of lower-dimensional integrals. To obtain a full circuit over $\X=\{X_1,\ldots,X_D\}$, we align HFV gating with a variable tree (vtree), so that at each internal vtree node the gate factors across its left/right child scopes. This yields a hierarchical (multi-resolution) geometric partition: coarse routing happens at higher scopes, while finer routing refines decisions within smaller scopes.
We now show that HFV restores exact tractable inference. The result mirrors standard PC tractability, but incurs an additional multiplicative factor that reflects the number of joint Voronoi cell combinations at each gated sum.

\begin{theorem}[Tractability of HFV-PCs]
\label{thm:hfvtractability}
Let $\Ccal$ be an HFV-PC with $|\Ccal|$ nodes, maximum factorization degree $m$, and at most $K$ Voronoi cells per factor. Then the partition function, marginals, and conditionals are computable exactly in time $O(|\Ccal|K^m)$. 
For binary HFV-PCs the time is $O(|\Ccal|K^2)$.
\end{theorem}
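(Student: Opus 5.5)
The plan is to prove a stronger inductive statement, in the bottom-up style of standard PC tractability. The generalized query is integration over an axis-aligned box, where some coordinates are observed; integrating over all of $\R$ gives the partition function, and fixing a coordinate gives evidence. For every node $n$ and every region $R$ that factorizes across the vtree, we show that
\[
I_n(R)=\int_{R} f_n(\x_{\scope(n)})\,d\x_{\scope(n)}
\]
is computable from the children's quantities in time $O(K^m)$ per node, where $R$ ranges over the regions that actually arise. Concretely, these regions are Cartesian products of the evidence box with the factor Voronoi cells $V^{(i)}_{k_i}$ imposed by ancestor gates. The ancestor constraints only matter on the sub-scope, and by design an ancestor's gate restricted to a child scope is itself a (product of) lower-level Voronoi cells. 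The base case is leaves: a univariate integral over an interval, or over a one-dimensional Voronoi cell, which is an interval. By assumption this takes $O(1)$.

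The inductive steps follow the node types.

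\textbf{Product nodes.} By decomposability and the factorization of $R$, Fubini gives $I_n(R)=\prod_j I_{c_j}(R|_{\scope(c_j)})$.

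\textbf{Standard sum nodes.} Linearity of the integral handles these directly.

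\textbf{HFV-gated sum nodes.} This is the key step and follows Theorem~\ref{thm:alignment_factors} extended to $m$ blocks. Since $g_{\mathbf{k}}=\prod_i g^{(i)}_{k_i}$ and the experts factor, Fubini gives
\[
\int_R f\,d\x_S=\sum_{\mathbf{k}}\pi_{\mathbf{k}}\prod_{i=1}^m \int_{R^{(i)}\cap V^{(i)}_{k_i}} p^{(i)}_{k_i}(\x_{S_i})\,d\x_{S_i}.
\]
Each factor integral is a child query over the region $R^{(i)}\cap V^{(i)}_{k_i}$. That region is again a product over the child's vtree, because within the factor the Voronoi cell is itself hierarchically factorized by the next level down. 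Therefore the inductive hypothesis applies. The sum runs over at most $K^m$ index tuples, each costing $m$ multiplications of precomputed child values; this gives $O(mK^m)$ per node, which is absorbed as $O(K^m)$ for fixed $m$. The child values themselves number at most $mK$ per node and can be cached. Summing over nodes yields $O(|\Ccal|K^m)$, and setting $m=2$ gives the binary case.

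\textbf{Marginals and conditionals.} Marginals are the special case where the marginalized coordinates are integrated over $\R$, while observed coordinates are evaluated pointwise. The gate indicator at an observed coordinate is fixed by the evidence, which only zeroes out some tuples $\mathbf{k}$. A conditional is the ratio of two such marginal computations.

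The main obstacle is ensuring that the regions passed down stay "factorized," that is, that intersecting a factor cell $V^{(i)}_{k_i}$ with a deeper gate's cells does not create the oblique coupling of Proposition~\ref{prop:single_layer}. I would handle this by interpreting the HFV construction so that each factor Voronoi cell at a vtree node is itself defined through the HFV gates of its children. In other words, the factor subcircuits $p^{(i)}_{k_i}$ already contain the gating on $\X_{S_i}$, so the parent never needs to integrate over a genuinely oblique cell in dimension greater than one: all oblique structure is confined to the within-block gates, whose integrals are again computed recursively. If that reading is insufficient, I would instead take the factor blocks at the lowest level to be univariate, which makes every cell an interval, and argue by the same induction.
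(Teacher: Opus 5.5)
Your route is the paper's route: a bottom-up induction with univariate leaves, Fubini at product nodes, and linearity at ordinary sums. At an HFV-gated sum you exchange $\sum_{\mathbf{k}}$ and $\int$, then apply Fubini to get $\sum_{\mathbf{k}}\pi_{\mathbf{k}}\prod_i I^{(i)}_{k_i}$, with $\prod_i K_i\le K^m$ terms of $O(m)$ work each. Your strengthened hypothesis, region-restricted queries, is what the recursion needs, since $I^{(i)}_{k_i}=\int_{V^{(i)}_{k_i}}p^{(i)}_{k_i}$ is a restricted integral, not the child's partition function.

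The step you flagged does fail as stated, and neither of your patches repairs it. Under the paper's definition, $V^{(i)}_{k_i}$ is a genuine Voronoi cell in $\R^{|\X_{S_i}|}$. Once $|\X_{S_i}|\ge 2$ its facets are oblique, so $R^{(i)}\cap V^{(i)}_{k_i}$ is \emph{not} a product over the child's split. Concretely, split the root in three dimensions as $\{X_1\}\sqcup\{X_2,X_3\}$, with factor centroids $(0,1),(1,0)$ in the $(x_2,x_3)$-plane, so that one factor cell is $\{x_3\ge x_2\}$. Let $p^{(2)}_{k_2}$ be an HFV sum over $\{X_2\}\sqcup\{X_3\}$. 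Then $I^{(2)}_{k_2}$ is a weighted sum of integrals of products $q(x_2)r(x_3)$ over $\{x_3\ge x_2\}$ intersected with a rectangle. That is exactly the non-factoring integral of Proposition~\ref{prop:single_layer}.

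Your first patch is circular. Saying oblique structure is ``confined to within-block gates, whose integrals are again computed recursively'' invokes the very recursion that breaks on oblique cells. Your second patch, univariate blocks only at the lowest level, does not help, because the obstruction arises at every vtree node whose child scope has two or more variables. The same issue also falsifies your evidence remark: a partially observed multivariate block leaves an oblique slice of the cell on the unobserved coordinates, not a mere zeroing of tuples $\mathbf{k}$.

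What closes the gap is an extra hypothesis that makes alignment recursive. Each factor cell on a block $\X_{S_i}$ must itself be a product of cells over the split of $\X_{S_i}$ used by the HFV sum below it, and so on down the vtree. Then every gate region is a product of one-dimensional Voronoi cells (Proposition~\ref{prop:univariate_voronoi}), i.e.\ an axis-aligned box. This matches the paper's own description of HFV regions as axis-aligned. With this hypothesis your invariant ``every node is queried on a box'' is preserved:
\begin{itemize}
\item product nodes restrict a box to boxes;
\item an HFV sum intersects a box with a box;
\item leaves integrate over intervals;
\item evidence fixes coordinates of a box, leaving a box or the empty set.
\end{itemize}

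One more point: your total of $O(|\Ccal|K^m)$ assumes each node is queried on a single region. That holds for tree-shaped circuits, but needs an argument when a subcircuit is shared by parents imposing different cell restrictions.

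The paper's proof has the same hole in the HFV step. It only asserts that $I^{(i)}_{k_i}$ is ``computed recursively by the same argument'' and that cell restrictions are ``handled locally at the appropriate scope''. So you located the real crux, but you have not yet resolved it.
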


HFV intentionally restricts geometry to preserve decomposability.
By requiring each joint cell to be a Cartesian product of
lower-dimensional Voronoi cells, HFV can represent rich piecewise
structure \emph{within} each scope block and refine it hierarchically,
but cannot realize arbitrary oblique polytopes whose half-space
normals couple variables \emph{across} different scope blocks. A
concrete example is a dataset separated by a rotated hyperplane such
as $x_1 + x_2 = c$:  VT can represent such a boundary
directly with a single centroid pair, whereas HFV must approximate it
through a hierarchical composition of factorized axis-aligned regions.
The two constructions therefore occupy different points in the expressivity-tractability tradeoff. 
\textbf{HFV-PC} is preferable when exact inference is \textbf{required} and the relevant geometric structure factorizes along the circuit decomposition, while \textbf{VT-PC} is preferable when cross-variable boundaries are essential and one is willing to accept certified approximate inference for added expressivity. In higher dimensions, HFV gating can also be applied selectively to a subset of PC layers, avoiding full determinism while introducing geometry awareness in the gated layers.

\begin{figure*}
    \centering
    \includegraphics[width=0.99\linewidth]{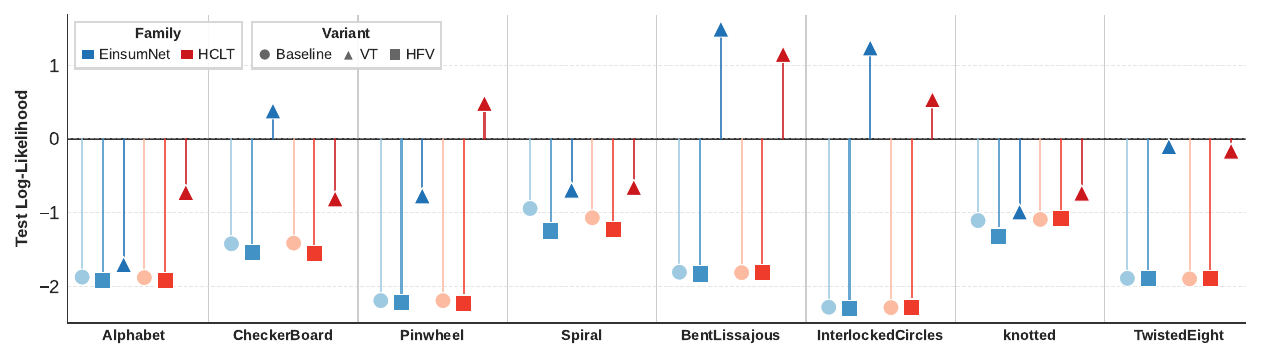}
    \caption{\textbf{Mean Test Log-likelihood} ($\uparrow$) on \textbf{synthetic} 2D and 3D density estimation tasks achieved by EinsumNet and HCLT along with their geometry-aware extensions using Voronoi tessellations (VT) and hierarchical factorized Voronoi (HFV), averaged across $3$ trials. For VT, values correspond to the  lower bound on the log-likelihood obtained via our certified approximate inference framework.}
    \label{fig:results-synthetic}
\end{figure*}

\subsection{Learning via Soft Gating}
\label{sec:learning}
Both geometry-aware constructions we have introduced in this paper rely on hard routing, where inputs activate a single region. This yields crisp locality and interpretability and enables exact inference in HFV-PCs as well as certified bounds for general VT-PCs. However, hard Voronoi assignments are non-differentiable, preventing gradient-based learning of the centroids together with the PC parameters. We thus introduce a smooth relaxation that supports standard backpropagation during training, and revert to hard gating at test time to recover the desired inference guarantees.
\begin{definition}[Soft Voronoi Gate]
\label{def:soft_gate}
Given centroids $\{\cc_1,\ldots,\cc_K\}\subset\R^d$ and an inverse temperature $\alpha>0$, we define the soft Voronoi gate
\begin{equation}
\label{eq:soft_gate}
w_k(\uu;\alpha)=\frac{\exp\!\bigl(-\alpha\|\uu-\cc_k\|^2\bigr)}{\sum_{j=1}^K \exp\!\bigl(-\alpha\|\uu-\cc_j\|^2\bigr)}.
\end{equation}
For any $\uu$, the weights satisfy $w_k(\uu;\alpha)>0$ and $\sum_{k=1}^K w_k(\uu;\alpha)=1$ and are smooth in both $\uu$ and $\{\cc_k\}$
\end{definition}
The temperature parameter controls the sharpness of routing. When $\alpha$ is small the weights are diffuse, and when $\alpha$ is large the weights concentrate on the nearest centroid.
Correspondingly, a soft Voronoi-gated sum node computes:
$\label{eq:soft_sum}
f(\x_S;\alpha) = \sum_{k=1}^K w_k(\x_S;\alpha) \pi_k p_k(\x_S).$
For HFV-PCs, factorization can be maintained by applying soft gates per factor. Given partition $\X_S = \bigsqcup_{i=1}^m \X_{S_i}$, define a factorized soft gate $w_{\mathbf{k}}(\x_S;\alpha) = \prod_{i=1}^m w_{k_i}^{(i)}(\x_{S_i};\alpha)$ where each factor uses centroids in $\R^{|\X_{S_i}|}$. This ensures gradient signals to factor-$i$ centroids depend only on variables in $\X_{S_i}$, preserving the decomposition structure during optimization.  For any $\alpha > 0$, the soft gates are smooth and the circuit likelihood is differentiable with respect to all parameters. 
The centroid gradient has the form $\nabla_{\cc_k} w_k(\uu;\alpha) = 2\alpha w_k(\uu;\alpha)(1 - w_k(\uu;\alpha))(\uu - \cc_k)$
The factor $w_k(1-w_k)$ peaks when the gate is uncertain (near decision boundaries), and vanishes when routing is already confident ($w_k\approx 0$ or $1$). Thus centroids are primarily updated in regions where the current tessellation is contested.
\textbf{Soft-to-Hard Convergence.}
\label{sec:learning:convergence}
We train VT-PCs and HFV-PCs via soft gates but ultimately require the exact inference guarantees of hard HFV gating. 
The next result highlights that increasing $\alpha$ recovers hard Voronoi assignments, with an exponential rate governed by a geometric margin.
\begin{theorem}[Soft-to-Hard Convergence]
\label{thm:soft_hard}
Let $g_k(\uu)=\mathbb{I}[\uu\in V_k]$ be the hard Voronoi gate induced by $\{\cc_k\}$ and let $w_k(\uu;\alpha)$ be the soft gate in \eqref{eq:soft_gate}. Then for any $\uu$ not lying on a Voronoi boundary,
$\lim_{\alpha\to\infty} w_k(\uu;\alpha)=g_k(\uu)$.
Moreover, if $k^*(\uu)=\arg\min_k \|\uu-\cc_k\|$ and the margin
$\gamma(\uu)=\min_{j\ne k^*(\uu)}\bigl(\|\uu-\cc_j\|^2-\|\uu-\cc_{k^*}\|^2\bigr)$
is positive, then $1-w_{k^*}(\uu;\alpha)\le (K-1)e^{-\alpha\,\gamma(\uu)}$.
Further, if $p$ is integrable, then $w_k(\cdot;\alpha)\to g_k(\cdot)$ also implies
$\lim_{\alpha\to\infty}\int w_k(\uu;\alpha)\,p(\uu)\,d\uu=\int_{V_k} p(\uu)\,d\uu$.
\end{theorem}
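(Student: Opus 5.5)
The proof rests on rewriting the softmax in terms of distance gaps relative to the nearest centroid. Fix $\uu$ off every Voronoi boundary, so that $k^*=k^*(\uu)$ is unique and $\gamma(\uu)>0$. Put $\Delta_j(\uu)=\|\uu-\cc_j\|^2-\|\uu-\cc_{k^*}\|^2$. Dividing the numerator and denominator of \eqref{eq:soft_gate} by $\exp(-\alpha\|\uu-\cc_{k^*}\|^2)$ gives
\[
w_{k^*}(\uu;\alpha)=\frac{1}{1+\sum_{j\neq k^*}e^{-\alpha\Delta_j(\uu)}},
\qquad
w_k(\uu;\alpha)=\frac{e^{-\alpha\Delta_k(\uu)}}{1+\sum_{j\neq k^*}e^{-\alpha\Delta_j(\uu)}}\quad (k\neq k^*).
\]

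\textbf{Exponential rate.} For $j\neq k^*$ we have $\Delta_j(\uu)\ge\gamma(\uu)$, so $S:=\sum_{j\ne k^*}e^{-\alpha\Delta_j}\le (K-1)e^{-\alpha\gamma(\uu)}$. Then $1-w_{k^*}=S/(1+S)\le S\le (K-1)e^{-\alpha\gamma(\uu)}$, which is the stated bound. For $k\neq k^*$, $0<w_k\le 1-w_{k^*}$, so $w_k$ obeys the same bound.

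\textbf{Pointwise limit.} Off the boundary, $\uu$ lies in the interior of $V_{k^*}$ and in no other closed cell. Hence $g_{k^*}(\uu)=1$ and $g_k(\uu)=0$ for $k\neq k^*$. The exponential bound gives $w_{k^*}\to1$ and $w_k\to0$, so $w_k(\uu;\alpha)\to g_k(\uu)$ for every $k$. The only point to check is that being off the boundary is equivalent to uniqueness of the argmin, i.e.\ $\gamma(\uu)>0$. This holds because the boundaries are exactly the sets where two squared distances tie.

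\textbf{Integral convergence.} I would use dominated convergence. The union of Voronoi boundaries lies in finitely many hyperplanes $\{(\cc_j-\cc_k)^\top\uu=\tfrac12(\|\cc_j\|^2-\|\cc_k\|^2)\}$. Assuming distinct centroids, each is a genuine hyperplane, so the union has Lebesgue measure zero and $w_k(\cdot;\alpha)p\to g_k p$ almost everywhere. The domination is $|w_k(\uu;\alpha)p(\uu)|\le |p(\uu)|$ because $0<w_k\le1$, and $p$ is integrable. Dominated convergence then gives $\int w_k p\to\int g_k p=\int_{V_k}p$. The limit $\alpha\to\infty$ is continuous-parameter, so I would either apply the theorem along an arbitrary sequence $\alpha_n\to\infty$ or note that sequential convergence suffices.

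\textbf{Main obstacle.} The subtle step is the measure-zero claim. It needs distinct centroids, since coincident centroids give degenerate "hyperplanes" equal to all of $\R^d$ and non-unique nearest centroids everywhere. I would state this assumption explicitly. The margin bound is uniform only on sets where $\gamma$ is bounded below, so no uniform rate is claimed near boundaries. Dominated convergence sidesteps this because only pointwise a.e.\ convergence is needed.
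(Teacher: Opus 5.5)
Your proof is correct and follows the paper's argument essentially step for step: the same rewriting of $w_{k^*}$ in terms of the gaps $\Delta_j(\uu)$, the same termwise bound giving $1-w_{k^*}\le (K-1)e^{-\alpha\gamma(\uu)}$, and dominated convergence with $0\le w_k\le 1$ for the integral claim. Your explicit distinct-centroid assumption and your reduction to sequences $\alpha_n\to\infty$ make precise points the paper leaves implicit; one small wording fix is that Voronoi boundaries are where the \emph{minimal} squared distance is attained by two centroids, not where any two squared distances tie, although the union of all bisector hyperplanes is still a valid measure-zero superset for your argument.
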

The gap $\gamma(\uu)$ quantifies how much closer $\uu$ is to its nearest centroid than to the runner-up. Larger margins mean the softmax ratios $\exp(-\alpha(d_j-d_{k^*}))$ decay faster, so routing becomes effectively hard at smaller $\alpha$. The exponential bound formalizes this geometric picture and justifies annealing: as training progresses, increasing $\alpha$ sharpens routing while remaining stable on points that are already well-separated by the current centroids.
{\bf Training.} We thus train soft HFV-PCs and VT-PCs via maximum-likelihood, gradually increasing $\alpha$ so that routing sharpens over time. This annealing schedule avoids early training instabilities where centroids collapse or assignments become overly brittle before the experts have adapted.
We use softmax projection to enforce $\pi_{\mathbf{k}}\ge 0$ and $\sum_{\mathbf{k}}\pi_{\mathbf{k}}=1$.
After training we use the learned centroids to define a Voronoi tessellation and perform inference in one of two modes.
Hard-gated inference, which replaces $w^{(i)}_{k_i}$ with $g^{(i)}_{k_i}$ and recovers the exact tractability guarantees. This is the default mode we use in experiments because it preserves exact partition functions and exact marginals. Soft-gated inference uses a finite $\alpha$. This yields a fully smooth model, but exact marginalization generally requires integrating smooth gate functions, which typically forfeit the exact HFV guarantees. We therefore treat soft gating primarily as a training device and hard gating as the inference-time model.




\section{Experiments \& Results}
\subsection{Low Dimensional Geometric Datasets}
To validate our theoretical framework, 
we first consider eight synthetic distributions where geometric structure is explicit and exact verification is feasible: four $2D$ (Alphabet, CheckerBoard, Pinwheel, Spiral) and four $3D$ (BentLissajous, InterlockedCircles, Knotted, TwistedEight) \cite{sidheekh2022vq,sidheekh2023probabilistic}, each with $10$k train, $5$k validation, and $5$k test samples. 
We compare two notable base PC architectures: EinsumNet (random binary tree region graph) \cite{peharz2020einsum} and HCLTs (Hidden Chow-Liu trees) \cite{liu2021tractable} against their geometry-aware variants: VT-EinsumNet/HCLT using Voronoi tessellations at root sum nodes with certified approximate inference, and HFV-EinsumNet/HCLT using hierarchical factorized Voronoi gating aligned with circuit decomposition for exact inference. 
We use Gaussian leaves and Tucker sum-product layers in all models \cite{loconte2025what}, keeping the base architecture same, with $10$ input and sum units for $3D$ (and $5$ each for $2D$) datasets, for a fair comparison. 
We employ maximum likelihood via stochastic gradient descent to learn the parameters of the PC and VT, by training using an Adam optimizer with a learning rate of $0.01$ and batch size of $500$ for $100$ epochs. For VT and HFV variants, we employ soft gating with linear temperature annealing for and k-means centroid initialization ($100$ iterations), and switch to hard gating at test time. VT models report certified lower bounds using inner box approximations without refinement. 

\begin{figure}[!t]
    \centering
    \includegraphics[width=0.49\linewidth]{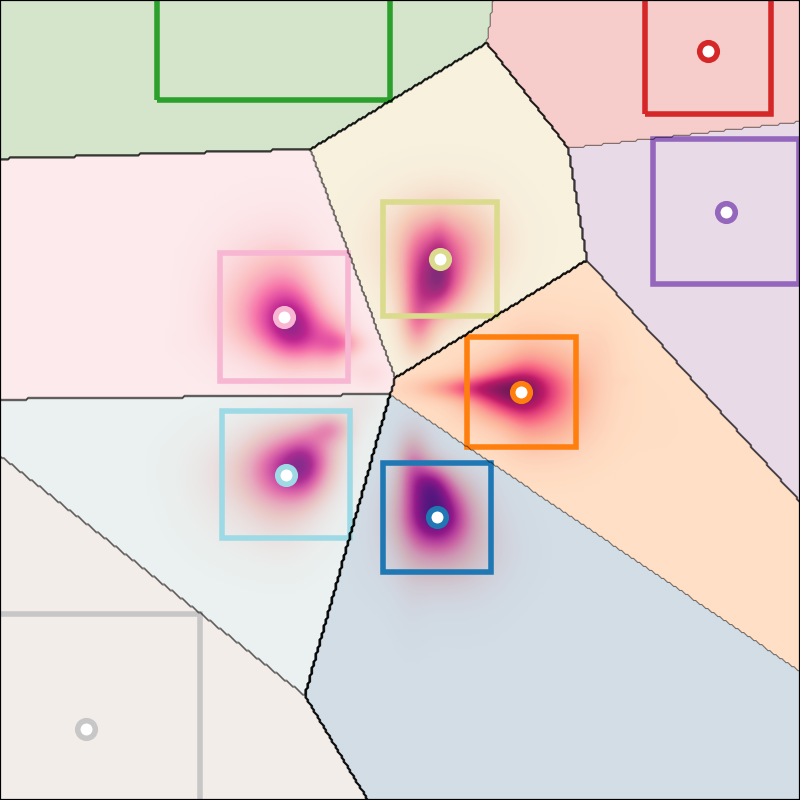}
    \includegraphics[width=0.482\linewidth]{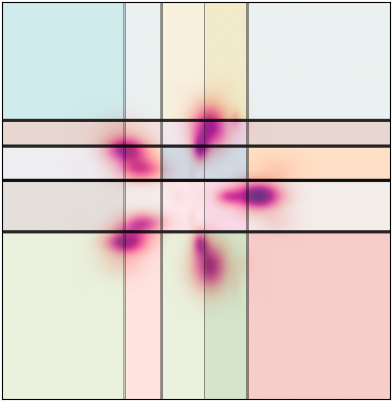}
    \caption{Visualization of the distribution and voronoi tessellations learned by a VT-EinsumNet (left) and HFV-EinsumNet (right) on the $2D$ pinwheel dataset. The axis aligned boxes in the left figure represent the Inner Boxes computed for estimating the lower bound on the partition function using our conservative construction.}
    \label{fig:tessellation_overlays}
\end{figure}
\textbf{Results.} Figure~\ref{fig:results-synthetic} depicts the mean test log-likelihoods across all datasets and model variants.  VT models (triangles) consistently achieve strong performance, with their certified lower bounds often exceeding baseline exact log-likelihoods, demonstrating that the increased expressivity from unconstrained geometry-aware routing captures structure missed by input-independent weights, even when accounting for conservative approximation gaps.
HFV models (squares) achieve performance comparable to baselines (circles).
This is expected in our low-dimensional, shallow-circuit regime, as the alignment constraints required for exact tractability in HFV induce fully deterministic, factorized partitions, which can reduce expressive power relative to smooth decomposable PCs. The advantage of {\bf HFV here however, is conceptual and algorithmic}: it retains {\em exact tractable inference while providing an explicit geometric interpretation} that can be useful for downstream tasks such as continual learning. Figure~\ref{fig:tessellation_overlays} visualizes the learned routing structure on the 2D pinwheel dataset. In VT (left), the Voronoi cells adapt to the arms of the distribution and assign regions of responsibility to local experts; the axis-aligned \emph{inner boxes} shown are the conservative subsets used by our certified lower-bound computation, and we see that in practice they capture the majority of the modeled probability mass within each cell on this dataset. In HFV (right), the partition is hierarchical and factorized, yielding axis-aligned regions that preserve exact tractability by construction. Together, these overlays offer {\bf an interpretable view of \emph{where} specialization occurs }and help explain why geometry-aware routing is effective on distributions with strong locality.

\begin{figure}[!t]
    \centering
    \includegraphics[width=0.49\linewidth]{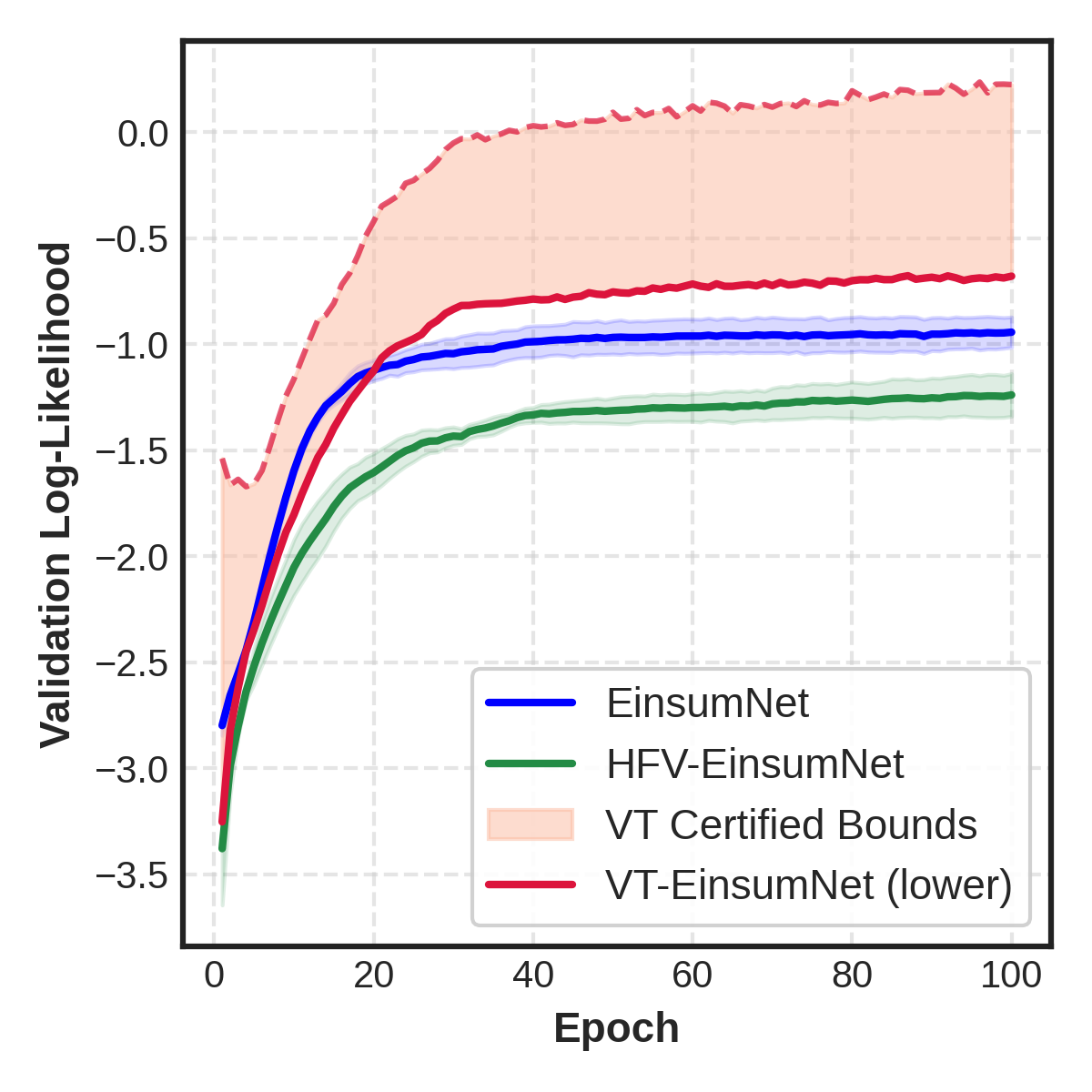}
    \includegraphics[width=0.49\linewidth]{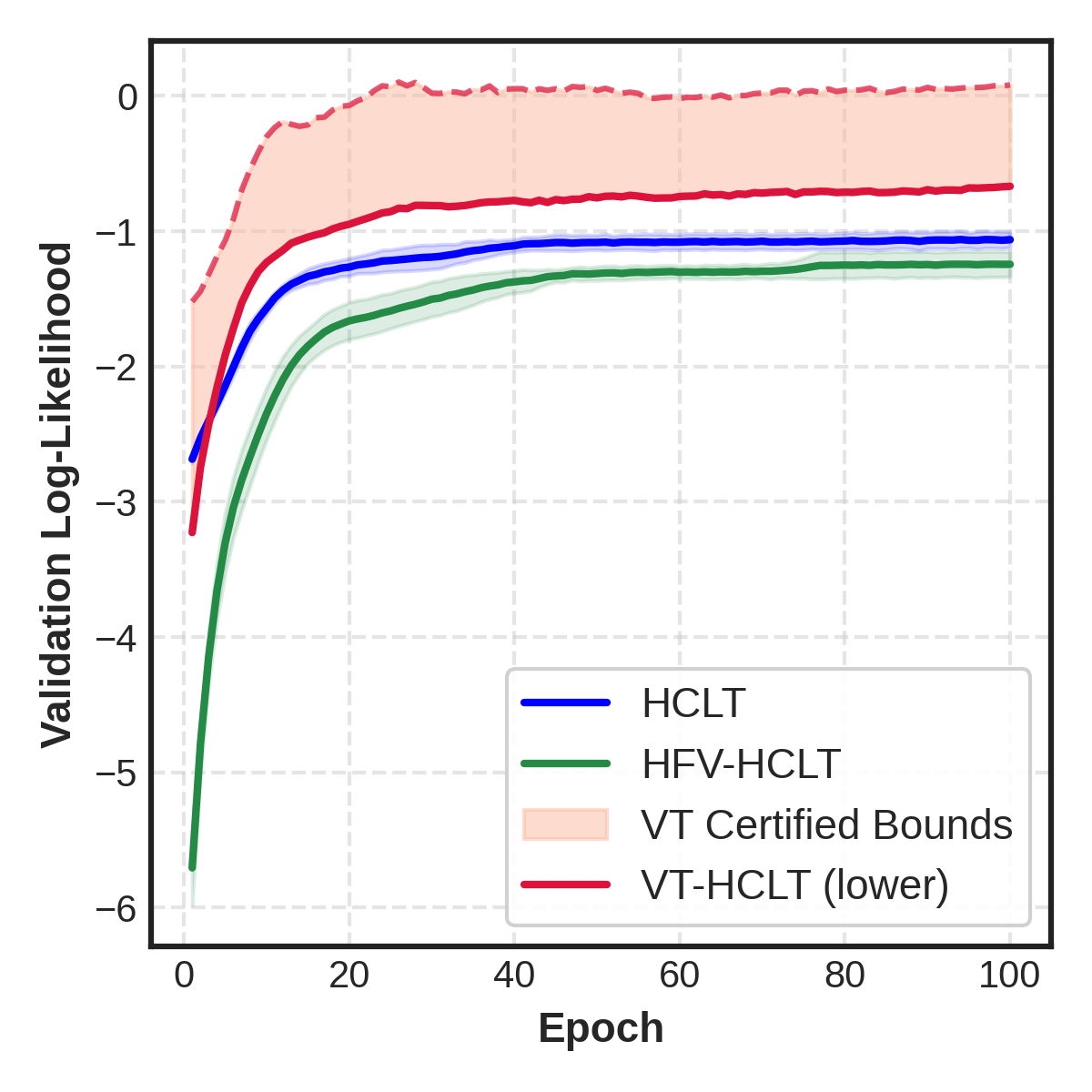}
    \caption{\textbf{Learning curves on the 2D spiral dataset.} Validation log-likelihood across epochs for EinsumNet (left) and HCLT (right), averaged over $3$ trials. VT models report a \emph{certified} lower bound while baselines and HFV use exact inference. }

    \label{fig:learning_curves}
\end{figure}

\begin{figure*}[h!]
\centering
\setlength{\fboxsep}{1.0pt}
\fbox{%
\begin{tikzpicture}[font=\small, baseline=(current bounding box.center)]
  \def\sw{0.45cm}
  \def\lsp{0.12cm}
  \coordinate (o1) at (0.4,0);
  \coordinate (o2) at (3.6cm,0);
  \coordinate (o3) at (7.8cm,0);
  \coordinate (o4) at (12.8cm,0);
  \draw[blue!80!black, line width=1.5pt] (o1) -- +(\sw,0);
  \node[anchor=west] at ($(o1)+(\sw+\lsp,0)$) {EinsumNet};
  \draw[green!60!black, line width=1.5pt] (o2) -- +(\sw,0);
  \node[anchor=west] at ($(o2)+(\sw+\lsp,0)$) {HFV-EinsumNet};
  \draw[red!80!black, line width=1.5pt] (o3) -- +(\sw,0);
  \node[anchor=west] at ($(o3)+(\sw+\lsp,0)$) {VT-EinsumNet (lower)};
  \fill[red!15] ($(o4)-(0,0.06cm)$) rectangle ($(o4)+(\sw,0.06cm)$);
  \draw[red!50, dashed, line width=0.8pt]
      ($(o4)-(0,0.06cm)$) rectangle ($(o4)+(\sw,0.06cm)$);
  \node[anchor=west] at ($(o4)+(\sw+\lsp,0)$) {VT Certified Bounds};
\end{tikzpicture}%
}

\begin{tabular}{@{}c@{\hspace{1pt}}c@{\hspace{1pt}}c@{\hspace{1pt}}c@{}}
\small\textbf{power (6D)} &
\small\textbf{gas (8D)} &
\small\textbf{hepmass (21D)} &
\small\textbf{miniboone (43D)} \\[2pt]
\includegraphics[width=0.25\textwidth, trim=10 10 10 49pt, clip]%
  {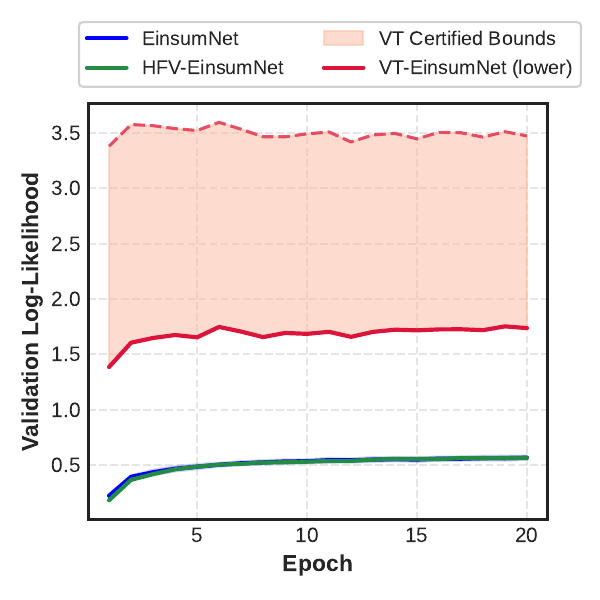} &
\includegraphics[width=0.25\textwidth, trim=10 10 10 49pt, clip]%
  {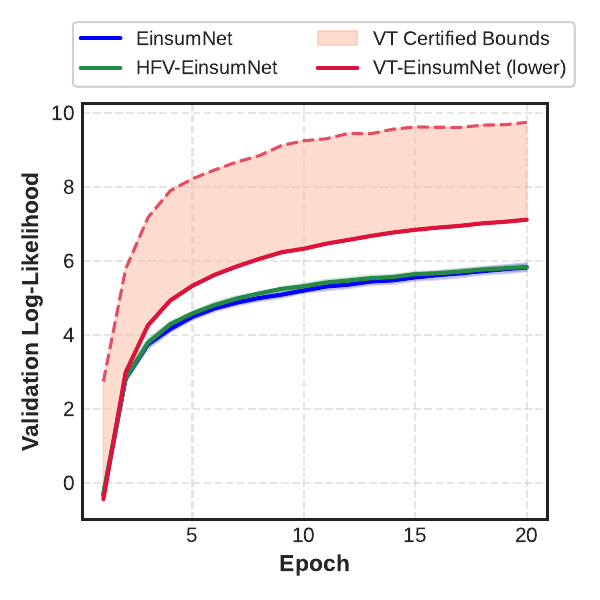} &
\includegraphics[width=0.25\textwidth, trim=10 10 10 49pt, clip]%
  {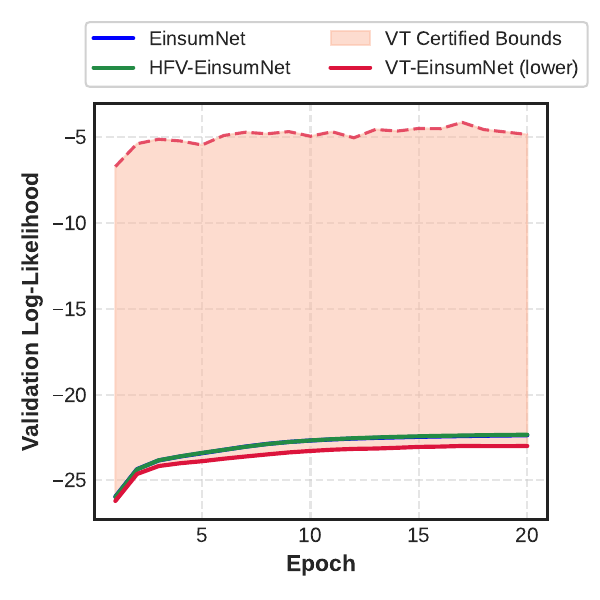} &
\includegraphics[width=0.25\textwidth, trim=10 10 10 49pt, clip]%
  {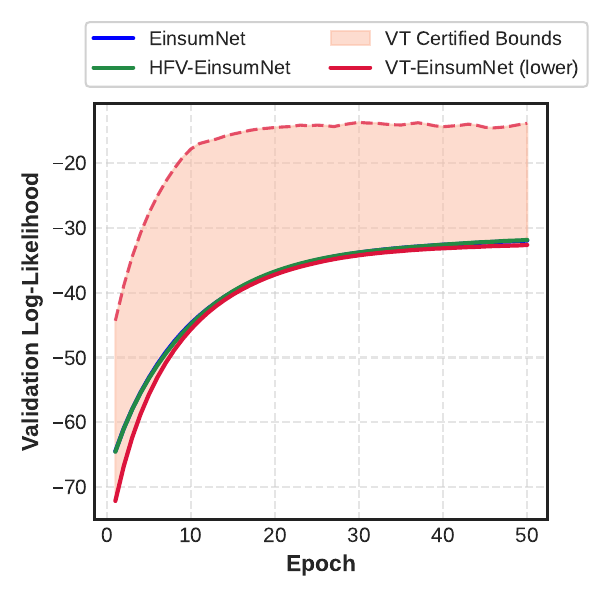} \\
\end{tabular}
\caption{\textbf{Learning curves on the UCI tabular datasets.} Validation log-likelihood across epochs on UCI tabular
benchmarks (EinsumNet family), ordered by increasing dimensionality.
VT models report a \emph{certified} lower bound while baselines and HFV use exact inference. 
}
\label{fig:uci_curves}
\end{figure*}

Figure~\ref{fig:learning_curves} shows validation learning curves of EinsumNet and HCLT on the 2D spiral dataset. For VT we plot the certified \textbf{lower bound} (solid red) together with the \textbf{certified interval} induced by the partition function $Z\in[Z^{-},Z^{+}]$ (shaded region). As training proceeds and the temperature annealing sharpens the gates, the lower bound increases steadily and the envelope stabilizes reflecting improved fit of local experts and tighter normalization bounds as the learned partitions align with the data support.  HFV remains tractable throughout, and its curve corresponds to exact likelihood under the aligned factorized gating. Overall, these results suggest that VT offers substantial gains while remaining certifiable, and that soft-gate training with temperature annealing yields stable learning dynamics.

\subsection{Higher Dimensional Tabular Datasets}
To assess the scalability of our constructions beyond the synthetic settings, we also consider four standard UCI tabular density estimation benchmarks: \textbf{power} ($D{=}6$), \textbf{gas} ($D{=}8$), \textbf{hepmass} ($D{=}21$), and \textbf{miniboone} ($D{=}43$) ~\citep{papamakarios2017masked, loconte2024subtractive}. Using the same base architecture of $40$ input and sum units and an identical training protocol, we evaluate both EinsumNet and HCLT families using VT and HFV variants. VT models report certified lower and upper bounds without any additional refinement while HFV models use exact inference.
\begin{table}[t]
\caption{Mean test log-likelihood ($\uparrow$) on UCI tabular
benchmarks, averaged over 3 trials. VT rows report certified
lower \textsc{[lo]} and upper \textsc{[up]} bounds without additional
refinement. 
}
\label{tab:uci}
\begin{center}
\setlength{\tabcolsep}{4pt}
\begin{small}
\begin{tabular}{lcccc}
\toprule
Model & power & gas & hepmass & miniboone \\
      & (6D)  & (8D) & (21D)   & (43D)     \\
\midrule
EinsumNet                  & $0.56$ & $5.85$ & $-22.35$ & $-32.24$ \\
HFV-EinsumNet              & $0.56$ & $5.86$ & $\bm{-22.32}$ & $\bm{-32.18}$ \\
VT-EinsumNet \textsc{[lo]} & $\bm{1.73}$ & $\bm{7.13}$ & $-22.98$ & $-32.82$ \\
VT-EinsumNet \textsc{[up]} & $3.46$ & $9.76$ & $-4.82$ & $-14.07$ \\
\midrule
HCLT                       & $0.55$ & $7.22$ & $-20.36$ & $-25.43$ \\
HFV-HCLT                   & $0.55$ & $7.07$ & $-20.35$ & $\bm{-25.36}$ \\
VT-HCLT \textsc{[lo]}      & $\bm{5.10}$ & $\bm{10.57}$ & $\bm{-19.10}$ & $-26.46$ \\
VT-HCLT \textsc{[up]}      & $6.77$ & $13.17$ & $-1.61$ & $-8.95$ \\
\bottomrule
\end{tabular}
\end{small}
\end{center}
\vskip -0.1in
\end{table}

Table~\ref{tab:uci} reports mean test log-likelihoods averaged over 3 trials. VT certified lower bounds exceed or closely match the exact likelihoods of corresponding baselines on the majority of datasets. VT-HCLT improves over the HCLT-baselines on \textbf{power}, \textbf{gas}, and \textbf{hepmass}, and VT-EinsumNet improves over its baseline on \textbf{power} and \textbf{gas}.  But since these are lower bounds, the true VT likelihoods can be expected to be higher still. In higher-dimensions, we also observe that the gap between lower and upper bounds widens. This is expected from the conservative inner/outer box construction and is consistent with the dimension dependence suggested by Theorem~\ref{thm:monotone_tightening}. 
Importantly, even the lower bound remains competitive or superior, confirming that the framework extends meaningfully beyond the 2D/3D synthetic settings.




\section{Conclusion}
We developed a principled foundation for introducing geometric awareness into PCs, by replacing constant sum-node weights with geometry driven gates induced via Voronoi tessellations. Our key message is that {\em retaining tractable inference requires geometric alignment with the circuits factorization}, as general Voronoi partitioning breaks recursive marginalization. We addressed this tension in two ways: first, we developed a theoretically grounded certified approximate inference framework that preserved expressivity while providing reliable inference. Second, we identified structural properties on the tessellation under which tractable inference was recovered. Finally, we presented a soft relaxation with convergence guarantees that enabled stable gradient based learning, and empirically validated the resulting model on synthetic manifolds and tabular benchmarks. Future work includes extending the framework to learned embeddings,  developing tighter certification schemes via adaptive decompositions and approximate Voronoi diagrams for scaling to higher dimensional data, and leveraging the geometric awareness for continual learning, controlled generation and interpretable anomaly detection using PCs.

\section*{Acknowledgements}
 The authors gratefully acknowledge the generous support by the AFOSR award FA9550-23-1-0239, the ARO award W911NF2010224 and the DARPA Assured Neuro Symbolic Learning and Reasoning (ANSR) award HR001122S0039.

 \section*{Impact Statement}
This work develops theoretical foundations for geometry-aware probabilistic circuits, a class of tractable generative models that support exact and efficient probabilistic inference. The primary contributions are mathematical: formalizing an incompatibility between geometric routing and tractable inference, and identifying conditions under which it can be resolved. As such, the direct societal impact is limited at this stage.  But on the positive side, more expressive tractable models can benefit applications that require reliable uncertainty quantification, such as medical diagnosis, scientific modeling, and safety-critical decision making, where the guarantees of exact inference are essential. The explicit geometric structure introduced by our approach also improves interpretability, as routing decisions correspond to identifiable regions of the input space, and can lead to auditability in high-stakes settings.

\bibliography{references}
\bibliographystyle{icml2026}

\appendix

\clearpage

\onecolumn
\section{Proofs of Main Results}
\label{app:proofs}

\subsection{Intractability of Voronoi-Gated PCs}

We begin by establishing why unconstrained Voronoi gating breaks tractable inference, even in the simplest case of a single sum node with fully factorized experts.
\begin{proposition}[Single-Layer Intractability]
\label{prop:single_layer_full}
Let $f(\x)=\sum_{k=1}^K g_k(\x)\,\pi_k\,p_k(\x)$ be a Voronoi-gated sum node over variables $\X=\{X_1,\ldots,X_D\}$ where $g_k(\x)=\mathbb{I}[\x\in V_k]$ with Voronoi cells $\{V_k\}$ induced by centroids $\{\cc_1,\ldots,\cc_K\}\subset\R^D$. Suppose each expert is fully factorized: $p_k(\x)=\prod_{i=1}^D p_k^{(i)}(x_i)$. Then the partition function
$$Z=\int f(\x)\,d\x=\sum_{k=1}^K \pi_k \int_{V_k} p_k(\x)\,d\x$$
requires integrating $p_k$ over Voronoi cells $V_k$, which are convex polytopes with oblique boundaries. In general, the integral $\int_{V_k}\prod_i p_k^{(i)}(x_i)\,d\x$ does not factor into a product of one-dimensional integrals.
\end{proposition}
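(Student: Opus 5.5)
The plan has two parts: first verify the displayed identity for $Z$, then exhibit an explicit instance in which the cell-restricted integral provably does not factor into one-dimensional integrals.

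\emph{The identity for $Z$.} Each $f_k := g_k\,\pi_k\,p_k$ is non-negative and measurable, so Tonelli lets us exchange the finite sum and the integral:
\[
\int f(\x)\,d\x=\sum_k \pi_k\int \mathbb{I}[\x\in V_k]\,p_k(\x)\,d\x=\sum_k \pi_k\int_{V_k}p_k(\x)\,d\x .
\]
The Voronoi cells overlap only on boundaries. These are finite unions of hyperplane pieces and have Lebesgue measure zero, so the double counting in Definition~\ref{def:vt_sum} does not affect $Z$. Writing $V_k=\bigcap_{j\ne k}\{\x:(\cc_j-\cc_k)^\top\x\le\tfrac12(\|\cc_j\|^2-\|\cc_k\|^2)\}$ shows that $V_k$ is a convex polyhedron. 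Its facet normals $\cc_j-\cc_k$ are in general not coordinate axes.

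\emph{Failure of factorization.} Here ``in general'' is read as ``there exist configurations.'' Take $D=2$, $K=2$, $\cc_1=(0,0)$, $\cc_2=(1,1)$, so that $V_1=\{x_1+x_2\le 1\}$. Let $p_1^{(1)}=p_1^{(2)}=\mathrm{Unif}[0,1]$. Then $\int_{V_1}p_1=\tfrac12$, the area of the triangle below the diagonal in the unit square.

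A one-dimensional factorization would mean the indicator $\mathbb{I}[x_1+x_2\le1]$ agrees a.e.\ on $[0,1]^2$ with a product $h_1(x_1)h_2(x_2)$, or more generally that the integral equals $\int_{A_1}p^{(1)}\int_{A_2}p^{(2)}$ for sets $A_1,A_2$ depending only on the cell. I would rule out the first form directly.
\begin{itemize}
\item An indicator that equals a product a.e.\ must be the indicator of a rectangle $A_1\times A_2$ a.e.
\item The triangle $\{x_1+x_2\le 1\}\cap[0,1]^2$ is not a.e.\ a measurable rectangle. If $T=A_1\times A_2$ a.e., its sections $\{x_2:(x_1,x_2)\in T\}=[0,1-x_1]$ would equal $A_2$ for a.e.\ $x_1\in A_1$. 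But these sections vary with $x_1$ and have distinct measures, a contradiction.
\end{itemize}

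To make the obstruction sharper, I would add a computational point. For Gaussian experts, $\int_{V_k}\mathcal N(\x;\mu,\Sigma)\,d\x$ over a halfplane with oblique normal is $\Phi$ of a linear combination of coordinates. This cannot be written as $\Phi(a)\Phi(b)$ uniformly in the parameters. For larger $K$ and $D$, the paper's cited $\#P$-hardness of polytope volume (taking uniform experts) shows that no efficient factorized evaluation should be expected. I would cite this only as supporting evidence, not prove it.

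The main obstacle is making ``does not factor'' precise without over-claiming. I would state it as: there is no rectangle decomposition of the cell, so Fubini yields iterated integrals whose inner limits depend on the other coordinates, rather than a product of one-dimensional integrals. The uniform-square counterexample would be the formal certificate.
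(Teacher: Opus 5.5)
Your proposal is correct and follows essentially the same route as the paper. Both write $V_k$ as an intersection of half-spaces with oblique normals $\cc_j-\cc_k$, then exhibit a two-centroid planar example (the paper uses $\cc_1=(0,1)$, $\cc_2=(1,0)$, giving $V_1=\{x_2>x_1\}$) in which the cell is not a Cartesian product, so Fubini only yields an iterated integral whose inner limit depends on the outer variable. Your sections argument that the triangle is not a.e.\ a measurable rectangle, and your Tonelli and measure-zero-boundary justification of the formula for $Z$, make rigorous what the paper only asserts.
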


\begin{proof}
Each Voronoi cell $V_k$ is defined as the region of points closer to centroid $\cc_k$ than to any other centroid:
$$V_k = \bigl\{\x\in\R^D : \|\x-\cc_k\|^2 \le \|\x-\cc_j\|^2 \text{ for all } j\bigr\}.$$

Equivalently, $V_k$ is the intersection of $K-1$ half-spaces:
$$V_k = \bigcap_{j\neq k} H_{kj}^-,$$
where $H_{kj}^-=\{\x : (\cc_j-\cc_k)^\top\x \le \frac{1}{2}(\|\cc_j\|^2-\|\cc_k\|^2)\}$.

The boundary hyperplane between cells $V_k$ and $V_j$ is
$$H_{kj}=\Bigl\{\x:(\cc_j-\cc_k)^\top\x=\tfrac{1}{2}(\|\cc_j\|^2-\|\cc_k\|^2)\Bigr\}.$$

For generic centroids (i.e., centroids not satisfying special symmetries), the normal vector $\cc_j-\cc_k$ has multiple nonzero components. This means the boundary hyperplane is not aligned with any coordinate axis, it is \emph{oblique} with respect to the standard coordinate system.
Consider the integral:

$$\int_{V_k} \prod_{i=1}^D p_k^{(i)}(x_i)\,d\x.$$

Even though the integrand factors across variables, the region $V_k$ does not. Specifically, $V_k$ is not a Cartesian product of the form $I_1\times\cdots\times I_D$ where each $I_i\subseteq\R$ is an interval. The oblique boundaries couple the coordinates: whether a point $\x$ lies in $V_k$ depends on the joint configuration of all coordinates, not on independent conditions on each coordinate separately.

To apply Fubini's theorem and factor the integral as $\prod_i \int_{I_i} p_k^{(i)}(x_i)\,dx_i$, we would need the integration domain to be a product of univariate domains. Since $V_k$ is not such a product, Fubini does not reduce the $D$-dimensional integral to a product of $D$ one-dimensional integrals.

\textbf{Concrete Example (2D)}: Let $D=2$ with centroids $\cc_1=(0,1)$ and $\cc_2=(1,0)$. The Voronoi boundary between $V_1$ and $V_2$ satisfies
$$\|{(x_1,x_2)-(0,1)}\|^2 = \|{(x_1,x_2)-(1,0)}\|^2,$$
which simplifies to $x_1^2+(x_2-1)^2 = (x_1-1)^2+x_2^2$, yielding $x_1=x_2$.

Thus $V_1=\{(x_1,x_2):x_2>x_1\}$ (the region above the diagonal). Now consider
$$\int_{V_1} p_1^{(1)}(x_1)p_1^{(2)}(x_2)\,dx_1\,dx_2.$$

Integrating first over $x_2$ for fixed $x_1$:
$$\int_{V_1} p_1^{(1)}(x_1)p_1^{(2)}(x_2)\,dx_1\,dx_2 \\
= \int_{-\infty}^{\infty} p_1^{(1)}(x_1) \left(\int_{x_1}^{\infty} p_1^{(2)}(x_2)\,dx_2\right) dx_1.$$

The inner integral has lower limit $x_1$, which depends on the outer integration variable. This prevents factorization into $\left(\int p_1^{(1)}(x_1)dx_1\right)\left(\int p_1^{(2)}(x_2)dx_2\right)$, and the dependence persists regardless of how simple the univariate densities $p_1^{(i)}$ are.
Thus, the oblique geometry of Voronoi cells couples variables in a way that is fundamentally incompatible with the factorization exploited by decomposable probabilistic circuits.
\end{proof}

\subsubsection{Deep Circuits and Constraint Accumulation}

\begin{proposition}[Constraint Accumulation]
\label{prop:deep_full}
In a deep circuit of depth $L$ with Voronoi gating at multiple levels, each contribution is integrated over a region determined by the active Voronoi cell at every ancestor sum node. The feasible set is an intersection of Voronoi regions restricted to different scopes.
\end{proposition}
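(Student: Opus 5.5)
The plan is to expand the circuit output into a sum over induced sub-trees, which is the usual ``unrolling'' of a smooth decomposable PC, and then track which indicator functions multiply each term.

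\emph{Step 1: unrolling.} Proceed by structural induction from the root. At a product node $n$ with children $c_1,\ldots,c_m$, the output is $f_n(\x)=\prod_j f_{c_j}(\x_{\scope(c_j)})$. At a Voronoi-gated sum node $S$ with scope $\X_S$, the output is $f_S=\sum_k \mathbb{I}[\x_S\in V^S_k]\,\pi^S_k\,p^S_k$. Distributing all sums over all products, we can write
\[
f_{\Ccal}(\x)=\sum_{\tau}\Bigl(\prod_{S\in\tau}\pi^S_{k_S}\,\mathbb{I}[\x_{\scope(S)}\in V^S_{k_S}]\Bigr)\prod_{\ell\in\tau}p_\ell(x_{\scope(\ell)}),
\]
where $\tau$ ranges over induced trees. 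Each induced tree selects one child $k_S$ at every sum node it visits and all children at every product node it visits. The induction step only needs the product rule and the sum rule, and the indicators pass through as ordinary multiplicative factors.

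\emph{Step 2: integrating one term.} Integrating term by term, the contribution of $\tau$ equals
\[
\Bigl(\prod_S\pi^S_{k_S}\Bigr)\int_{F_\tau}\prod_\ell p_\ell\,d\x,
\qquad
F_\tau=\bigcap_{S\in\tau}\bigl(V^S_{k_S}\times\R^{D-|\X_S|}\bigr).
\]
The cylinder sets $V^S_{k_S}\times\R^{D-|\X_S|}$ are the cell for $S$ lifted to the full space. This $F_\tau$ is exactly the intersection of Voronoi regions restricted to different scopes that the proposition asserts. Standard sum nodes contribute no indicator, so they can be treated as having the trivial region $\R^{|\X_S|}$.

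\emph{Step 3: the ancestors of a contribution.} Every gated sum node in $\tau$ lies on the root-to-leaf path of some leaf of $\tau$. So the region attached to a leaf's contribution is fixed by the active cells at its ancestor sum nodes, which is the ``every ancestor'' clause of the statement. Along one path the scopes are nested, $\scope(S_1)\supseteq\scope(S_2)\supseteq\cdots$, so an ancestor's oblique constraints involve variables that descendant leaves share with leaves in other branches. To explain why this blocks the recursion, we can reuse the $x_1=x_2$ example from Proposition~\ref{prop:single_layer_full}: placing it at the root forces $F_\tau$ to be non-rectangular across the product split.

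\emph{Main obstacle.} The only delicate point is the bookkeeping: stating the expansion precisely when the circuit is a DAG rather than a tree. I would handle this by unfolding the DAG into its equivalent tree, which has the same function, before performing the expansion. The inductive manipulation of indicators is otherwise routine.
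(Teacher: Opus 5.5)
Your argument is correct, but it proceeds differently from the paper's. The paper argues top-down along a single path. The root selects $V_{k_{\mathrm{root}}}$ over $\X$, an internal node selects $V_{k_S}$ over $\X_S$, and after marginalizing the variables outside $S$ the root constraint is said to ``project'' onto $\X_S$, giving an ``intersection of projected polytopes.'' This is illustrated by the two-level example with $x_1+x_2+x_3+x_4\le c_1$ at the root and $x_1-x_2\le c_2$ at the left child. You instead unroll the circuit into induced trees and obtain an exact identity, $Z=\sum_\tau\bigl(\prod_S\pi^S_{k_S}\bigr)\int_{F_\tau}\prod_\ell p_\ell\,d\x$, where $F_\tau$ is an intersection of cylinder lifts of the active cells.

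Your route gives a precise meaning to ``contribution'' and ``feasible set.'' In the full space the relevant sets are preimages under coordinate projections, not projections. After integrating out the variables outside $S$, an ancestor's oblique constraint survives on $\X_S$ only as a non-indicator weight (in the paper's example, a function of $x_1+x_2$), not as a projected polytope. That is what the paper's word ``context-dependent'' is gesturing at.

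The paper's path-wise view, in turn, speaks directly to why the bottom-up recursion breaks at an intermediate node. Your expansion has exponentially many terms and addresses that point only through the heuristic Step 3. Its example, a single oblique gate at the root above a product split, shows non-rectangularity but not an interaction between an ancestor's constraint and a descendant's.

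The DAG bookkeeping you flag as the main obstacle is lighter than you suggest. Two distinct root-to-node paths within one induced sub-circuit would have to diverge at a product node into children with disjoint scopes. So by decomposability the induced sub-circuits of a DAG are already trees, and no unfolding is needed.
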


At the root, routing selects $V_{k_{\text{root}}}$ over full scope $\X$. At an internal node with scope $S\subset\X$, routing selects $V_{k_S}$ over $\X_S$. After marginalizing out variables not in $S$, the root constraint projects onto $\X_S$ in a way that generally doesn't align with $V_{k_S}$. The resulting region is an intersection of projected polytopes, which can be arbitrarily complex.

\begin{example}[Two-Level Interaction]
Consider vtree root scope $\{X_1,X_2,X_3,X_4\}$ with children $\{X_1,X_2\}$ and $\{X_3,X_4\}$.
\begin{itemize}
\item Root gate imposes $x_1+x_2+x_3+x_4 \le c_1$.
\item Left child gate imposes $x_1-x_2 \le c_2$.
\end{itemize}

To integrate over $\{X_1,X_2\}$, we need both the local constraint $x_1-x_2 \le c_2$ and the projected root constraint. After marginalizing out $\{X_3,X_4\}$, the root constraint becomes context-dependent, preventing independent marginalization.
\end{example}

Thus, in deep circuits, unconstrained Voronoi gating introduces hierarchical geometric constraints that interact in complex ways after marginalization, compounding intractability. HFV-PCs avoid this by ensuring geometric constraints factor along circuit variable partitions at every level.

\subsection{Geometric Alignment and Tractability Recovery}

The preceding result shows that unconstrained Voronoi cells break tractability. We now prove that alignment with the circuit's variable decomposition is sufficient to restore it.

\begin{theorem}[Alignment Enables Factorization]
\label{thm:alignment_full}
Consider a gated sum node $f(\x_S)=\sum_{k=1}^K g_k(\x_S)\,\pi_k\,p_k(\x_S)$ over scope $\X_S=\X_{S_1}\sqcup\X_{S_2}$ (disjoint partition). Suppose each gating region $R_k$ satisfies the alignment condition: there exist $R_k^{(1)}\subseteq\R^{|\X_{S_1}|}$ and $R_k^{(2)}\subseteq\R^{|\X_{S_2}|}$ such that
$$R_k = R_k^{(1)} \times R_k^{(2)},$$
and suppose each expert factors as $p_k(\x_S)=p_k^{(1)}(\x_{S_1})p_k^{(2)}(\x_{S_2})$. Then the partition function decomposes:
$$\int f(\x_S)\,d\x_S = \sum_{k=1}^K \pi_k \left(\int_{R_k^{(1)}} p_k^{(1)}(\x_{S_1})\,d\x_{S_1}\right) \left(\int_{R_k^{(2)}} p_k^{(2)}(\x_{S_2})\,d\x_{S_2}\right).$$
\end{theorem}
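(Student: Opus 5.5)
The argument is a direct computation: linearity of the integral, then factorization of the indicator function, then Tonelli's theorem. Since every term is non-negative, Tonelli is the right tool rather than Fubini.

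\textbf{Step 1: pull the finite sum out of the integral.} Write $g_k(\x_S)=\mathbb{I}[\x_S\in R_k]$. Because the sum over $k$ is finite and each summand $g_k\,\pi_k\,p_k$ is a non-negative measurable function, linearity gives
$\int f(\x_S)\,d\x_S=\sum_{k=1}^K \pi_k\int \mathbb{I}[\x_S\in R_k]\,p_k(\x_S)\,d\x_S$.
This holds in $[0,\infty]$ with no integrability assumption.

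\textbf{Step 2: use alignment to factor the integrand.} By the alignment condition (Definition~\ref{def:alignment}), $R_k=R_k^{(1)}\times R_k^{(2)}$. Hence
$\mathbb{I}[\x_S\in R_k]=\mathbb{I}[\x_{S_1}\in R_k^{(1)}]\,\mathbb{I}[\x_{S_2}\in R_k^{(2)}]$.
Combining this with the expert factorization $p_k=p_k^{(1)}p_k^{(2)}$, the integrand becomes a product $h_k^{(1)}(\x_{S_1})\,h_k^{(2)}(\x_{S_2})$, where $h_k^{(i)}=\mathbb{I}[\cdot\in R_k^{(i)}]\,p_k^{(i)}$. Both factors are non-negative.

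\textbf{Step 3: apply Tonelli.} Identify $\R^{|\X_S|}$ with $\R^{|\X_{S_1}|}\times\R^{|\X_{S_2}|}$, after reordering coordinates. Lebesgue measure on $\R^{|\X_S|}$ is then the product measure of the two factor measures. Tonelli's theorem gives
$\int h_k^{(1)}h_k^{(2)}=\bigl(\int h_k^{(1)}\bigr)\bigl(\int h_k^{(2)}\bigr)$,
and these factors are exactly $\int_{R_k^{(1)}}p_k^{(1)}$ and $\int_{R_k^{(2)}}p_k^{(2)}$. Summing over $k$ with the weights $\pi_k$ gives the claimed identity.

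\textbf{Main obstacle: measurability.} The only step that needs care is confirming the measure-theoretic hypotheses. First, $R_k^{(i)}$ must be measurable so that the indicators are measurable functions. This holds for Voronoi cells, which are closed convex polytopes, and for boxes; in general it should be stated as a standing assumption. Second, one should note that the convention $0\cdot\infty=0$ applies if some factor integral is infinite. Since each $p_k^{(i)}$ is a density, or the output of a sub-circuit with finite integral, every factor is finite, so the right-hand side is a well-defined finite number.
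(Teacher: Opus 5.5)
Your proposal is correct and follows the paper's proof essentially step for step: pull the finite sum out of the integral, use $R_k=R_k^{(1)}\times R_k^{(2)}$ together with $p_k=p_k^{(1)}p_k^{(2)}$ to factor the integrand, and split the integral over the product space. Invoking Tonelli rather than Fubini (justified by non-negativity) and adding an explicit measurability assumption on the $R_k^{(i)}$ are small rigor refinements the paper leaves implicit, not a different route.
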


\begin{proof}
Start with the definition of $f$ and substitute the gate and expert factorizations:
\begin{align*}
\int f(\x_S)\,d\x_S 
&= \int \sum_{k=1}^K \mathbb{I}[\x_S\in R_k]\,\pi_k\,p_k(\x_S)\,d\x_S\\
&= \sum_{k=1}^K \pi_k \int_{R_k} p_k(\x_S)\,d\x_S.
\end{align*}

By the alignment condition, $\x_S\in R_k$ if and only if $\x_{S_1}\in R_k^{(1)}$ and $\x_{S_2}\in R_k^{(2)}$ simultaneously. Since $\X_{S_1}$ and $\X_{S_2}$ are disjoint, we can write $d\x_S = d\x_{S_1}\,d\x_{S_2}$ and apply Fubini's theorem:
\begin{align*}
\int_{R_k} p_k(\x_S)\,d\x_S
&= \int_{R_k^{(1)}\times R_k^{(2)}} p_k^{(1)}(\x_{S_1})p_k^{(2)}(\x_{S_2})\,d\x_{S_1}\,d\x_{S_2}\\
&= \left(\int_{R_k^{(1)}} p_k^{(1)}(\x_{S_1})\,d\x_{S_1}\right) \left(\int_{R_k^{(2)}} p_k^{(2)}(\x_{S_2})\,d\x_{S_2}\right).
\end{align*}

Substituting back completes the proof.
\end{proof}

\begin{remark}
This result generalizes immediately to $m$-way partitions $\X_S=\bigsqcup_{i=1}^m\X_{S_i}$ with regions $R_k=\prod_{i=1}^m R_k^{(i)}$ and experts $p_k(\x_S)=\prod_i p_k^{(i)}(\x_{S_i})$. The key requirement is that both the gating regions and the expert distributions respect the same factorization structure.
\end{remark}

\subsection{Certified Approximate Inference}
\label{app:certified}

\subsubsection{Inner Box Construction}

\begin{proposition}[Conservative Inner Box]
\label{prop:inner_box_full}
Let $\{V_k\}$ be Voronoi cells induced by centroids $\{\cc_k\}\subset\R^d$. Define the nearest-centroid distance
$$\delta_k := \min_{j\neq k}\|\cc_k-\cc_j\|_2.$$
Then the axis-aligned box centered at $\cc_k$ with radius
$$r = \frac{\delta_k}{2\sqrt{d}}$$
satisfies $B_k^- := \prod_{i=1}^d [\cc_{k,i}-r,\cc_{k,i}+r] \subseteq V_k$.
\end{proposition}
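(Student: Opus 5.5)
The plan is to show that every point of $B_k^-$ lies in the Euclidean ball of radius $\delta_k/2$ around $\cc_k$, and that this ball sits inside $V_k$. The two inclusions are then chained.

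For the first inclusion, take any $\x\in B_k^-$. Each coordinate satisfies $|x_i-\cc_{k,i}|\le r$. Hence
$$\|\x-\cc_k\|_2^2=\sum_{i=1}^d (x_i-\cc_{k,i})^2\le d r^2=\frac{\delta_k^2}{4},$$
so $\|\x-\cc_k\|_2\le \delta_k/2$. This is simply the fact that an $\ell_\infty$ ball of radius $r$ is contained in the $\ell_2$ ball of radius $r\sqrt d$.

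For the second inclusion, fix $j\neq k$ and apply the triangle inequality:
$$\|\x-\cc_j\|_2\ge \|\cc_k-\cc_j\|_2-\|\x-\cc_k\|_2\ge \delta_k-\frac{\delta_k}{2}=\frac{\delta_k}{2}\ge \|\x-\cc_k\|_2.$$
Squaring both sides, which is valid since they are nonnegative, gives $\|\x-\cc_k\|_2^2\le\|\x-\cc_j\|_2^2$ for every $j$. This is exactly the defining condition of $V_k$ in \eqref{eq:voronoi_def}. Since the closed cell uses non-strict inequalities, ties at the boundary cause no difficulty.

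Alternatively, one could argue through the half-space representation. The distance from $\cc_k$ to the bisecting hyperplane $H_{kj}$ is $\|\cc_j-\cc_k\|/2\ge\delta_k/2$, so the ball of radius $\delta_k/2$ lies in every half-space $H_{kj}^-$. I would only mention this in passing.

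There is no real obstacle in this argument. The only point needing care is the factor $\sqrt d$ in the conversion between the box and the ball. It is precisely this factor that makes the construction conservative in high dimension, and I would remark on that after the proof.
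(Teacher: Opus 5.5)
Your proof is correct and follows the paper's own argument. The paper also bounds $\|\x-\cc_k\|_2\le\sqrt{d}\,r=\delta_k/2$ from the coordinate-wise box constraints, then uses the triangle inequality to get $\|\x-\cc_j\|_2\ge\delta_k/2$ for every $j\neq k$, and concludes $\x\in V_k$. Your note that squaring is harmless, since $V_k$ is defined with squared distances, is a small detail the paper leaves implicit.
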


\begin{proof}
Take any point $\x\in B_k^-$. By construction, $|x_i-\cc_{k,i}|\le r$ for all $i$, so
$$\|\x-\cc_k\|_2 = \sqrt{\sum_{i=1}^d (x_i-\cc_{k,i})^2} \le \sqrt{d\cdot r^2} = \sqrt{d}\,r = \frac{\delta_k}{2}.$$

Now consider any other centroid $\cc_j$ with $j\neq k$. By the triangle inequality:
$$\|\x-\cc_j\|_2 \ge \|\cc_k-\cc_j\|_2 - \|\x-\cc_k\|_2 \ge \delta_k - \frac{\delta_k}{2} = \frac{\delta_k}{2}.$$

Thus $\|\x-\cc_k\|_2 \le \frac{\delta_k}{2} \le \|\x-\cc_j\|_2$ for all $j\neq k$, which means $\x$ is closer to $\cc_k$ than to any other centroid. By the definition of Voronoi cells, $\x\in V_k$. Since this holds for every $\x\in B_k^-$, we have $B_k^-\subseteq V_k$.
\end{proof}

\begin{remark}
The factor $1/(2\sqrt{d})$ ensures containment but is conservative. A tighter construction can be obtained by optimizing the per-dimension radii $r_i$ subject to the half-space constraints defining $V_k$, but this requires solving a constrained optimization problem for each cell. The simple formula above trades tightness for computational convenience and robustness.
\end{remark}

\begin{remark}[Extension to bounded domains]
If the domain is a box $\Omega=\prod_i[\ell_i,u_i]$, intersect: $B_k^- \leftarrow B_k^- \cap \Omega$. This preserves $B_k^-\subseteq V_k\cap\Omega$ because intersection with $\Omega$ only removes points outside the domain.
\end{remark}

\subsubsection{Anytime Refinement: Convergence Analysis}

\begin{theorem}[Monotone Tightening and Convergence]
Let $\mathcal{P}_t$ denote the partition after $t$ refinement steps with bounds $(Z_t^-, Z_t^+)$. Then  (i) $Z_t^- \le Z \le Z_t^+$ $\forall \ t$
(ii) $Z_t^- \le Z_{t+1}^-$ and $Z_{t+1}^+ \le Z_t^+$ for all $t$ 
(iii) if refinement drives the boundary volume $\mu(V_k^{+}(\mathcal{P}_t) \setminus V_k^{-}(\mathcal{P}_t)) \to 0$ for each cell, then $\lim_{t\to\infty} Z_t^{\pm} = Z$. Under uniform refinement, the gap scales as $Z_t^+ - Z_t^- = O(2^{-t/d})(Z_0^+ - Z_0^-)$, requiring depth $O(d\log(1/\epsilon))$ to achieve target gap $\epsilon$.
\end{theorem}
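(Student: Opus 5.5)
The argument proceeds in three stages: establish the sandwich property at every refinement step, use nesting of the cell approximations under bisection to get monotonicity, and control the gap by the boundary volume. Throughout, we rely on two structural facts. First, every bound the procedure produces at the root is a nonnegative combination of box integrals: Theorem~\ref{thm:bound_propagation} builds node bounds from sums with nonnegative weights $\pi$ and products of nonnegative quantities. Second, all integrands are nonnegative, so every such map is monotone in the underlying regions. We also assume, as in the algorithm, that integration is over the bounded domain $\Omega$, or that the mass outside $\Omega$ is accounted for in the same way at every step.

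\textbf{Part (i).} For each Voronoi cell and each $t$ we have
\[
V_k^-(\mathcal{P}_t)\subseteq V_k\cap\Omega\subseteq V_k^+(\mathcal{P}_t).
\]
This holds because an \textsc{Inside} box lies in $V_k$, and any box meeting $V_k$ is not labelled \textsc{Outside}. The membership tests are sound since the extremes of a linear functional over a box are attained at the stated vertices. Lemma~\ref{lem:cell_bounds} applies verbatim with unions of disjoint boxes in place of single boxes, using additivity of the integral over the disjoint partition, so each gated node receives valid local bounds. Induction bottom-up through Theorem~\ref{thm:bound_propagation} then gives $I_n^-\le I_n\le I_n^+$ at every node. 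This uses that sums with nonnegative weights and products of nonnegative numbers preserve interval inclusion, and yields $Z_t^-\le Z\le Z_t^+$.

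\textbf{Part (ii).} A bisection replaces a box $B^*$ by two sub-boxes $B_L\sqcup B_R=B^*$. Only cells for which $B^*$ was \textsc{Boundary} can change label, and for these the new labels are monotone:
\begin{itemize}
\item a child of an \textsc{Inside} box is \textsc{Inside}, and a child of an \textsc{Outside} box is \textsc{Outside}, because sub-boxes inherit containment and disjointness;
\item a child of a \textsc{Boundary} box may become any label.
\end{itemize}
Hence $V_k^-(\mathcal{P}_t)\subseteq V_k^-(\mathcal{P}_{t+1})$ and $V_k^+(\mathcal{P}_{t+1})\subseteq V_k^+(\mathcal{P}_t)$. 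By nonnegativity of $p_k$, the local integrals move monotonically. Monotonicity of the bottom-up propagation, as in part (i), carries this to the root.

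\textbf{Part (iii).} The local gap at a gated node is
\[
\pi_k\int_{V_k^+\setminus V_k^-}p_k.
\]
Assuming each expert $p_k$ is bounded on $\Omega$, say by $M$, this is at most $\pi_k M\,\mu(V_k^+\setminus V_k^-)$. Alternatively, absolute continuity of the integral gives convergence without a uniform bound. We then propagate gaps through the circuit. At a product node,
\[
\prod_j I_{c_j}^+-\prod_j I_{c_j}^- \le \sum_j \bigl(I_{c_j}^+-I_{c_j}^-\bigr)\prod_{i\ne j}\max\bigl(I_{c_i}^+,I_{c_i}^-\bigr),
\]
which is a telescoping bound. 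The initial upper bounds $I^+_{c,0}$ stay bounded by part (ii), so the root gap is a finite Lipschitz combination of local gaps. It therefore tends to $0$, and part (i) squeezes both $Z_t^\pm\to Z$.

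\textbf{Rate under uniform refinement.} With uniform refinement, each round of $d$ bisections halves every side length. After depth $t$ the boxes have diameter $O(2^{-t/d})$. Boundary boxes lie within this distance of $\partial V_k$, a polytope boundary with finite $(d-1)$-dimensional measure, so their total volume is $O(2^{-t/d})$. Setting $2^{-t/d}\le\epsilon$ gives $t=O(d\log(1/\epsilon))$.

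I expect this rate statement to be the main obstacle. The constant depends on the surface area of the cells and on the circuit's Lipschitz factors, and the relation to $Z_0^+-Z_0^-$ holds only up to such constants. I would therefore state it as an order-of-magnitude bound with those constants absorbed, matching the statement's ``roughly''.
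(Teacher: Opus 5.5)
Your proposal is correct and follows essentially the same route as the paper: the sandwich $V_k^-(\mathcal{P}_t)\subseteq V_k\cap\Omega\subseteq V_k^+(\mathcal{P}_t)$ with bottom-up propagation for (i), inheritance of \textsc{Inside}/\textsc{Outside} labels under bisection for (ii), the integral over the undecided region (via a sup-norm bound or absolute continuity) for (iii), and the same informal tubular-neighbourhood argument for the rate. The differences are minor: your telescoping bound at product nodes spells out the propagation step that the paper handles by continuity of multiplication, and your caveat that the factor $(Z_0^+-Z_0^-)$ holds only up to absorbed constants matches the paper's own description of the rate as informal.
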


\begin{proof}
We formalize the refinement scheme and then prove each claim. Fix a Voronoi-gated sum node $n$ with scope $S$ and bounded domain $\Omega_S\subset\R^{|S|}$.
Let $\{V_k\}_{k=1}^K$ be the Voronoi cells intersected with $\Omega_S$ (so each $V_k$ is measurable and $\bigcup_k V_k=\Omega_S$ up to boundaries of measure zero).
At refinement step $t$, we maintain a \emph{disjoint} axis-aligned partition $\mathcal{P}_t$ of $\Omega_S$ into boxes, i.e.,
$\Omega_S=\biguplus_{B\in\mathcal{P}_t} B$ (disjoint union).
For each cell $k$, define the inner/outer approximations induced by $\mathcal{P}_t$:
\begin{align}
V_k^{-}(\mathcal{P}_t)
&:=\bigcup_{B\in\mathcal{P}_t:\,B\subseteq V_k} B,
&
V_k^{+}(\mathcal{P}_t)
&:=\bigcup_{B\in\mathcal{P}_t:\,B\cap V_k\neq\emptyset} B.
\label{eq:Vk_inner_outer}
\end{align}
By construction, $V_k^{-}(\mathcal{P}_t)\subseteq V_k\subseteq V_k^{+}(\mathcal{P}_t)$ for every $t$.

Let $p_k(\x_S)\ge 0$ denote the expert density (subcircuit) attached to cell $k$. Define the node-level cell integrals and their bounds
\begin{align}
I_k := \int_{V_k} p_k(\x_S)\,d\x_S,
\qquad
I_{k,t}^- := \int_{V_k^{-}(\mathcal{P}_t)} p_k(\x_S)\,d\x_S,
\qquad
I_{k,t}^+ := \int_{V_k^{+}(\mathcal{P}_t)} p_k(\x_S)\,d\x_S.
\label{eq:Ik_bounds}
\end{align}
Because $\mathcal{P}_t$ is disjoint and $V_k^{-}(\mathcal{P}_t)$ (resp.\ $V_k^{+}(\mathcal{P}_t)$) is a union of boxes from $\mathcal{P}_t$, we can write these integrals as sums of box integrals (no overlap):
\begin{align*}
I_{k,t}^-=\sum_{B\in\mathcal{P}_t:\,B\subseteq V_k}\textsc{IntegrateBox}(p_k,B),
\\
I_{k,t}^+=\sum_{B\in\mathcal{P}_t:\,B\cap V_k\neq\emptyset}\textsc{IntegrateBox}(p_k,B).
\end{align*}
The node-level mixture integral at $n$ is $I:=\sum_{k=1}^K \pi_k I_k$ and its bounds are
$I_t^-:=\sum_k \pi_k I_{k,t}^-$ and $I_t^+:=\sum_k \pi_k I_{k,t}^+$, where $\pi_k\ge 0$ and $\sum_k\pi_k=1$.
Circuit-level bounds $(Z_t^-,Z_t^+)$ are obtained by propagating node-level bounds upward via the same sum/product rules as in Theorem~\ref{thm:bound_propagation}.

\paragraph{Claim (i): Validity. $Z_t^- \le Z \le Z_t^+$.}
We first establish node-level validity at every Voronoi-gated node $n$.
For each $k$, since $V_k^{-}(\mathcal{P}_t)\subseteq V_k\subseteq V_k^{+}(\mathcal{P}_t)$ and $p_k\ge 0$, monotonicity of integration gives
\[
I_{k,t}^- \le I_k \le I_{k,t}^+.
\]
Multiplying by $\pi_k\ge 0$ and summing over $k$ yields $I_t^- \le I \le I_t^+$ at node $n$.

For standard sum nodes, linearity and nonnegative weights preserve inequalities; for product nodes, decomposability implies integrals factor and multiplying nonnegative bounds preserves inequalities. Thus, by induction in reverse topological order (as in Theorem~\ref{thm:bound_propagation}), every node integral is sandwiched by its bounds, in particular at the root $Z_t^- \le Z \le Z_t^+$.

\paragraph{Claim (ii): Monotonic tightening.}
By definition, $\mathcal{P}_{t+1}$ is obtained from $\mathcal{P}_t$ by refining (bisecting) a subset of boxes and reclassifying.
Refinement preserves disjointness and refines the partition: every box $B'\in\mathcal{P}_{t+1}$ is contained in some box $B\in\mathcal{P}_t$, and every $B\in\mathcal{P}_t$ is the disjoint union of its descendants in $\mathcal{P}_{t+1}$.
We show set inclusions:
\begin{equation}
\label{eq:set_monotone}
V_k^{-}(\mathcal{P}_t)\subseteq V_k^{-}(\mathcal{P}_{t+1}),
\qquad
V_k^{+}(\mathcal{P}_{t+1})\subseteq V_k^{+}(\mathcal{P}_t).
\end{equation}
For the inner sets: take any $B\in\mathcal{P}_t$ with $B\subseteq V_k$. Under refinement, $B$ is either unchanged or replaced by disjoint children $\{B'_r\}\subseteq B$. In either case, every descendant $B'_r$ satisfies $B'_r\subseteq B\subseteq V_k$, hence all of $B$'s mass remains \textsc{Inside} and is included in $V_k^{-}(\mathcal{P}_{t+1})$. Therefore the union of \textsc{Inside} boxes can only expand, proving the first inclusion in \eqref{eq:set_monotone}.

For the outer sets: take any $B'\in\mathcal{P}_{t+1}$ such that $B'\cap V_k\neq\emptyset$. Let $B\in\mathcal{P}_t$ be its (unique) ancestor box with $B'\subseteq B$. Then necessarily $B\cap V_k\neq\emptyset$ as well, hence $B\subseteq V_k^{+}(\mathcal{P}_t)$. Since $B'$ is contained in $B$, we have $B'\subseteq V_k^{+}(\mathcal{P}_t)$, and taking unions over all such $B'$ gives the second inclusion in \eqref{eq:set_monotone}.

Now apply monotonicity of integration (using $p_k\ge 0$) to \eqref{eq:set_monotone}:
\begin{equation*}
    I_{k,t}^-=\int_{V_k^{-}(\mathcal{P}_t)}p_k \le \int_{V_k^{-}(\mathcal{P}_{t+1})}p_k=I_{k,t+1}^- 
\end{equation*}
\begin{equation*}
    I_{k,t+1}^+=\int_{V_k^{+}(\mathcal{P}_{t+1})}p_k \le \int_{V_k^{+}(\mathcal{P}_t)}p_k=I_{k,t}^+.
\end{equation*}
Weighting by $\pi_k\ge 0$ and summing yields node-level monotonicity $I_t^- \le I_{t+1}^-$ and $I_{t+1}^+\le I_t^+$.
Finally, the circuit-level bounds follow by propagating inequalities through the circuit: standard sums preserve monotonicity by linearity with nonnegative weights, and products preserve monotonicity because all integrals are nonnegative and multiplication is monotone in each argument. Hence $Z_t^- \le Z_{t+1}^-$ and $Z_{t+1}^+ \le Z_t^+$.

\paragraph{Claim (iii): Convergence under vanishing boundary volume.}
Fix a Voronoi-gated node $n$ and a cell $k$. Define the \textit{undecided} region
\[
U_{k,t}:=V_k^{+}(\mathcal{P}_t)\setminus V_k^{-}(\mathcal{P}_t),
\]
so that $V_k^{-}(\mathcal{P}_t)\subseteq V_k\subseteq V_k^{+}(\mathcal{P}_t)$ implies
\begin{equation}
\label{eq:Ik_gap_region}
0 \le I_{k,t}^+ - I_{k,t}^- = \int_{U_{k,t}} p_k(\x_S)\,d\x_S.
\end{equation}
Assume that the boundary volume $\mu(U_{k,t})\to 0$ as $t\to\infty$. Since $p_k$ is integrable over $\Omega_S$ (it is a density component on a bounded domain), the Lebesgue integral is absolutely continuous with respect to the measure: for any $\varepsilon>0$ there exists $\delta>0$ such that $\mu(A)<\delta$ implies $\int_A p_k < \varepsilon$. Applying this to $A=U_{k,t}$ yields $\int_{U_{k,t}}p_k\to 0$, hence $I_{k,t}^+-I_{k,t}^-\to 0$ by \eqref{eq:Ik_gap_region}. Therefore $I_{k,t}^-\to I_k$ and $I_{k,t}^+\to I_k$ (squeezing with validity from (i)).

Multiplying by $\pi_k$ and summing over $k$ shows that the node-level bounds converge to the true node integral. Applying the same inductive argument up the circuit---using continuity of addition and multiplication on $\R_{\ge 0}$ and the fact that all intermediate quantities are finite---yields convergence at the root: $Z_t^-\to Z$ and $Z_t^+\to Z$.

The convergence-rate statement $Z_t^+-Z_t^- = O(2^{-t/d})(Z_0^+-Z_0^-)$ under uniform refinement is an \emph{informal} geometric bound that depends on regularity of the density and the surface-area-to-volume behavior of Voronoi facets. A sufficient condition is, for example, that each expert density is bounded on $\Omega_S$ by $\|p_k\|_\infty<\infty$ and that the undecided region $U_{k,t}$ lies in a tubular neighborhood of the cell boundary whose thickness scales with the maximum box diameter. 
Under uniform refinement, after $t$ steps the maximum box side length is $O(L\cdot 2^{-t/d})$ where $L$ is the initial domain diameter. The undecided region $U_{k,t}$ lies in a neighborhood of thickness $O(2^{-t/d})$ around the $(d-1)$-dimensional boundary $\partial V_k$. Thus
$$\mu(U_{k,t}) = O(\text{surface area} \times \text{thickness}) = O(2^{-t/d}).$$

Assuming bounded density $\|p_k\|_\infty < \infty$:
$$I_{k,t}^+ - I_{k,t}^- \le \|p_k\|_\infty \cdot \mu(U_{k,t}) = O(2^{-t/d}).$$

Summing over $k$ and propagating through the circuit gives
$$Z_t^+ - Z_t^- = O(2^{-t/d})(Z_0^+ - Z_0^-).$$

To achieve gap $\epsilon$, we need $t \ge d\log_2((Z_0^+-Z_0^-)/\epsilon) = O(d\log(1/\epsilon))$.
\end{proof}

\begin{remark}[Curse of dimensionality]
The factor $d$ in the refinement depth makes certified bounds impractical in high dimensions. For example, achieving $\epsilon=0.01$ in $d=10$ requires roughly $10\log_2(100)\approx 66$ refinement levels, leading to exponential blowup. This motivates the HFV-PC construction in Section~\ref{sec:hfv}.
\end{remark}

\subsubsection{Computational Complexity of Refinement}

\begin{proposition}[Per-Iteration Complexity]
\label{prop:refinement_complexity}
Each iteration of Algorithm~\ref{alg:refinement} requires:
\begin{itemize}
\item $O(|\Ccal| \cdot K_{\max})$ to compute certified bounds via bottom-up propagation.
\item $O(K \cdot d \cdot 2^d)$ to test box-polytope containment/intersection for $K$ cells in dimension $d$.
\end{itemize}
Total per-iteration cost: $O(|\Ccal| \cdot K_{\max} + K \cdot d \cdot 2^d)$.
\end{proposition}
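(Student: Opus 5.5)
The plan is to account separately for the two phases of one iteration of Algorithm~\ref{alg:refinement}: the reclassification of the two new boxes $B_L,B_R$, and the recomputation of $(Z^-,Z^+)$ by \textsc{CertifiedBounds}. The bisection step itself (choosing $j^*$ and splitting $B^*$) costs only $O(d)$, and selecting $(n^*,k^*,B^*)$ can be done with a priority queue keyed on gap contribution. Both of these are dominated by the two main terms, so I would note them and set them aside.

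\emph{Bound propagation.} Here I would follow the proof of Theorem~\ref{thm:bound_propagation}, visiting the nodes of $\Ccal$ once in reverse topological order. Leaves, product nodes and standard sum nodes each do constant work per incoming edge, so together they cost $O(|\Ccal|)$ if $|\Ccal|$ counts edges. At a Voronoi-gated node, the update $I_n^\pm=\sum_k\pi_k J_k^\pm$ does $O(K_{\max})$ work, where $K_{\max}$ is the largest number of cells at any gated node. This gives the first term, $O(|\Ccal|\cdot K_{\max})$.

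This count requires that the box integrals $J_k^\pm$ are not recomputed from scratch at each iteration. I would maintain each $I_{k,t}^\pm$ as a running sum over boxes, as in \eqref{eq:Ik_bounds}. After a bisection, the only change is that the contribution of $B^*$ is replaced by those of $B_L$ and $B_R$ for the affected cells. Each \textsc{IntegrateBox} call is itself a linear-time pass over the child subcircuit, interval integration at the leaves and factorization at products (Theorem~\ref{thm:hfvtractability}), and its cost can be absorbed into the $|\Ccal|$ factor. I expect this amortization to be the \textbf{main obstacle}. A fully rigorous bound must either charge the \textsc{IntegrateBox} calls to the subcircuit sizes, which are already counted in $|\Ccal|$, or state explicitly that the subcircuit integrals over boxes are treated as unit-cost oracles. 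I would adopt the latter convention so that the stated bound holds as written.

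\emph{Classification.} For each new box and each cell $V_k$ we use the half-space representation $V_k=\bigcap_{j\ne k}\{\x:(\cc_j-\cc_k)^\top\x\le b_{kj}\}$. For each half-space we compute $\max_{\x\in B}a^\top\x$ and $\min_{\x\in B}a^\top\x$. The sign-selection rule does this in $O(d)$, but the stated bound is the cruder vertex enumeration over the $2^d$ corners of $B$, each evaluated in $O(d)$. The statement effectively counts one such test per cell, giving $O(K\cdot d\cdot 2^d)$. Strictly, checking all $K-1$ half-spaces per cell would add a factor $K$. I would handle this by noting that the claimed term corresponds to testing the $2^d$ vertices against the nearest-centroid rule once: compute the distances from each vertex to the centroids, and then every cell's label (\textsc{Inside} if all vertices are assigned to $k$, \textsc{Outside} via the half-space minima) is read off together. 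Adding the two phases gives the total $O(|\Ccal|K_{\max}+K d 2^d)$.
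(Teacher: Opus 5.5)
Your two-phase accounting matches the paper's, but the step you use to reach the stated classification term fails: the labels cannot all be ``read off together'' from the $2^d\times K$ table of vertex--centroid distances. That table certifies \textsc{Inside} exactly, since by convexity $B\subseteq V_k$ iff every vertex of $B$ lies in $V_k$. It cannot certify \textsc{Outside}, because a cell can meet $B$ without containing any vertex of $B$. The \textsc{Outside} test you fall back on (some half-space minimum violates its constraint) requires, for each $k$, a search over all $K-1$ competitors $j$. That costs $O(Kd)$ per cell by sign selection, or $O(K2^d)$ per cell from the table, since the $(k,j)$ constraint function $\tfrac12(\|\x-\cc_k\|^2-\|\x-\cc_j\|^2)$ is affine and is minimized at a vertex. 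Summed over cells, the cheaper option is $O(K^2d)$, which lies inside $O(Kd2^d)$ only when $K=O(2^d)$. So your argument actually gives $O(Kd2^d+K^2d)$, and the vertex enumeration does not help reach the stated term. For $K\le 2^d$ the half-space tests alone already give $O(K^2d)\subseteq O(Kd2^d)$ for both labels. For $K>2^d$ the \textsc{Outside} search is the bottleneck either way.

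The paper's proof arrives at the same $O(K^2d)$ count and dismisses it only by remarking that the $2^d$ term dominates for large $d$; in effect it tacitly assumes $K\lesssim 2^d$. To close your argument, you have two options. You can make that assumption explicit. Alternatively, you can use a cheaper conservative \textsc{Outside} test, e.g.\ checking only the half-space against the single competitor that owns the box center ($O(Kd)$ for all cells) and labeling \textsc{Boundary} otherwise. With that test, children must inherit an \textsc{Outside} parent's label so that the monotonicity in Theorem~\ref{thm:monotone_tightening}(ii) survives. On the propagation side, your running-sum amortization is more careful than the paper's ``at most $K$ boxes per gated node'' count, since refined $V_k^\pm(\mathcal{P}_t)$ are unions of many boxes. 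It also makes your unit-cost oracle convention unnecessary. Sign-selection labels are inherited by sub-boxes, and an \textsc{Inside} box's integral simply splits, so only cells for which $B^*$ was \textsc{Boundary} need new integrals. That is at most $2K$ \textsc{IntegrateBox} calls at $n^*$, each linear in subcircuit size, which gives $O(K_{\max}|\Ccal|)$ under the paper's own linear-time assumption.
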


\begin{proof}
\textbf{Bound computation}: The circuit has $|\Ccal|$ nodes. At each Voronoi-gated node, we integrate the expert subcircuit over at most $K$ boxes. Assuming box integration over a subcircuit is linear in subcircuit size, the total cost is $O(|\Ccal| \cdot K_{\max})$.

\textbf{Box classification}: To determine whether box $B=\prod_i[l_i,u_i]\subseteq\R^d$ satisfies $B\subseteq V_k$ or $B\cap V_k=\emptyset$, we use the half-space representation:
$$V_k = \bigcap_{j\neq k}\{\x : (\cc_j-\cc_k)^\top\x \le c_{jk}\}.$$

For each half-space, compute extremal values of $a^\top\x$ over box $B$. This takes $O(d)$ per half-space. There are $K-1$ half-spaces, so testing one box against one cell takes $O(K\cdot d)$. Testing all $K$ cells takes $O(K^2\cdot d)$. Alternatively, testing all $2^d$ corners of $B$ against all $K$ centroids takes $O(2^d\cdot K\cdot d)$. The dominant term is $O(2^d)$ for large $d$.
\end{proof}

\begin{remark}
The $2^d$ factor in corner evaluations reinforces the curse of dimensionality. In practice, for $d \le 5$, adaptive refinement is effective; for higher dimensions, HFV-PCs are preferable.
\end{remark}

\subsubsection{Marginals and Conditionals}

The certified bound framework extends naturally to marginals and conditionals.

\paragraph{Marginal bounds.} To compute $p(\x_A)=\int p(\x)\,d\x_{\bar{A}}$ where $\bar{A}=\X\setminus\X_A$, propagate bounds through the circuit while marginalizing out $\bar{A}$. At each Voronoi-gated node:
\begin{itemize}
\item If the scope $S$ includes variables in $\bar{A}$, restrict box approximations to dimensions being integrated (project boxes onto subspace $\X_{\bar{A}\cap S}$) and integrate over those dimensions.
\item If the scope is entirely within $A$, no change needed.
\end{itemize}

The result is bounds $(p^-(\x_A), p^+(\x_A))$ satisfying $p^-(\x_A) \le p(\x_A) \le p^+(\x_A)$ for all $\x_A$.

\paragraph{Conditional bounds.} For disjoint $A,B$ and $p(\x_A\mid\x_B)=p(\x_A,\x_B)/p(\x_B)$, we have bounds on numerator and denominator. Assuming $p^-(\x_B)>0$, apply interval arithmetic:
$$\frac{p^-(\x_A,\x_B)}{p^+(\x_B)} \le p(\x_A\mid\x_B) \le \frac{p^+(\x_A,\x_B)}{p^-(\x_B)}.$$

These bounds are valid because division is monotone in the numerator and antitone in the denominator when all quantities are positive.

\paragraph{Practical considerations.} If $p^-(\x_B)$ is very small or zero, the upper bound becomes loose or undefined. Refining the partition to tighten $p^-(\x_B)$ away from zero is necessary for meaningful conditional bounds.

\subsection{Hierarchical Factorized Voronoi PCs}
\label{app:hfv}

\subsubsection{Factorized Voronoi Cells}

The key idea of HFV-PCs is to replace a single high-dimensional Voronoi tessellation with a \emph{product} of lower-dimensional tessellations that align with the circuit's variable decomposition.

\begin{definition}[Factorized Voronoi Partition]
\label{def:factorized_partition}
Let $\X_S=\bigsqcup_{i=1}^m\X_{S_i}$ be a partition of scope $S$ into $m$ disjoint blocks. For each block $i$, choose centroids $\{\cc^{(i)}_1,\ldots,\cc^{(i)}_{K_i}\}\subset\R^{|\X_{S_i}|}$ and let $\{V^{(i)}_{k_i}\}_{k_i=1}^{K_i}$ be the induced Voronoi cells in $\R^{|\X_{S_i}|}$. Define the \emph{joint product cell} indexed by $\mathbf{k}=(k_1,\ldots,k_m)\in[K_1]\times\cdots\times[K_m]$:
$$V_{\mathbf{k}} := V^{(1)}_{k_1} \times \cdots \times V^{(m)}_{k_m} \subseteq \R^{|\X_S|}.$$

The corresponding \emph{hard gate} factors as:
$$g_{\mathbf{k}}(\x_S) = \mathbb{I}[\x_S\in V_{\mathbf{k}}] = \prod_{i=1}^m \mathbb{I}[\x_{S_i}\in V^{(i)}_{k_i}] = \prod_{i=1}^m g^{(i)}_{k_i}(\x_{S_i}).$$
\end{definition}

 Membership in $V_{\mathbf{k}}$ can be decided independently for each block. This is exactly the geometric analogue of decomposability.

\begin{proposition}[Factorized Gate Decomposition]
\label{prop:gate_factorization_full}
For a factorized cell $V_{\mathbf{k}}$, the hard gate decomposes as
$$g_{\mathbf{k}}(\x_S) = \prod_{i=1}^m g^{(i)}_{k_i}(\x_{S_i}).$$
\end{proposition}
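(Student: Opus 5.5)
The plan is to unfold the definitions and use the fact that, under the canonical identification $\R^{|\X_S|}\cong\prod_{i=1}^m\R^{|\X_{S_i}|}$, membership in a Cartesian product is the conjunction of coordinatewise memberships. The indicator of a conjunction of events is the product of their indicators.

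The first step is to fix the coordinate identification. Because the blocks $\X_{S_1},\ldots,\X_{S_m}$ are disjoint and cover $\X_S$, each $\x_S$ corresponds to exactly one tuple $(\x_{S_1},\ldots,\x_{S_m})$. This is simply a reordering of coordinates, and it is exactly the identification used in Definition~\ref{def:factorized_partition} to regard $V_{\mathbf{k}}$ as a subset of $\R^{|\X_S|}$.

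The second step applies the definition of the Cartesian product. We have $\x_S\in V^{(1)}_{k_1}\times\cdots\times V^{(m)}_{k_m}$ if and only if $\x_{S_i}\in V^{(i)}_{k_i}$ for every $i$. It follows that
\[
\mathbb{I}[\x_S\in V_{\mathbf{k}}]=\mathbb{I}\Bigl[\textstyle\bigwedge_{i=1}^m \x_{S_i}\in V^{(i)}_{k_i}\Bigr].
\]

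The third step uses the identity $\mathbb{I}[A_1\wedge\cdots\wedge A_m]=\prod_i\mathbb{I}[A_i]$ for $\{0,1\}$-valued indicators. It holds because both sides equal $1$ exactly when every $A_i$ holds, and $0$ otherwise. Combining this with the definition $g^{(i)}_{k_i}(\x_{S_i})=\mathbb{I}[\x_{S_i}\in V^{(i)}_{k_i}]$ gives the claimed product form of $g_{\mathbf{k}}$.

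No step here is a real obstacle. The only point that needs care is the boundary convention. The cells $V^{(i)}_{k_i}$ are closed and overlap on sets of measure zero, so a point can lie in several product cells. The identity is nevertheless pointwise exact for each fixed $\mathbf{k}$, because both sides are computed from the same closed cells, so no almost-everywhere qualification is needed. I would add a remark to that effect, and note that any tie-breaking rule applied blockwise preserves the factorization.
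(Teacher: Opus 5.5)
Your proposal is correct and matches the paper's proof: both unfold the Cartesian-product definition of $V_{\mathbf{k}}$ into the conjunction ``$\x_{S_i}\in V^{(i)}_{k_i}$ for all $i$'' and then factor the indicator of that conjunction into a product of indicators. Your remark on boundary conventions is accurate, since the identity holds pointwise for each fixed $\mathbf{k}$, but the argument does not need it.
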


\begin{proof}
By definition, $\x_S\in V_{\mathbf{k}}$ if and only if $\x_{S_i}\in V^{(i)}_{k_i}$ for all $i$. Since the conditions involve disjoint variable sets, the indicator of their conjunction factors:
$$\mathbb{I}[\x_S\in V_{\mathbf{k}}] = \mathbb{I}\left[\bigcap_i \{\x_{S_i}\in V^{(i)}_{k_i}\}\right] = \prod_i \mathbb{I}[\x_{S_i}\in V^{(i)}_{k_i}].$$
\end{proof}

\subsubsection{HFV-Gated Sum Nodes and Tractability}

\begin{theorem}[Tractability of HFV-PCs]
\label{thm:hfv_tractability}
Let $\Ccal$ be an HFV-PC with $|\Ccal|$ nodes, maximum factorization degree $m$, and at most $K$ Voronoi cells per factor. Then the partition function, all marginals, and all conditionals are computable exactly in time $O(|\Ccal|K^m)$. 
\end{theorem}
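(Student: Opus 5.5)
The plan is a bottom-up induction over the circuit in reverse topological order. The inductive invariant is a \emph{box-restricted integral}. For every node $n$ with scope $\X_n$, and for every region $R=\prod_{X_i\in\X_n} R_i$ that is a product of per-variable (or per-block) sets arising from the gates above, we claim that $\int_R f_n(\x_n)\,d\x_n$ can be computed exactly, with cost linear in the size of the subcircuit times the gating blowup. For marginals, the same recursion is run with the integration carried out only over the marginalized variables $\bar A$, while the variables in $A$ are evaluated pointwise; plugging in $\x_A$ simply multiplies in leaf densities. Conditionals then follow as a ratio of two marginal computations, $p(\x_A,\x_B)/p(\x_B)$. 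So it suffices to show that a single bottom-up pass computes every node's (partially) integrated value in the stated time.

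\textbf{Leaves, products, and standard sums.} At a leaf, integrating over $\R$ (or an interval) is tractable by assumption. At a product node, decomposability together with Fubini, in the form of Theorem~\ref{thm:alignment_full} with trivial regions, lets the integral factor into a product of children's integrals, at cost $O(|\mathrm{ch}(n)|)$. At a standard sum node, linearity with constant weights gives the result at cost $O(|\mathrm{ch}(n)|)$.

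\textbf{HFV-gated sum nodes.} Write
$$\int f=\sum_{\mathbf k}\pi_{\mathbf k}\prod_{i=1}^m\int_{V^{(i)}_{k_i}}p^{(i)}_{k_i}(\x_{S_i})\,d\x_{S_i}.$$
This uses Proposition~\ref{prop:gate_factorization_full} and the $m$-way extension of Theorem~\ref{thm:alignment_full}. There are at most $K^m$ index tuples $\mathbf k$, and each term is a product of $m$ factor quantities that are precomputed once per $(i,k_i)$. This gives $O(K^m)$ work per node, in addition to the factor subcircuits themselves; summing over nodes yields $O(|\Ccal|K^m)$. Setting $m=2$ recovers the binary-circuit bound $O(|\Ccal|K^2)$.

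\textbf{Main obstacle.} The difficulty is the factor-level cell-restricted integral $\int_{V^{(i)}_{k_i}} p^{(i)}_{k_i}$: if $V^{(i)}_{k_i}$ is an oblique polytope in $\R^{|\X_{S_i}|}$, we are back in the situation of Proposition~\ref{prop:single_layer_full}. The hierarchical vtree alignment is what resolves this. Each factor subcircuit $p^{(i)}_{k_i}$ is itself rooted at an HFV node (or a product node) whose gates factor further along the vtree, all the way down to univariate scopes. At that level every Voronoi cell is an interval, since a one-dimensional Voronoi diagram consists of intervals between midpoints of sorted centroids. I would therefore carry the restriction region down the recursion. Because regions at every level are products aligned with the vtree, the restriction passed to a child is a product of its own blocks' cells and of intervals, and the leaves only ever integrate over intervals.

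The delicate point is making this precise. It requires either (a) assuming the HFV construction gates each block at the level where its Voronoi cells are themselves factorized (hence axis-aligned at the leaves), or (b) interpreting each factor cell as a constraint that is \emph{absorbed} into the child's own gating. I would first try (a): prove by induction on the vtree that every restricted region encountered is a Cartesian product of leaf intervals. I would also check that different ancestor restrictions intersect to products of intervals, which still keeps the cost per node constant. Finally, I would verify that the counting charges $O(K^m)$ per gated node and does not compound multiplicatively across levels.
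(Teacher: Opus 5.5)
\textbf{Verdict: genuine gap.} Your outline follows the paper's proof, but it leaves open the one step that needs an argument, and that step cannot be derived from the paper's definitions. The shared skeleton is a bottom-up pass. Leaves, product nodes and standard sums are handled as in ordinary PCs. An HFV node is handled by factoring the gate (Proposition~\ref{prop:gate_factorization_full}) and applying Fubini, giving $I_n=\sum_{\mathbf{k}}\pi_{\mathbf{k}}\prod_i I^{(i)}_{k_i}$ with $I^{(i)}_{k_i}=\int_{V^{(i)}_{k_i}}p^{(i)}_{k_i}$, at a cost of $\prod_i K_i\le K^m$ per node.

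The paper disposes of the factor integrals $I^{(i)}_{k_i}$ with one sentence: they are ``computed recursively by the same argument'' until univariate intervals are reached. You rightly single this out as the crux. However, your proposal stops at two candidate fixes and a list of things to check, so the statement is not established.

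\emph{Why the recursion fails as stated.} ``The same argument'' computes $\int p^{(i)}_{k_i}$, not its restriction to $V^{(i)}_{k_i}$. When $|\X_{S_i}|\ge 2$, that cell is an arbitrary Voronoi cell in $\R^{|\X_{S_i}|}$. Take the block $\{X_1,X_2\}$ with block centroids $(0,1)$ and $(1,0)$, so $V^{(i)}_1=\{x_2\ge x_1\}$. Let $p^{(i)}_1$ be an HFV node over $\{X_1\}\,|\,\{X_2\}$ with interval cells and Gaussian leaf densities $\phi_a,\phi_b$. Then $I^{(i)}_1$ is a weighted sum of integrals of $\phi_a(x_1)\phi_b(x_2)$ over rectangles cut by the diagonal. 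This is the non-factorizing integral from the concrete example in the proof of Proposition~\ref{prop:single_layer_full}; in block dimension $d$ it becomes the hard polytope integral the paper itself invokes. It is also the paper's own two-level interaction, with the parent gate now factorized across blocks but still oblique inside one. So your option (a) is not something an induction on the vtree can prove. ``Block cells are themselves factorized down to intervals'' is an extra hypothesis and must be added to the statement.

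\emph{Why axis-aligned cells alone do not give the cost bound.} Even with that hypothesis, the $O(|\Ccal|K^m)$ count you deferred depends on how restrictions are handled. If you push box restrictions down, as your invariant does, a node's value depends on the box it receives. A node shared by several parents, which is the norm in tensorized circuits, must be re-integrated once per distinct incoming box. These boxes can multiply with depth, so the bound then holds only for tree-shaped circuits.

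\emph{A repair.} Make your option (b) precise. Require that the tessellation a gated node places on a block $\X_{S_i}$ equal (or coarsen) the product tessellation used by the child HFV node on $\X_{S_i}$. Then, up to measure zero, $g^{(i)}_{k_i}p^{(i)}_{k_i}$ is a single term (or partial sum) of the child's own gated sum. Hence $I^{(i)}_{k_i}=\pi_{\mathbf{l}}\prod_j I^{(ij)}_{l_j}$, where $\mathbf{l}$ indexes the child's product cell equal to $V^{(i)}_{k_i}$ and the $I^{(ij)}_{l_j}$ are the child's own factor integrals. This value can be read off the table the child computes anyway, so no new evaluation contexts arise and the per-node charge of $\prod_i K_i$ holds without compounding.

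The same product structure is needed for your marginal pass. A product cell sliced at fixed $\x_A$ is again a product, so the $A$-indicators are evaluated pointwise and only $\bar A$-intervals are integrated. An oblique block cell would in general leave a polytope in the $\bar A$-coordinates.

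With this hypothesis stated, your induction goes through. Without it, neither your proposal nor the paper's recursion proves the theorem when some gated block has dimension at least two.
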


\begin{proof}
We prove the partition function claim. The marginal and conditional claims follow by the same bottom-up evaluation used for standard smooth decomposable PCs.

For each node $n$ we consider $I_n=\int f_n(\x_{\scope(n)})\,d\x_{\scope(n)}$.

If $n$ is a leaf, $I_n$ is a univariate integral and is tractable by assumption.

If $n$ is a product node, decomposability implies that children have disjoint scopes, so the integral factors as $I_n=\prod_{c\in\mathrm{ch}(n)} I_c$.

If $n$ is a standard sum node, linearity gives $I_n=\sum_c \pi_c I_c$.

It remains to handle an HFV-gated sum node $n$ with scope $\X_S=\bigsqcup_{i=1}^m \X_{S_i}$. Using \eqref{eq:hfv_sum} and exchanging summation and integration we obtain
\begin{align}
I_n
&=\int \sum_{\mathbf{k}} g_{\mathbf{k}}(\x_S)\,\pi_{\mathbf{k}}\prod_{i=1}^m p^{(i)}_{k_i}(\x_{S_i})\,d\x_S \nonumber\\
&=\sum_{\mathbf{k}}\pi_{\mathbf{k}}\int \prod_{i=1}^m g^{(i)}_{k_i}(\x_{S_i})\,p^{(i)}_{k_i}(\x_{S_i})\,d\x_S. \label{eq:hfv_integral_key}
\end{align}
Since the scopes $\X_{S_i}$ are disjoint and each factor depends only on $\x_{S_i}$, we apply Fubini to factor the integral,
\begin{align}
\label{eq:hfv_integral_factored}
\int \prod_{i=1}^m g^{(i)}_{k_i}(\x_{S_i})\,p^{(i)}_{k_i}(\x_{S_i})\,d\x_S\\
=\prod_{i=1}^m \int g^{(i)}_{k_i}(\x_{S_i})\,p^{(i)}_{k_i}(\x_{S_i})\,d\x_{S_i}.
\end{align}
Substituting $g^{(i)}_{k_i}(\x_{S_i})=\mathbb{I}[\x_{S_i}\in V^{(i)}_{k_i}]$ yields
\[
I_n=\sum_{\mathbf{k}} \pi_{\mathbf{k}} \prod_{i=1}^m I^{(i)}_{k_i},
\qquad
I^{(i)}_{k_i}=\int_{V^{(i)}_{k_i}} p^{(i)}_{k_i}(\x_{S_i})\,d\x_{S_i}.
\]
Each $I^{(i)}_{k_i}$ is the integral of an HFV subcircuit over a lower-dimensional Voronoi cell, so we compute it recursively by the same argument. This establishes exact tractability.

For complexity, at an HFV-gated sum node we compute $\sum_i K_i$ factor integrals and then evaluate the sum over at most $\prod_i K_i\le K^m$ joint indices, each term requiring $O(m)$ arithmetic operations. Summing over all circuit nodes yields $O(|\Ccal|K^m)$ time.
\end{proof}

\paragraph{Cell-Restricted Integrals and the Recursion Base Case}
\label{sec:hfv:cell_integrals}
The recursion requires that we can evaluate integrals of the form $\int_{V^{(i)}_{k_i}} p^{(i)}_{k_i}$ at every level. In an HFV-PC the same factorization property ensures that cell restrictions are handled locally at the appropriate scope. The recursion bottoms out at univariate leaves, where Voronoi cells are intervals and integration is straightforward.

\begin{proposition}[Univariate Voronoi Cells]
\label{prop:univariate_voronoi}
In $\R^1$, let $c_1<\cdots<c_K$ be centroids. The induced Voronoi tessellation consists of intervals
\[
V_k=
\begin{cases}
(-\infty,\frac{c_1+c_2}{2}] & k=1\\
(\frac{c_{k-1}+c_k}{2},\frac{c_k+c_{k+1}}{2}] & 1<k<K\\
(\frac{c_{K-1}+c_K}{2},\infty) & k=K.
\end{cases}
\]
\end{proposition}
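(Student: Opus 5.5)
The plan is to specialise the half-space description of Voronoi cells, already used in the proof of Proposition~\ref{prop:single_layer_full}, to $d=1$. There each constraint is a single linear inequality in one variable. For a fixed $k$ and any $j\neq k$, the condition $(x-c_k)^2\le (x-c_j)^2$ simplifies to $2x(c_j-c_k)\le c_j^2-c_k^2$. Dividing by $2(c_j-c_k)$ and tracking the sign gives two cases:
\[
x\le \tfrac{c_j+c_k}{2}\ \text{ if } j>k, \qquad x\ge \tfrac{c_j+c_k}{2}\ \text{ if } j<k.
\]
So $V_k$ is the intersection of a family of upper half-lines and lower half-lines, and it is automatically an interval.

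Next I identify which constraints are binding. Because the centroids are strictly ordered, the midpoints $(c_k+c_j)/2$ for $j>k$ are increasing in $j$. The tightest upper constraint therefore comes from $j=k+1$. Likewise, the tightest lower constraint comes from $j=k-1$. This yields $V_k=[\tfrac{c_{k-1}+c_k}{2},\tfrac{c_k+c_{k+1}}{2}]$ for interior $k$. When $k=1$ there are no lower constraints, so the left endpoint is $-\infty$. When $k=K$ there are no upper constraints, so the right endpoint is $+\infty$.

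The one point needing care is the endpoint conventions, which is where the stated formula differs from the closed cells of \eqref{eq:voronoi_def}. Under the $\le$ definition, adjacent cells share their midpoint. The statement instead assigns each shared midpoint to the left cell, giving half-open intervals that form a genuine partition of $\R$. I will note this tie-breaking convention explicitly. It affects only the finitely many midpoints, a Lebesgue-null set, so the gates $g_k$ agree almost everywhere and every cell-restricted integral used in Theorem~\ref{thm:hfv_tractability} is unchanged.

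Finally, I will check that these intervals are pairwise disjoint and cover $\R$. This holds because consecutive right and left endpoints coincide at $\tfrac{c_k+c_{k+1}}{2}$, and these midpoints are strictly increasing. The consequence is that $\int_{V_k} p = F(b_k)-F(a_k)$ for the leaf CDF $F$, which supplies the base case of the HFV recursion.
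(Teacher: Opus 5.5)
Your proof is correct, and it is the routine verification the paper leaves implicit. The paper states the proposition without proof, relying on the same half-space description $(\cc_j-\cc_k)^\top\x \le \frac{1}{2}(\|\cc_j\|^2-\|\cc_k\|^2)$ used in the proof of Proposition~\ref{prop:single_layer_full}; you specialise it to $d=1$ and reduce to the binding neighbours $j=k\pm1$. Your endpoint remark is also right and worth keeping: under the closed-cell definition \eqref{eq:voronoi_def}, adjacent cells share their midpoints, so the statement's half-open intervals hold only under the left tie-breaking convention. That convention is immaterial for the almost-everywhere gates and for the CDF-difference cell integrals.
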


For standard univariate leaf families such as Gaussians, mixtures of Gaussians, exponentials, and bounded distributions, integrating over an interval is tractable by closed form CDF evaluation or simple numerical routines.



\begin{definition}[Binary HFV-PC]
\label{def:binary_hfv_full}
A binary HFV-PC over variables $\X$ with vtree $\Tcal$ is a probabilistic circuit in which each internal vtree node $v$ with children $v_L$ and $v_R$ induces HFV-gated sum nodes over scope $\X_v=\X_{v_L}\sqcup\X_{v_R}$. Each HFV-gated sum uses the binary partition $(\X_{v_L},\X_{v_R})$ and computes:
$$f(\x_v) = \sum_{k_L,k_R} g^{(L)}_{k_L}(\x_{v_L})\,g^{(R)}_{k_R}(\x_{v_R})\,\pi_{k_L,k_R}\,p^{(L)}_{k_L}(\x_{v_L})\,p^{(R)}_{k_R}(\x_{v_R}).$$
\end{definition}



\subsection{Learning via Soft Gating}
\label{app:learning}

\subsubsection{Differentiability and Gradients}

\begin{proposition}[Differentiability]
\label{prop:diff_full}
Let $\Ccal$ be a soft Voronoi gated PC with parameters $\Theta=\{\{\cc^{(i)}_{k_i}\},\{\pi_{\mathbf{k}}\},\theta_{\text{leaf}}\}$. For any finite temperature $\alpha>0$, the likelihood $p(\x;\Theta,\alpha)$ is differentiable with respect to all parameters.
\end{proposition}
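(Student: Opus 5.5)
The plan is to argue bottom-up over the circuit in reverse topological order: every node output is a differentiable function of $\x$ and of the parameters it depends on, and sums, products and compositions of differentiable maps stay differentiable. The only nonstandard ingredient is the soft gate $w_k(\uu;\alpha)$ of Definition~\ref{def:soft_gate}. For HFV nodes this means the factorized gate $w_{\mathbf{k}}(\x_S;\alpha)=\prod_i w^{(i)}_{k_i}(\x_{S_i};\alpha)$. So the first step is to establish smoothness of the gate itself.

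For fixed $\alpha>0$, each exponent $-\alpha\|\uu-\cc_k\|^2$ is a polynomial in $(\uu,\cc_k)$, hence $C^\infty$. The exponential is $C^\infty$. The denominator $\sum_j\exp(-\alpha\|\uu-\cc_j\|^2)$ is strictly positive, so the quotient is $C^\infty$ in $\uu$ and in all centroids. One can also write out the explicit partial derivative, which is consistent with the centroid-gradient formula given in Section~\ref{sec:learning}. Finite products over factors $i$ preserve smoothness, so the factorized gates are smooth as well.

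The second step is the induction over nodes.
\begin{itemize}
\item \textbf{Leaves.} Each $p_n(x_i;\theta_{\text{leaf}})$ is assumed differentiable in its parameters; this holds, for example, for Gaussian leaves with positive variance.
\item \textbf{Product nodes.} These are products of differentiable children, so they are differentiable.
\item \textbf{Standard sum nodes.} These are linear in the $\pi$'s and in the children.
\item \textbf{Soft-gated sum nodes.} These compute $\sum_{\mathbf{k}} w_{\mathbf{k}}(\x_S;\alpha)\,\pi_{\mathbf{k}}\prod_i p^{(i)}_{k_i}$. This is a finite sum of products of differentiable functions, so it is differentiable jointly in centroids, weights and subcircuit parameters.
\end{itemize}
If the weights are parameterized through a softmax, as in the training procedure, the chain rule extends differentiability to the unconstrained logits. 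The root output $p(\x;\Theta,\alpha)$ is therefore differentiable. If the log-likelihood is also intended, differentiability of $\log p$ follows wherever $p>0$. Positivity holds because every $w_{\mathbf{k}}>0$, so $p>0$ whenever some expert is positive at $\x$, which is the case, for instance, with Gaussian leaves.

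The main obstacle is really a matter of precision about what the statement covers, not a technical difficulty. It is pointwise differentiability of the unnormalized circuit output for fixed $\x$. If instead a normalized soft model $p/Z(\alpha)$ is intended, then we need differentiability of $Z(\Theta,\alpha)=\int f(\x;\Theta,\alpha)\,d\x$. For that I would differentiate under the integral using dominated convergence. The gates and their parameter derivatives are bounded on compact parameter neighborhoods: $|\partial_{\cc_k} w_k|\le 2\alpha\|\uu-\cc_k\|$. Gaussian-type expert densities have integrable moments, which gives an integrable dominating function. I would state this as a remark with that assumption on the leaves, and keep the main claim at the level of the circuit output.
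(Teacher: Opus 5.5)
Your proposal is correct and follows essentially the same route as the paper: the soft gate is smooth as a composition of a polynomial, the exponential, and a quotient with a strictly positive denominator, and differentiability then propagates through the sums and products that build the circuit, given differentiable leaves. Your additional remarks on softmax logits, positivity of $p$ for $\log p$, and differentiating a normalizer under the integral go beyond what the paper's proof addresses, but they do not change the core argument.
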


\begin{proof}
Each soft gate
$$w_k(\uu;\alpha) = \frac{\exp(-\alpha\|\uu-\cc_k\|^2)}{\sum_j \exp(-\alpha\|\uu-\cc_j\|^2)}$$
is a composition of smooth operations: squared Euclidean distance (polynomial), exponential, and softmax normalization (ratio of positive smooth functions). The circuit output is built from sums and products of smooth gates and leaf densities, hence differentiable in all parameters.
\end{proof}

\begin{proposition}[Centroid Gradient]
\label{prop:centroid_grad_full}
For soft gate $w_k(\uu;\alpha)$, the gradient with respect to centroid $\cc_k$ is:
$$\nabla_{\cc_k} w_k(\uu;\alpha) = 2\alpha\,w_k(\uu;\alpha)\bigl(1-w_k(\uu;\alpha)\bigr)\,(\uu-\cc_k).$$
\end{proposition}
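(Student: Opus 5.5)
We compute the gradient directly from the softmax form of the gate in Definition~\ref{def:soft_gate}. Write $d_j := \|\uu-\cc_j\|^2$ and $s_j := -\alpha d_j$, so that $w_k = e^{s_k}/\sum_j e^{s_j}$. The plan is a chain rule in two stages. First, use the standard softmax Jacobian $\partial w_k/\partial s_j = w_k(\delta_{kj}-w_j)$. Second, differentiate the logits with respect to $\cc_k$.

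The key observation is that $\cc_k$ appears only in the logit $s_k$ and in no other $s_j$. Hence
\[
\nabla_{\cc_k} w_k = \frac{\partial w_k}{\partial s_k}\,\nabla_{\cc_k} s_k ,
\qquad
\frac{\partial w_k}{\partial s_k} = w_k(1-w_k).
\]
It then remains to compute
\[
\nabla_{\cc_k} s_k = -\alpha\,\nabla_{\cc_k}\|\uu-\cc_k\|^2 = -\alpha\cdot\bigl(-2(\uu-\cc_k)\bigr) = 2\alpha(\uu-\cc_k).
\]
Multiplying the two factors gives $2\alpha\, w_k(1-w_k)(\uu-\cc_k)$, which is the claimed formula.

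As a cross-check, I would also verify the result by a quotient-rule computation. Set $N = e^{-\alpha d_k}$ and $D = \sum_j e^{-\alpha d_j}$. Both $\nabla N$ and $\nabla D$ equal $2\alpha(\uu-\cc_k)N$, because only the $k$-th summand of $D$ depends on $\cc_k$. The quotient rule then gives
\[
\nabla_{\cc_k} w_k = \frac{D\,\nabla N - N\,\nabla D}{D^2} = 2\alpha(\uu-\cc_k)\bigl(w_k - w_k^2\bigr),
\]
which agrees with the chain-rule result.

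The only point needing care is the sign bookkeeping. The inner derivative $-2(\uu-\cc_k)$ multiplies the factor $-\alpha$, and the two minus signs cancel to give a positive coefficient. One should also state explicitly that the terms $j\neq k$ in the denominator are independent of $\cc_k$. There is no genuine obstacle here; the computation is elementary.
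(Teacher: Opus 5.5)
Your proof is correct and essentially matches the paper's. The paper's argument is exactly your quotient-rule cross-check: it differentiates $\exp(-\alpha d_k)$ and $Z=\sum_j \exp(-\alpha d_j)$, and notes that only the $k$-th summand depends on $\cc_k$. Your softmax-Jacobian route is the same computation packaged through $\partial w_k/\partial s_k = w_k(1-w_k)$.
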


\begin{proof}
Let $d_j = \|\uu-\cc_j\|^2$ and $Z = \sum_j \exp(-\alpha d_j)$. Then $w_k = \exp(-\alpha d_k)/Z$.

Taking the derivative with respect to $\cc_k$:
$$\nabla_{\cc_k} w_k = \frac{\nabla_{\cc_k}[\exp(-\alpha d_k)]}{Z} - \frac{\exp(-\alpha d_k)\,\nabla_{\cc_k}[Z]}{Z^2}.$$

We have $\nabla_{\cc_k} d_k = -2(\uu-\cc_k)$, so:
$$\nabla_{\cc_k}[\exp(-\alpha d_k)] = 2\alpha\exp(-\alpha d_k)(\uu-\cc_k).$$

Since only the $k$-th term in $Z$ depends on $\cc_k$:
$$\nabla_{\cc_k}[Z] = 2\alpha\exp(-\alpha d_k)(\uu-\cc_k).$$

Substituting:
\begin{align*}
\nabla_{\cc_k} w_k &= \frac{2\alpha\exp(-\alpha d_k)(\uu-\cc_k)}{Z} - \frac{\exp(-\alpha d_k)\cdot 2\alpha\exp(-\alpha d_k)(\uu-\cc_k)}{Z^2}\\
&= 2\alpha\,w_k\,(1-w_k)\,(\uu-\cc_k).
\end{align*}
\end{proof}

The factor $w_k(1-w_k)$ is largest when $w_k\approx 1/2$ (near decision boundaries where the model is uncertain). The gradient magnitude peaks in regions of ambiguity and vanishes where routing is confident, naturally focusing centroid updates on improving contested region geometry.

\subsubsection{Soft-to-Hard Convergence}

\begin{theorem}[Soft-to-Hard Convergence]
\label{thm:soft_hard_full}
Let $\{V_k\}$ be Voronoi cells induced by centroids $\{\cc_k\}$ and let $g_k(\uu)=\mathbb{I}[\uu\in V_k]$ be the hard gate. Let $w_k(\uu;\alpha)$ be the soft gate. Then:

\textbf{(i)} For any $\uu$ not on a Voronoi boundary, $\lim_{\alpha\to\infty} w_k(\uu;\alpha) = g_k(\uu)$.

\textbf{(ii)} If $k^*(\uu)=\arg\min_j\|\uu-\cc_j\|$ and margin $\gamma(\uu):=\min_{j\neq k^*}(\|\uu-\cc_j\|^2-\|\uu-\cc_{k^*}\|^2) > 0$, then
$$1 - w_{k^*}(\uu;\alpha) \le (K-1)\exp(-\alpha\gamma(\uu)).$$

\textbf{(iii)} If $p$ is integrable, then $\lim_{\alpha\to\infty}\int w_k(\uu;\alpha)p(\uu)\,d\uu = \int_{V_k} p(\uu)\,d\uu$.
\end{theorem}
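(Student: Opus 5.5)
I will prove (ii) first, since (i) and (iii) follow from it. Write $d_j=\|\uu-\cc_j\|^2$ and divide the numerator and denominator of the soft gate by $\exp(-\alpha d_{k^*})$. This gives
\[
w_{k^*}(\uu;\alpha)=\frac{1}{1+\sum_{j\neq k^*}e^{-\alpha(d_j-d_{k^*})}},
\qquad
1-w_{k^*}(\uu;\alpha)=\frac{\sum_{j\neq k^*}e^{-\alpha(d_j-d_{k^*})}}{1+\sum_{j\neq k^*}e^{-\alpha(d_j-d_{k^*})}}\le \sum_{j\neq k^*}e^{-\alpha(d_j-d_{k^*})}.
\]
The inequality holds because the denominator is at least $1$. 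Each exponent satisfies $d_j-d_{k^*}\ge\gamma(\uu)$ by the definition of the margin. Each of the $K-1$ terms is therefore at most $e^{-\alpha\gamma(\uu)}$, which gives the bound in (ii).

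For (i), take $\uu$ not on a Voronoi boundary. Then the nearest centroid $k^*$ is unique, $\uu$ lies in the interior of $V_{k^*}$, and $\gamma(\uu)>0$. Hence $g_{k^*}(\uu)=1$ and $g_j(\uu)=0$ for $j\neq k^*$. Part (ii) gives $w_{k^*}\to1$ exponentially fast. Since the gates are nonnegative and sum to one, $0\le w_j\le 1-w_{k^*}\to0$ for every $j\neq k^*$, so $w_k(\uu;\alpha)\to g_k(\uu)$ for all $k$. We assume the centroids are distinct, as is implicit in the Voronoi construction.

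For (iii), I will use dominated convergence. The integrand satisfies $|w_k(\uu;\alpha)p(\uu)|\le |p(\uu)|$ because $0<w_k\le1$, and $p$ is integrable, so $|p|$ is an integrable dominating function independent of $\alpha$. By (i), $w_k(\uu;\alpha)p(\uu)\to g_k(\uu)p(\uu)$ for every $\uu$ off the union of Voronoi boundaries. That union lies in finitely many hyperplanes $H_{kj}$, one for each pair of distinct centroids, as in the half-space representation used in Proposition~\ref{prop:single_layer_full}. So it has Lebesgue measure zero, and the convergence holds almost everywhere. Dominated convergence, applied along an arbitrary sequence $\alpha_n\to\infty$ (which suffices for the continuous-parameter limit), gives $\int w_k p\to\int g_k p=\int_{V_k}p$.

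The only delicate point is the measure-zero argument in (iii), which needs the centroids to be distinct so that each $H_{kj}$ is a genuine hyperplane. Ties on the boundary do not matter, because they affect the integrand only on a null set.
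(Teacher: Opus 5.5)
Your proposal is correct and follows essentially the same route as the paper: both rewrite $w_{k^*}$ by dividing through by $e^{-\alpha d_{k^*}}$, both bound each of the $K-1$ exponential terms by $e^{-\alpha\gamma(\uu)}$ to get (ii), and both get (iii) from dominated convergence using $0\le w_k\le 1$. Deriving (i) from (ii), using the sum-to-one bound $w_j\le 1-w_{k^*}$, and stating the distinct-centroid and finitely-many-hyperplanes null-set argument are small refinements of steps the paper leaves implicit.
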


\begin{proof}
\textbf{(i) Pointwise convergence}: Let $\uu$ be a point not on any Voronoi boundary. Then there exists unique nearest centroid $\cc_{k^*}$ with $d_{k^*}:=\|\uu-\cc_{k^*}\|^2 < d_j:=\|\uu-\cc_j\|^2$ for all $j\neq k^*$.

Rewrite:
$$w_{k^*}(\uu;\alpha) = \frac{1}{1 + \sum_{j\neq k^*}\exp(-\alpha(d_j-d_{k^*}))}.$$

Since $d_j-d_{k^*} > 0$ for all $j\neq k^*$, as $\alpha\to\infty$, each term $\exp(-\alpha(d_j-d_{k^*}))\to 0$ exponentially. Therefore:
$$\lim_{\alpha\to\infty} w_{k^*}(\uu;\alpha) = 1 = g_{k^*}(\uu).$$

For $j\neq k^*$, we have $\lim_{\alpha\to\infty} w_j(\uu;\alpha) = 0 = g_j(\uu)$.

\textbf{(ii) Exponential rate}: Define margin $\gamma(\uu) = \min_{j\neq k^*}(d_j-d_{k^*}) > 0$. Then for all $j\neq k^*$:
$$\exp(-\alpha(d_j-d_{k^*})) \le \exp(-\alpha\gamma(\uu)).$$

Summing:
$$\sum_{j\neq k^*}\exp(-\alpha(d_j-d_{k^*})) \le (K-1)\exp(-\alpha\gamma(\uu)).$$

Therefore:
$$w_{k^*} \ge \frac{1}{1+(K-1)\exp(-\alpha\gamma)},$$
implying
$$1 - w_{k^*} \le (K-1)\exp(-\alpha\gamma(\uu)).$$

\textbf{(iii) Integral convergence}: We have $0 \le w_k \le 1$ for all $\uu,\alpha$. Pointwise limit gives $w_k\to g_k$ except on measure-zero boundaries. By dominated convergence:
$$\lim_{\alpha\to\infty}\int w_k(\uu;\alpha)p(\uu)\,d\uu = \int g_k(\uu)p(\uu)\,d\uu = \int_{V_k} p(\uu)\,d\uu.$$
\end{proof}

The margin $\gamma(\uu)$ quantifies how much closer $\uu$ is to its nearest centroid compared to second-nearest. Large margins yield faster convergence. Small margins (near decision boundaries) require larger $\alpha$ for hard-like behavior.

\begin{algorithm}[t]
\caption{Soft Gating Training with Annealing}
\label{alg:training_full}
\begin{algorithmic}[1]
\REQUIRE Dataset $\mathcal{D}=\{\x^{(n)}\}_{n=1}^N$, schedule $\{\alpha_t\}_{t=1}^T$, learning rate $\eta$
\ENSURE Trained parameters $\Theta$
\STATE \textbf{Initialize:}
\STATE \quad Centroids $\{\cc^{(i)}_{k_i}\}$ via $k$-means (per factor for HFV)
\STATE \quad Mixture weights $\{\pi_{\mathbf{k}}\}$ uniformly on simplex
\STATE \quad Leaf parameters $\theta_{\text{leaf}}$ randomly or from prior
\FOR{$t=1$ to $T$}
    \STATE Set inverse temperature $\alpha\leftarrow \alpha_t$
    \FOR{each minibatch $\mathcal{B}\subset\mathcal{D}$}
        \STATE $\mathcal{L}\leftarrow -\frac{1}{|\mathcal{B}|}\sum_{\x\in\mathcal{B}} \log p(\x;\Theta,\alpha)$
        \STATE $\Theta \leftarrow \Theta - \eta\,\nabla_{\Theta}\mathcal{L}$
        \STATE Enforce $\pi$ on simplex (projection or softmax)
    \ENDFOR
\ENDFOR
\STATE \textbf{return} $\Theta$
\end{algorithmic}
\end{algorithm}

\begin{figure*}[t]
    \centering
    \setlength{\tabcolsep}{2pt}
    \begin{tabular}{cccc}
        \includegraphics[width=0.24\linewidth]{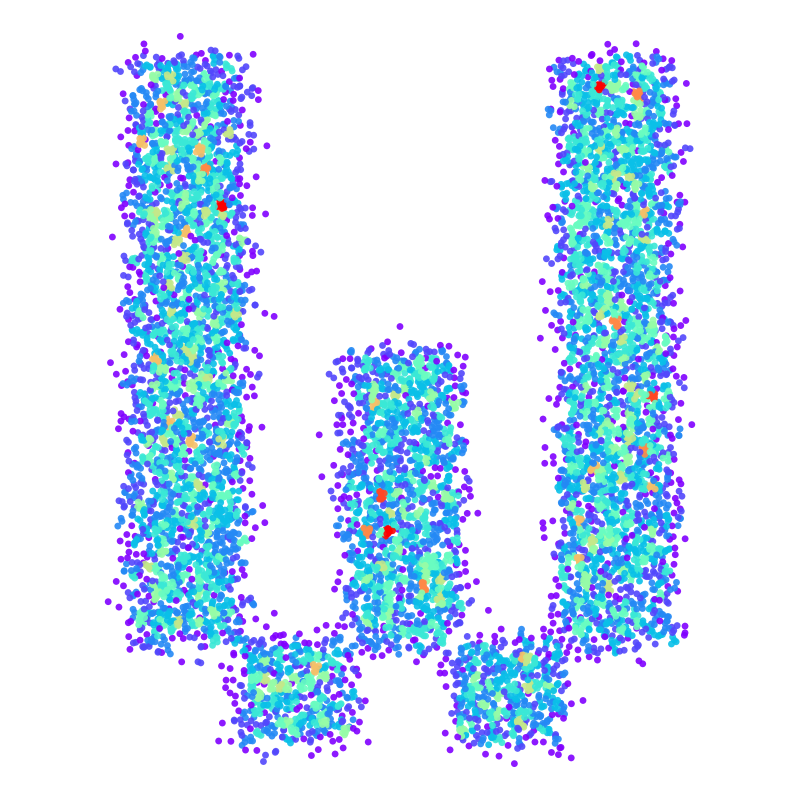} &
        \includegraphics[width=0.24\linewidth]{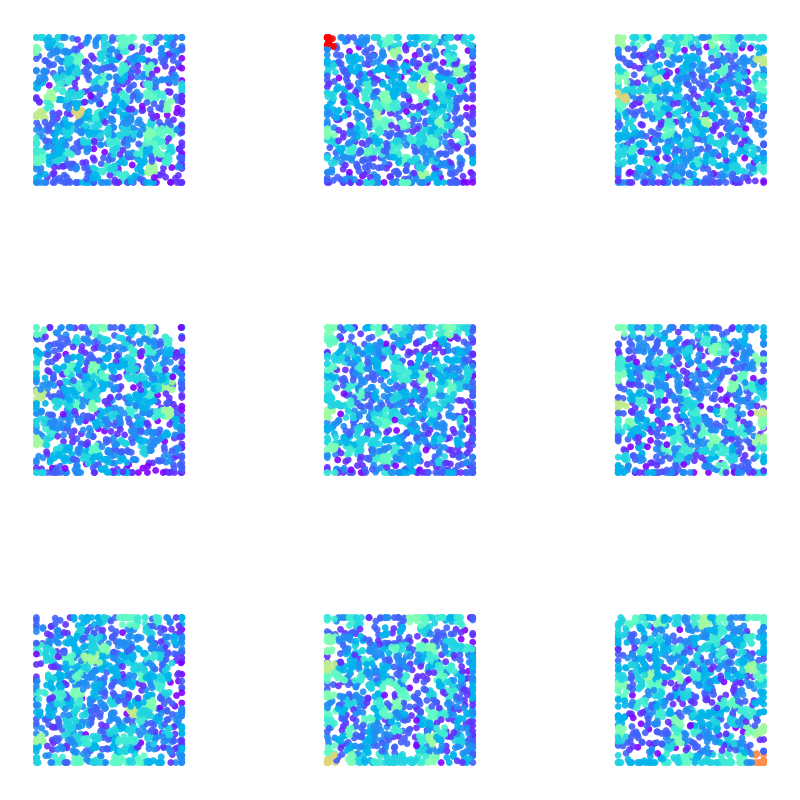} &
        \includegraphics[width=0.24\linewidth]{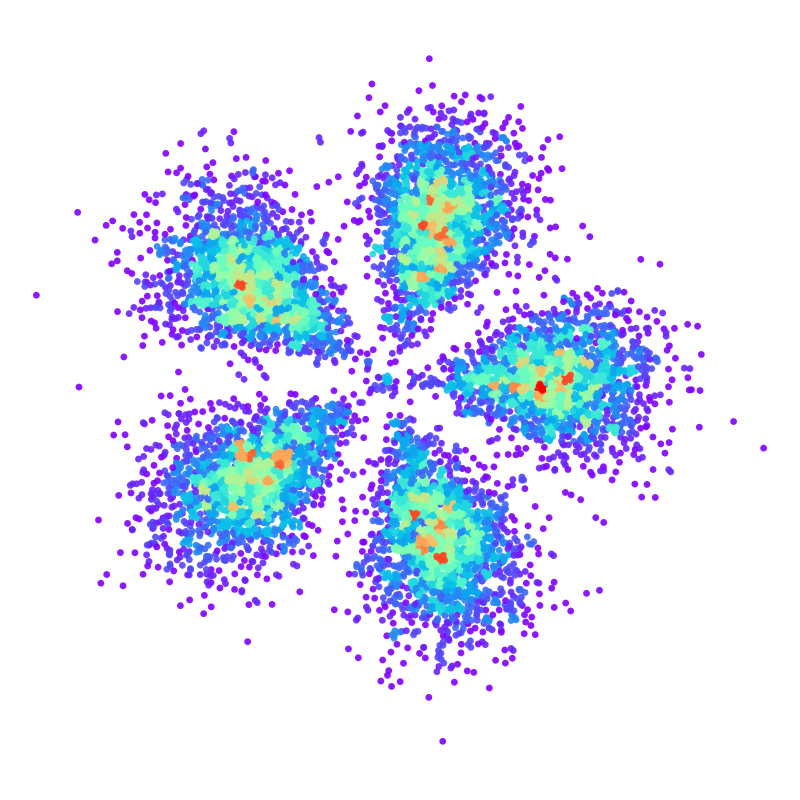} &
        \includegraphics[width=0.24\linewidth]{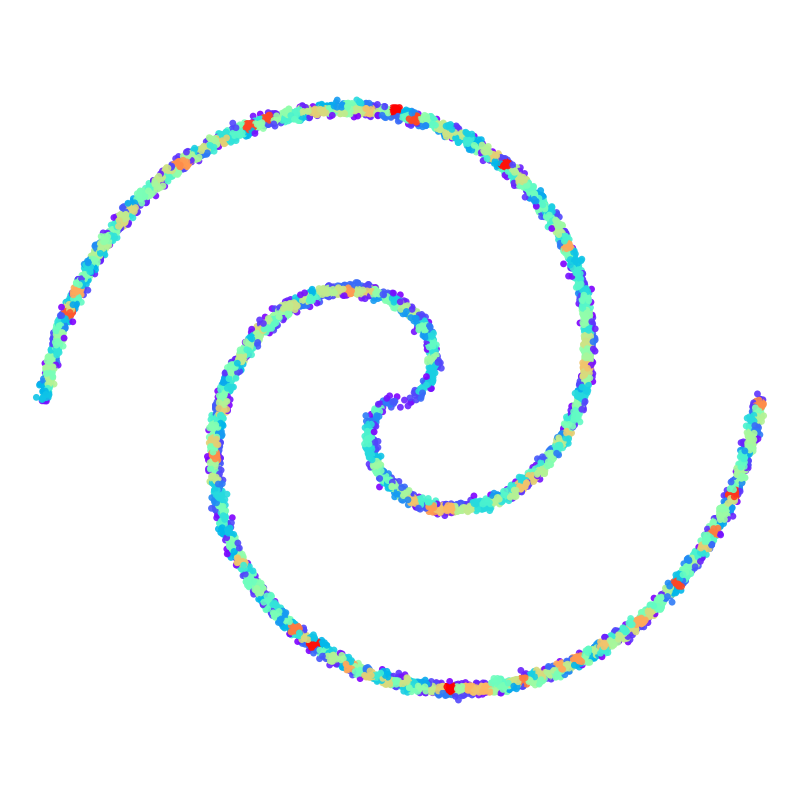} \\
        \small (a) Alphabets (2D) &
        \small (b) Checkerboard (2D) &
        \small (c) Pinwheel (2D) &
        \small (d) Spiral (2D) \\
        \includegraphics[width=0.24\linewidth]{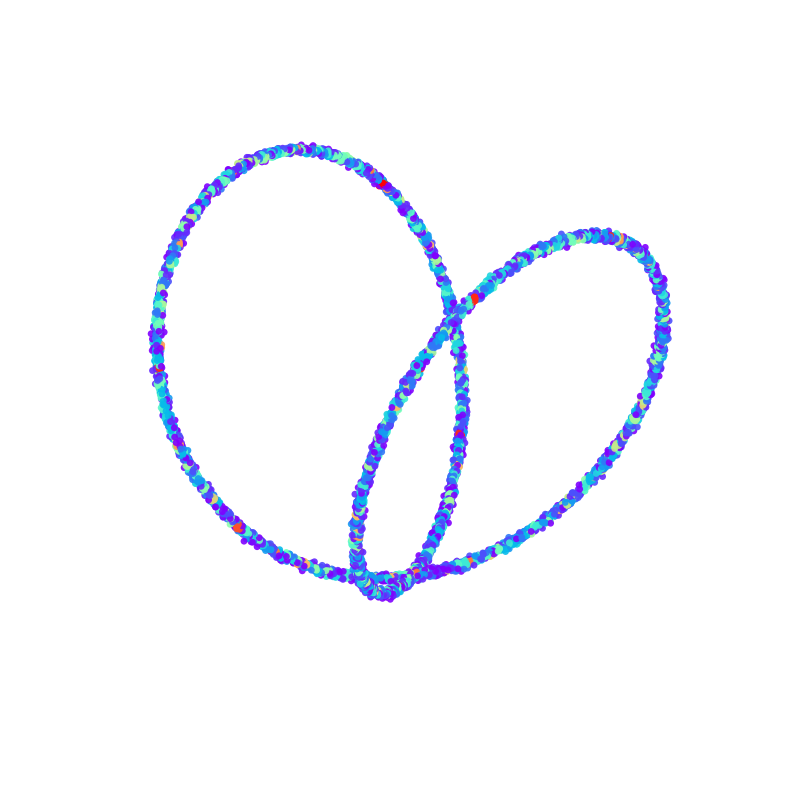} &
        \includegraphics[width=0.24\linewidth]{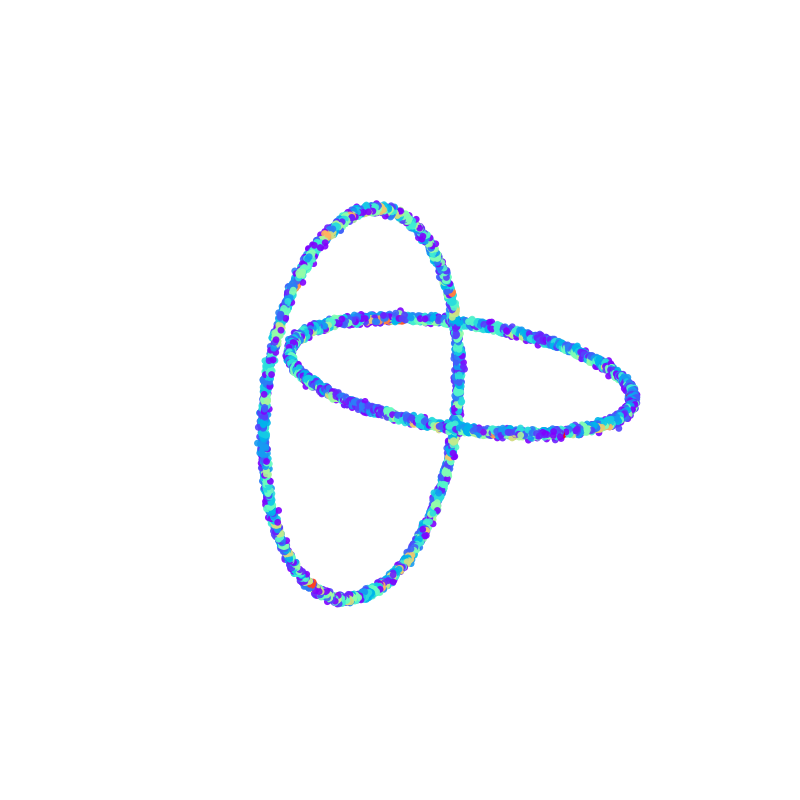} &
        \includegraphics[width=0.24\linewidth]{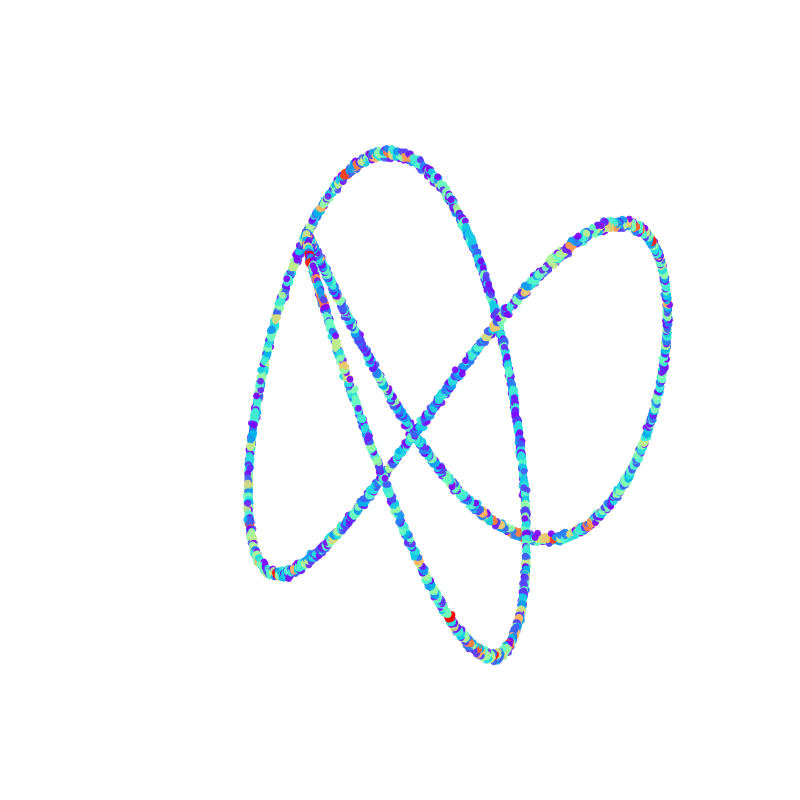} &
        \includegraphics[width=0.24\linewidth]{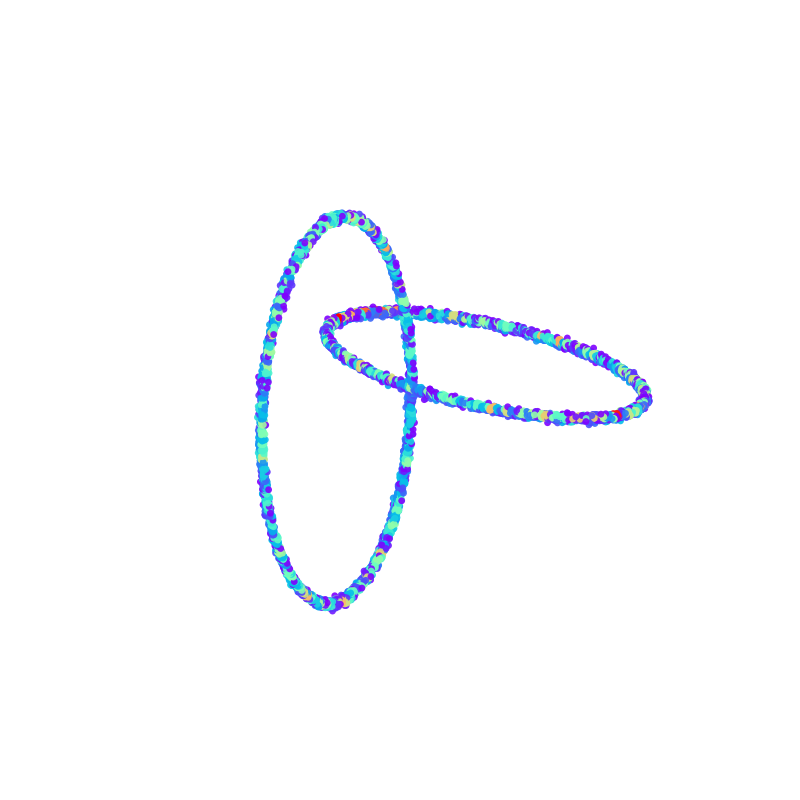} \\
        \small (e) Bent Lissajous (3D) &
        \small (f) Interlocked Circles (3D) &
        \small (g) Knotted (3D) &
        \small (h) Twisted Eight (3D)
    \end{tabular}
    \caption{\textbf{Synthetic 2D/3D density estimation benchmarks.} Top row: 2D datasets with diverse local geometry and disconnected support. Bottom row: 3D manifold-like datasets exhibiting crossings, interlocks, and knotting. These benchmarks stress-test whether geometry-aware routing can specialize locally while maintaining reliable inference (VT via certification; HFV via alignment).}
    \label{fig:app:synthetic_datasets}
\end{figure*}
\section{Synthetic Dataset Construction}

Figure~\ref{fig:app:synthetic_datasets} visualizes the synthetic 2D/3D benchmarks used in our experiments. These datasets were designed to emphasize geometric structure (e.g., curved manifolds, crossings, knots, and disconnected supports) that can be difficult to capture with input-independent mixture weights, making them well-suited for evaluating geometry-aware routing. All datasets were generated following a unified protocol: (1) generate raw samples from a geometric structure as defined below, (2) add Gaussian noise $\mathcal{N}(0, \sigma^2\mathbf{I})$ with $\sigma=0.01$, (3) standardize to zero mean and unit variance per dimension, (4) split into $10k/5k/5k$ train/val/test sets.

\subsection{Two-Dimensional Datasets}

\textbf{CheckerBoard.} Nine Gaussian clusters at grid positions $\{-1.5, 0, 1.5\}^2$. Each cluster samples uniformly from a square $[\mathbf{c}_k-0.6, \mathbf{c}_k+0.6]$ then adds Gaussian noise $\sigma=0.15$ and clips to bounds. 

\textbf{Pinwheel.} Five radial arms at angles $2\pi k/5$ for $k=0,\ldots,4$. Each sample: radius $r\sim\mathcal{N}(1.0, 0.3^2)$, angular offset $\delta\theta\sim\mathcal{N}(0, 0.2^2)$, then convert to Cartesian $(r\cos(\theta_k+\delta\theta), r\sin(\theta_k+\delta\theta))$. 

\textbf{Spiral.} Two interleaved Archimedean spirals. For each spiral: sample $\theta\sim\text{Uniform}(0,2\pi)$, set $\theta\leftarrow\sqrt{\theta}\cdot 2\pi$ (arc-length correction), radius $r=2\theta$, convert to $(\pm r\cos\theta, \pm r\sin\theta)$ (opposite signs for two spirals), add noise $\sigma=0.1$.

\textbf{Alphabet.} Samples uniformly from pixel-based $7\times 5$ binary grids representing uppercase letters arranged spatially. The results reported are for the letter $W$. Active pixels are converted to continuous coordinates (cell size $0.2$), positioned in a grid layout with letter gap $0.4$.

\subsection{Three-Dimensional Datasets}

All 3D datasets were generated by sampling a parameter $t\sim\text{Uniform}([-\pi,\pi])$, apply a parametric curve as defined below, scaling by $4$, adding noise $\sigma=0.01$, foloowing \cite{sidheekh2022vq,sidheekh2023probabilistic}

\textbf{BentLissajous.} Lissajous curve: $(x,y,z)=(\sin(2t), \cos(t), \cos(2t))$. 

\textbf{InterlockedCircles.} Two circles in orthogonal planes: Circle 1 in $xy$-plane $(\sin t, \cos t, 0)$, Circle 2 in $xz$-plane $(1+\sin t, 0, \cos t)$. 

\textbf{Knotted.} Trefoil knot: $(x,y,z)=(\sin t + 2\sin 2t, \cos t - 2\cos 2t, \sin 3t)$. 

\textbf{TwistedEight.} Two circles in orthogonal planes: $(\sin t, \cos t, 0)$ and $(2+\sin t, 0, \cos t)$. 

\section{Implementation Details} 
All models were implemented in PyTorch using the CirKit package \cite{The_APRIL_Lab_cirkit_2024} and trained with Adam (lr $0.01$, batch size $500$, $100$ epochs) on a single $24$GB NVIDIA L4 GPU. VT and HFV gating were implemented as drop-in replacements for sum layers, with learnable centroids initialized via $k$-means and optimized jointly with circuit parameters. We annealed the soft-gating inverse temperature linearly, $\alpha: 1 \rightarrow 50$, to transition from smooth routing to near-hard assignments. For VT models, evaluation and model selection used the certified normalization bounds produced by our box-based inference routine (reporting the log-likelihood lower bound), whereas HFV models preserved exact tractable inference throughout. For all models, the best performing epoch in terms of validation performance was saved and loaded for testing.

\end{document}